%% file: main.tex
\documentclass[3p,times,authoryear]{elsarticle}

\usepackage[T1]{fontenc}
\usepackage{amsmath,amssymb,amsfonts,amsthm,bm,mathtools}
\usepackage{graphicx}
\usepackage{caption}
\usepackage{subcaption}
\usepackage{booktabs}
\usepackage{multirow}
\usepackage{array}
\usepackage{enumitem}
\usepackage{float}
\usepackage{xcolor}
\usepackage{microtype}
\usepackage{url}
\usepackage[hidelinks]{hyperref}
\hypersetup{
  hidelinks,
  pdftitle={Equal-Information Evaluation of Credit Assignment in Modular Financial Reinforcement Learning},
  pdfauthor={Gennadiy Chistiakov; Yury Yanovich}
}

\newtheorem{proposition}{Proposition}

\newcommand{\E}{\mathbb{E}}
\newcommand{\Var}{\operatorname{Var}}
\newcommand{\R}{\mathbb{R}}

\newcounter{amatsalgorithm}

\setlist{nosep,leftmargin=1.4em}
\journal{arXiv preprint}

\begin{document}

\begin{frontmatter}

\title{Equal-information evaluation of credit assignment in modular financial reinforcement learning}

\author[gi]{Gennadiy Chistiakov}
\affiliation[gi]{
  organization={Independent Researcher},
  city={Bangkok},
  postcode={10100},
  country={Thailand}
}

\author[sk]{Yury Yanovich\corref{cor1}}
\ead{y.yanovich@skoltech.ru}
\cortext[cor1]{Corresponding author.}
\affiliation[sk]{
  organization={Skolkovo Institute of Science and Technology},
  city={Moscow},
  postcode={121205},
  country={Russia}
}

\begin{abstract}
Credit assignment in modular financial reinforcement learning is difficult to evaluate because an attributed signal may improve a component loss without adding decision-relevant information or improving economic outcomes. We propose an evaluation framework combining equal-information reconstruction controls, fixed-utility comparisons and prespecified downstream guardrails. The framework is instantiated in an attribution-guided modular trading architecture and tested in three controlled studies covering update allocation, supervision-target representation and portfolio risk control. We find that residual-based allocation is exactly reproduced by an ordinary loss-budget control under the disclosed interface; direct shortfall supervision improves component prediction but fails decoded-outcome preservation; and component-wise critic supervision worsens risk-condition certainty equivalent and drawdown relative to scalar supervision under fixed utility. These results identify three distinct evaluation failures: redundant attribution signals, representation-induced prediction trade-offs, and misalignment between component supervision and downstream risk outcomes. A preserved historical engineering case further shows why module-level learning claims require run-level provenance. The contribution is a reusable methodology for evaluating credit assignment in expert systems operating under partial observability, delayed feedback and economic risk constraints.
\end{abstract}

\begin{keyword}
credit assignment \sep reinforcement learning \sep modular learning \sep financial decision support \sep expert systems \sep evaluation methodology \sep partial observability \sep delayed feedback
\end{keyword}

\end{frontmatter}

\input{main_body}

\section*{Data and code availability}
Historical sources are retained locally with hashes and provenance metadata. The manuscript includes selected historical figures and transcribed point estimates, but raw OHLCV/funding files, proprietary deployment material and credentials are not redistributed. Detailed artifact disposition, environment records and preservation manifests are provided in Supplementary Material S8.

\section*{Declaration of competing interests}
The authors declare that they have no known competing financial interests or personal relationships that could have appeared to influence the work reported in this paper.

\section*{Funding}
This research did not receive any specific grant from funding agencies in the public, commercial, or not-for-profit sectors.

\section*{Ethical statement}
This study does not involve human subjects or personal data collection. It analyzes synthetic controlled experiments and preserved non-personal trading artifacts. The manuscript is research documentation and explicitly does not provide investment advice, a recommendation to trade, or an assurance of returns.

\section*{Declaration of generative AI and AI-assisted technologies}
During the preparation of this work, the authors used ChatGPT (OpenAI), Claude (Anthropic), and Qwen (Alibaba Cloud) large language model tools to improve language, structure, formatting and editorial clarity. After using these tools, the author(s) reviewed and edited the content as needed and take full responsibility for the content of the published article. LLMs are also reported in the manuscript as components of the AMATS case architecture and as the subject of the proposed prompt/response examples; those materials were designed, written and verified by the authors. No generative AI tool was used to generate experimental data, statistical analyses, mathematical proofs, core scientific claims or conclusions, nor to create or manipulate primary research images.

\bibliographystyle{elsarticle-harv}
\bibliography{references}

\clearpage

\appendix

\section{Supplementary Summary}
This supplementary material provides the detailed protocols, diagnostics, sensitivity analyses, historical replay records, reproduction records, preliminary configuration and prompt examples referenced in the main manuscript. It is intended to support review and reproducibility. The main manuscript contains the central claims and primary evidence.

\section{Overview}
This document contains the following supplementary sections:
\begin{itemize}
  \item S1: Accounting and gradient diagnostics.
  \item S2: Allocation study: detailed method and complete results.
  \item S3: Representation study: generator and supplementary evidence.
  \item S4: Portfolio control: complete contrasts and diagnostics.
  \item S5: Secondary historical Bar replay.
  \item S6: Original full-system hypotheses.
  \item S7: Historical source context and information diagnostics.
  \item S8: Reproduction and validation record.
  \item S9: Preliminary full-system configuration.
  \item S10: Proposed LLM prompt and response examples.
\end{itemize}

\renewcommand{\thesection}{S\arabic{section}}
\renewcommand{\thesubsection}{S\arabic{section}.\arabic{subsection}}
\renewcommand{\thesubsubsection}{S\arabic{section}.\arabic{subsection}.\arabic{subsubsection}}
\setcounter{section}{0}
\numberwithin{equation}{section}
\numberwithin{table}{section}
\numberwithin{figure}{section}

\section{Accounting and Gradient Diagnostics}\label{app:diagnostics}

\subsection{A sufficient local gradient condition}\label{app:gradient}

For shared parameters $\theta$, consider
\begin{equation}
\mathcal L_J=\mathcal L_D+\lambda_E\mathcal L_E+\lambda_R\mathcal L_R,\qquad \lambda_E,\lambda_R\geq0.
\end{equation}
Writing $g_j=\nabla_\theta\mathcal L_j$ and taking a step of size $\eta$ gives
\begin{align}
\theta'&=\theta-\eta(g_D+\lambda_Eg_E+\lambda_Rg_R),\\
\mathcal L_D(\theta')-\mathcal L_D(\theta)
&=-\eta\bigl(\|g_D\|^2+\lambda_Eg_D^\top g_E+\lambda_Rg_D^\top g_R\bigr)+O(\eta^2).
\label{eq:conflict}
\end{align}
Multi-task optimization studies competing gradients directly \citep{SenerKoltun2018,Yu2020PCGrad}. A negative pairwise inner product reduces first-order progress, but does \emph{not} by itself prove that a task deteriorates. A sufficient local condition for directional loss to increase is
\begin{equation}
\lambda_Eg_D^\top g_E+\lambda_Rg_D^\top g_R<-\|g_D\|^2,
\label{eq:increase}
\end{equation}
with a sufficiently small step and locally smooth losses. The condition applies to the shared differentiable representation in this diagnostic; exchange execution and language-model interfaces are evaluated through their separate contracts.

For the two fixed tasks
\begin{equation}
\mathcal L_D=(\theta-1)^2,\quad \mathcal L_E=(\theta+1)^2,\quad
\theta_J^*(\lambda_E)=\frac{1-\lambda_E}{1+\lambda_E},
\label{eq:toy}
\end{equation}
equal weighting gives $\theta_J^*=0$, although the individual minima occur at $1$ and $-1$. At $\theta=0.5$, an equal-weight step with $\eta=0.05$ moves to $0.4$: total loss falls from $2.50$ to $2.32$, while directional loss rises from $0.25$ to $0.36$. Changing weights moves the compromise even when the tasks remain fixed (Figure~\ref{fig:gradient}). Separate parameters remove this particular conflict, but also change the hypothesis class. This toy calculation is consequently a mechanism illustration, not a capacity-matched demonstration that AMATS outperforms joint learning.

\begin{figure}[tb]
\centering
\includegraphics[width=\linewidth]{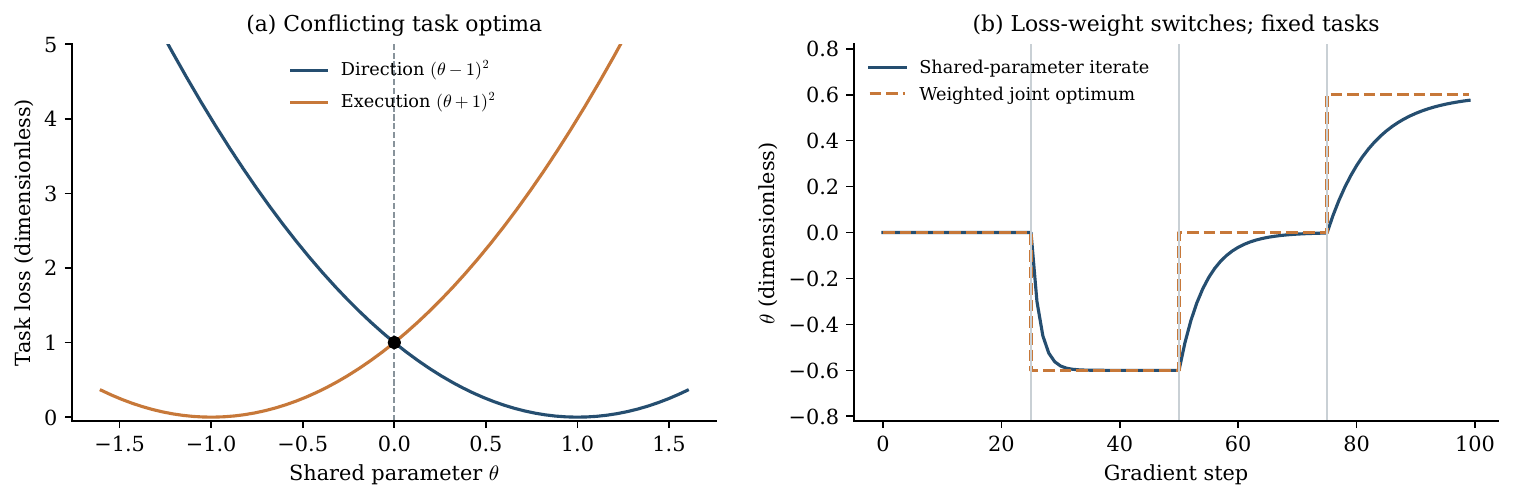}
\caption{Deterministic quadratic diagnostic, not market data. (a) The equal-weight optimum sacrifices both task-specific optima. (b) Execution-loss weights switch through $1,4,1,0.25$ in blocks of 25 steps, with $\eta=0.05$ and $\theta_0=0$. The dashed optima are $0,-0.6,0,0.6$. These are loss-weight switches, not detected market-regime transitions.}
\label{fig:gradient}
\end{figure}

\paragraph{Numerical check.}
The one-step calculation confirms that the shared objective can improve while one task worsens: joint loss changes by $-0.18$, and directional loss by $+0.11$. The moving optima in Figure~\ref{fig:gradient} match the analytic values in Equation~\eqref{eq:toy}. After the final 25-step block, the parameter is approximately $0.578621$, approaching the new optimum $0.6$. The finite adaptation lag is an optimization property of this toy problem, not a measured financial recovery time.

\subsection{Paired scalar accounting scenarios}

For the separate scalar diagnostic, intended notional $N>0$, signed opportunity $b$ bps, executed fraction $f$, round-trip cost $c$ bps per executed notional and signed funding debit $u$ satisfy
\begin{align}
p^{\mathrm{net}}&=fb-fc-fu,\label{eq:net}\\
\mathrm{IS}&=(1-f)b+fc,\label{eq:shortfall}\\
p^{\mathrm{net}}&=b-\mathrm{IS}-fu.\label{eq:identity}
\end{align}
Multiply by $N/10^4$ for quote currency. Unfilled opportunity is signed and can be negative; a training penalty for non-fill is not necessarily that realized opportunity.

\subsection{Experiment A: fixed tasks, changing weights}
We evaluate Equation~\eqref{eq:toy} with $\theta_0=0$ and step size $0.05$ for 100 steps. The execution-loss weight takes values $1,4,1,0.25$, each for 25 steps. Both task optima remain fixed. This isolates movement induced by objective weighting without asserting that a market regime has changed. In addition to the path, the script calculates the one-step example in Section~\ref{app:gradient}. No confidence interval is appropriate for these deterministic identities.

\subsection{Experiment B: controlled attribution shocks}
We generate $n=2048$ independent signed opportunities
\begin{equation}
b_i=12+30\epsilon_i\quad\text{bps},\qquad\epsilon_i\sim\mathcal N(0,1).
\label{eq:generator}
\end{equation}
The sign is already expressed relative to the intended trade direction; these are not raw asset returns or forecasts learned from data. Each episode is a separate normalized round-trip opportunity, not a minute of a compounded portfolio path. Baseline round-trip costs are 4 bps fees, 2 bps spread, 1 bp impact, and 0.5 bp delay, giving $c=7.5$ bps. The signed funding debit is $u=0.5$ bp. These deliberately simple imposed values permit exact reconciliation, not realistic market calibration.

Four scenarios reuse the same $\epsilon_i$:
\begin{enumerate}
\item \textit{Reference:} full execution, $f=1$.
\item \textit{Execution shock:} spread increases by 6 bps and delay by 2 bps; the directional opportunity is unchanged.
\item \textit{Forecast shock:} the opportunity decreases by 8 bps; costs are unchanged. This is an imposed deterioration, not the output of a trained forecasting model.
\item \textit{Partial fills:} $f=0.5$ with the baseline opportunity and per-executed-notional costs. No liquidity or execution model is inferred from this fixed fraction.
\end{enumerate}
The two shocks are equal in magnitude by construction, making the inverse problem from scalar P\&L deliberately non-identifiable. The experiment tests the disclosed distinction, not its prevalence in market data. The attribution record is computed from known simulator quantities; field reconstruction from noisy production logs is a different task.

Table~\ref{tab:synthetic} reports all four scenarios. Their population mean net outcomes are $4$, $-4$, $-4$ and $2$ bps. Sample means differ through Monte Carlo variation under the specified generator, not through forecast estimation error.

\begin{table}[tb]
\centering
\caption{Synthetic diagnostic results, 2,048 paired opportunities per scenario. Values are bps of intended notional, not portfolio returns. MC SE is the standard error of the simulated mean under the imposed independent generator, not uncertainty about market performance.}
\label{tab:synthetic}
\small
\begin{tabular}{lrrrrr}
\toprule
Scenario & Opportunity & Shortfall & Funding & Net & MC SE\\
\midrule
Reference & 12.123 & 7.500 & 0.500 & 4.123 & 0.653\\
Execution shock & 12.123 & 15.500 & 0.500 & -3.877 & 0.653\\
Forecast shock & 4.123 & 7.500 & 0.500 & -3.877 & 0.653\\
Partial fills & 12.123 & 9.811 & 0.250 & 2.061 & 0.326\\
\bottomrule
\end{tabular}
\end{table}

The execution shock reduces net outcome by exactly 8 bps for every episode while leaving the benchmark opportunity unchanged. The forecast shock produces the identical episode-level net series while leaving execution cost unchanged. Thus the net series alone cannot discriminate these two constructed explanations. The enriched record distinguishes them because it retains the benchmark and execution terms. This demonstrates identification within the constructed record; deployment diagnosis additionally requires observed benchmark and execution fields and validation of the classifier.

The largest absolute reconciliation residual over all 8,192 scenario-episode records is below $10^{-12}$ bps. Partial-fill net outcomes equal one half of the paired baseline outcomes. In the partial-fill scenario, 684 of 2,048 episodes have negative signed unfilled opportunity; treating every non-fill as a positive realized cost would violate the stated convention. These checks exercise the ledger identity and a relevant sign edge case, not exchange fill realism.

The original scalar experiments establish accounting and representation properties; the new learning matrix additionally tests real parameter updates. These evidence classes are not interchangeable. In particular, the loss-budget equality shows that suitably observed task labels already supply the information used by this public scheduler. A more informative attribution mechanism would need to establish additional observable information or better decisions, rather than receive credit merely for renaming the same residuals.

\section{Allocation Study: Detailed Method and Results}\label{app:allocation}

\subsection{Executed inputs, targets and availability}
Both configurations pass an ordered 11-vector to each learner:
\begin{equation}
\bm x_{t,k}=(\tilde f_{t,1},\ldots,\tilde f_{t,6},
r_{t,0},r_{t,1},r_{t,2},a_{t,k},v_{t,k}),\qquad k\in\{D,E\}.
\label{eq:inputs}
\end{equation}
The first six coordinates are prefix-standardized raw features. The next three are a regime one-hot vector; unused slots are zero, and the Learn $K=1$ sensitivity zeros all three. Direction $D$ always uses $(a,v)=(0,0)$. Execution $E$ uses the signed action and nonnegative size defined below; these two coordinates are not standardized.

In Learn, $f_{t,1},\ldots,f_{t,6}$ are dimensionless Gaussian AR(1) factors in generator order, not surrogate names for market indicators. Execution side is $a_{t,E}=+1$ if $f_{t,1}+0.25f_{t,2}\geq0$ and $-1$ otherwise, and size is $v_{t,E}=0.5+0.25\tanh|f_{t,3}|$, a dimensionless generator action scale, not a base quantity. The regime statistic is an EWMA of $|f_{t,5}|$, initialized to its first value. Direction targets are sums of the next $h$ generated bps increments, available at $t+h$. The execution target is the generator's state/action-dependent signed cost in bps, observed only under its submission mask and available at $t+d_t$; unsubmitted outcomes never enter replay or routing. The released generator specifies the coefficients and intervention paths detailed in this section; hidden fault labels are not learner inputs.

More precisely, the Learn targets are
\begin{align}
y_{t,D}&=\sum_{j=1}^{h}\left(\bm\beta_{t+j}^{\top}\bm f_{t+j}+\xi^D_{t+j}\right),\\
y_{t,E}&=1.5+0.6a_{t,E}f_{t,1}+0.5v_{t,E}+0.6|f_{t,5}|
+\delta_t^E\left(3+1.5\tanh f_{t,4}\right)+\xi^E_t.
\label{eq:synthetictargets}
\end{align}
The base direction vector is $(2,-1,0.5,0,0,0)$ and the changed vector is $(-1.5,1,-0.5,0.75,0,0)$. Independent Gaussian noises have standard deviations 3 and 0.25 bps for $D$ and $E$. The scheduled choice of $\bm\beta$ and execution-shift indicator $\delta_t^E$ belongs to the generator; the learner observes only the inputs and eligible delayed labels, not these intervention states.

\subsection{Preprocessing, networks and prequential learning}\label{sec:training}
Table~\ref{tab:parameters} collects executed main settings; the complete frozen sensitivity grid is in Section~\ref{sec:protocol}. Let $t_0$ be the end-exclusive normalization prefix: 512 in Learn and 528 in the complete Bar folds (16 lookback hours followed by 512 valid clocks). Feature means and population standard deviations use those prefix features only. Target means/scales use finite observed labels with maturity strictly before $t_0$, separately per task. The direction scale does not use later horizon outcomes; execution scale excludes unsubmitted or still-pending labels. Standardization is fixed thereafter. Prefix-calibrated regime boundaries are the $j/K$ quantiles of the same prefix statistic. Crossing must exceed the relevant boundary by the configured margin and persist for four clocks; candidate persistence may accumulate during the minimum hold. Prefix transformations are not available earlier in the prefix: those records initialize replay, not reported online predictions.

\begin{table}[tb]
\centering
\caption{Executed main-configuration parameters. These values specify the public small learners and simulator, not missing full-stack deployment choices.}
\label{tab:parameters}
\small
\setlength{\tabcolsep}{4pt}
\begin{tabular}{@{}p{0.35\linewidth}p{0.30\linewidth}p{0.27\linewidth}@{}}
\toprule
\raggedright Quantity & \raggedright Learn & \raggedright Bar\tabularnewline
\midrule
\raggedright Decision clock; direction horizon & \raggedright Abstract clock; $h=4$ & \raggedright One hour; $h=4$ hours\tabularnewline
\raggedright Normalization; feature/target scale floors & \raggedright 512 clocks; $10^{-8}$ / 0.1 bp & \raggedright 512 valid clocks; same floors\tabularnewline
\raggedright Separate / shared network & \raggedright $2(11\!-\!8\!-\!1)$ / $11\!-\!15\!-\!2$ & \raggedright Same; tanh hidden, linear output\tabularnewline
\raggedright Parameter counts & \raggedright $2\times105=210$ / 212 & \raggedright Same\tabularnewline
\raggedright Adam $(\eta,\beta_1,\beta_2,\epsilon)$ & \raggedright $(0.003,0.9,0.999,10^{-8})$ & \raggedright Same\tabularnewline
\raggedright Batch / replay / eligibility & \raggedright 16 / 256 per task / 16 labels & \raggedright Same\tabularnewline
\raggedright Task-gradient cap / norm cap & \raggedright 2 per clock / 5 & \raggedright Same\tabularnewline
\raggedright Regime $K$ / EWMA decay & \raggedright 3 / 0.95 & \raggedright Same\tabularnewline
\raggedright Minimum hold / candidate persistence & \raggedright 16 / 4 clocks & \raggedright 16 / 4 hours\tabularnewline
\raggedright Boundary margin & \raggedright 0.05 prefix-statistic SD & \raggedright Same\tabularnewline
\raggedright Residual EWMA initial / decay / cap & \raggedright 1 / 0.97 / 10 & \raggedright Same\tabularnewline
\raggedright Minimum task share / initial credit & \raggedright 0.25 / 0 & \raggedright Same\tabularnewline
\raggedright Execution observations / delay & \raggedright Bernoulli 0.8 / uniform $\{1,2,3\}$ & \raggedright Submitted fills / 2 bars + 1 s after open\tabularnewline
\raggedright Initial capital / target fraction / gate & \raggedright No account & \raggedright 100,000 quote / 0.25 / 5 bps\tabularnewline
\raggedright Slippage / taker fee & \raggedright Generator target, not venue costs & \raggedright 1 bp / 4 bps\tabularnewline
\raggedright Order tolerance / ledger tolerance & \raggedright Not applicable & \raggedright $10^{-12}$ base / $10^{-8}$ quote\tabularnewline
\raggedright Final scored-origin exclusion & \raggedright 8 clocks & \raggedright 8 hours; all marks retained\tabularnewline
\bottomrule
\end{tabular}
\end{table}

All networks use float64. Weight matrices have independent zero-mean Gaussian entries with standard deviation $1/\sqrt{\text{fan-in}}$; biases, Adam first/second moments and step counters start at zero. Each separate task has its own optimizer; the shared model has one. Squared error is computed on standardized targets. A global Euclidean gradient-norm cap is applied to each task gradient. Joint learning averages the clipped task gradients before its Adam step. PCGrad projects each conflicting clipped gradient against the other original clipped gradient, then averages; absent conflict it averages directly \citep{Yu2020PCGrad}. It does not randomly reorder these two projections.

For seed $s$, the learner uses the sixth child of \texttt{SeedSequence(s).spawn(6)}, with separate descendants for model initialization and each task's replay. Learn uses the first five root children for factors, direction noise, execution noise, submissions and delays. NumPy PCG64 draws are paired across modular variants. Replay is a per-task FIFO of the latest eligible records, not regime-filtered; each selected task draws a minibatch uniformly with replacement. Prefix labels initialize replay with no stored prediction and hence no residual-EWMA update. Late prefix labels are delivered at their actual maturity, also without retrospective predictions.

At each post-prefix clock, the runners deliver newly available labels with their origin-time features and stored predictions, update residual state and replay, select task slots, train, then predict and store the new origin's outputs. Bar first processes its timestamped funding/fill events, then mature direction labels and any remaining calibration labels before that close-time update. Thus \emph{stored pre-update prediction} means before learning from its own target, not before every update at its origin clock. Diagnostic future-error arrays in the simulator are scoring outputs only, never training inputs. Training, diagnostic validation and test continue the same prequential process; test predictions can use earlier mature test labels but not their own future outcomes.

\subsection{Residual-based update allocation}\label{sec:scheduler}
Let $\hat y_{i,k}^{\,\mathrm{stored}}$ and $\tilde y_{i,k}$ be a stored standardized prediction and its standardized label. On delivery, the origin regime $z_i$ receives
\begin{equation}
a_{k,z_i}\leftarrow \lambda_a a_{k,z_i}
+(1-\lambda_a)\min\!\left(|\hat y_{i,k}^{\,\mathrm{stored}}-\tilde y_{i,k}|,c_a\right).
\label{eq:residual}
\end{equation}
Unavailable labels and missing/nonfinite stored predictions do not update this EWMA. At a training clock, the current regime selects the direction share
\begin{equation}
p_D(z)=\operatorname{clip}\!\left(\frac{a_{D,z}}{a_{D,z}+a_{E,z}},
\epsilon_q,1-\epsilon_q\right).
\label{eq:budget}
\end{equation}
An unseen task/regime uses the initial EWMA in Table~\ref{tab:parameters}; a zero denominator maps to $1/2$. The allocator has one persistent fractional credit $c$, not a separate credit per regime. For each of the two requested slots, set $c\leftarrow c+p_D$; if $c\geq1$, request $D$ and subtract one, otherwise request $E$. A request whose replay has fewer than 16 labels is unused: credit still advances, and the slot is neither reassigned nor carried as extra future work. Minimum shares constrain requested long-run allocation, not guaranteed eligible updates at every clock.

Uniform requests $D,E$ in that order. Attributed and loss-budget methods share the above allocator; the latter takes residuals directly from ordinary task labels. Shared controls request the two task gradients and take one combined Adam step if any is eligible; separate models take an Adam step for each eligible slot, including consecutive slots for the same task. The budget is therefore at most two task-gradient evaluations per clock, not identical optimizer steps, FLOPs or wall time. Parameter counts are approximately, not exactly, matched. The saved gradient counts, update counts and measured runtimes are distinct audit quantities.

\subsection{Controlled learning paths and legitimate labels}
Six fixed seeds, $11,23,37,53,71,89$, generate paired paths for stationary, direction-shift, execution-shift and combined-shift conditions. Six observed Gaussian factors follow an AR(1) process with coefficient 0.8. One-step bps increments are a linear function of current factors plus Gaussian noise of standard deviation 3; four-step sums form the forecast target and mature four clocks later. Execution-cost observations contain state/action-dependent costs and noise of standard deviation 0.25 bp; only the independently submitted 80\% of actions yield labels, after a delay of one, two or three clocks. Changed coefficients or costs alternate in 512-clock test blocks. The changed coefficient, fault type, future increments and unsubmitted outcomes are held by the generator, not given to learners. The complete coefficient vectors and cost formula are released in the contract.

Each path has 4,096 clocks. The first 512 define normalization and initial eligible replay; subsequent training ends at 1,536, diagnostic validation at 2,048, and scored test decisions at 4,088. The final eight clocks allow label maturation. Test evaluation is \emph{prequential}: updates continue from already available past test observations, under a fixed algorithm; it is not a frozen-weight holdout. Observable regimes use a causal EWMA of an observed factor magnitude, training-prefix quantile boundaries, four-clock candidate persistence and a 16-clock minimum hold. Three input slots encode regimes, and two encode the submitted action; unobserved regimes never substitute for detected ones.

The primary endpoint is the paired mean of the two training-standardized squared errors on common observable test rows, not P\&L. Error components and bps MAE are reported separately. The primary contrast is attributed-budget minus ordinary-uniform loss, so a negative value favors allocation by residuals. A paired bootstrap resamples seeds and common circular 64-clock blocks within each seed, retaining the missing-label mask, for 20,000 replicates with seed 90217. We report descriptive 95\% intervals and 98.75\% intervals for four condition-specific primary contrasts. Synthetic seeds jointly vary paths, noise and initialization; their variance contributions are not separately identified.

The prespecified noninferiority margin is 0.05 standardized squared-error units per task. A controlled H3 analogue requires improvement in the shifted task (upper paired bound below zero) and noninferiority in the other (upper bound below 0.05); both must improve in the combined-shift condition. These eight component contrasts use a separate 99.375\% interval family. An average loss cannot conceal a failed component-level criterion. This is not a market IC/shortfall test. Descriptive recovery after controlled interventions uses a 64-clock reference, 32-clock smoothing, a threshold of 1.1 times reference loss and censoring at 256 clocks. These analyst-known intervention times are never supplied to learners.

\subsection{Bootstrap precision and pairing audit}\label{sec:precision}
The reported intervals now use a declared 20,000-resample numerical-precision check of the saved outputs, preserving the original 1,000-resample analyses, seeds, sample, estimands, block lengths, confidence families and 0.05 noninferiority margin. This post-analysis increase is not a new prospective protocol freeze or a hyperparameter search. Both analyzers use NumPy's default linear quantile interpolation: for probability $p$, interpolate adjacent sorted draws at zero-based position $(B-1)p$. At $B=1{,}000$, the two-sided 98.75\% and 99.375\% lower tails correspond to only about 6.25 and 3.125 expected draws; increasing $B$ addresses Monte Carlo endpoint precision, not sampling uncertainty.

For Learn, each replicate draws six seed indices with replacement, then independently draws circular 64-clock blocks for each selected seed occurrence, concatenated and truncated to the 2,040-clock score window. All methods use the same seed/clock draws. Resampled observation/maturity/finite masks exclude unavailable rows rather than replacing them by zero; valid rows are averaged within each selected seed before equally averaging seeds. For Bar, BTC/ETH returns are first equally averaged on matching seed/date axes; one shared draw of three seed indices is used across the two years. Seven-day circular blocks are drawn independently for each year/selected-seed occurrence, truncated to 61 days and shared across methods; yearly means are equally pooled. Assets and years are not resampled as additional independent observations.

The audit preserves all point estimates and the adverse primary/component conclusions. One exploratory sensitivity classification changes (Section~\ref{sec:protocol}). More bootstrap draws do not create new market periods or independent training seeds, and reported decimal places do not imply that the interval endpoints are exact. The original and precision outputs, invocation overrides and endpoint comparisons are linked in \path{METHOD_REPRODUCIBILITY_AUDIT.md}.

\subsection{Complete allocation results}

All 120 main runs and 144 sensitivity runs completed without failed or missing cells. Every saved learner has finite weights different from its initialization, and independently recomputed weight hashes agree for all 264 runs. The main matrix required 83.68 seconds and the sensitivity matrix 100.56 seconds in one CPU process with one BLAS thread. These times exclude analysis, implementation and manuscript preparation; they are not full-stack training-cost estimates.

The primary contrast is mixed in sign and fails the superiority criterion against ordinary uniform supervision (Table~\ref{tab:learningcontrast}). The stationary condition instead shows a loss increase whose multiplicity-adjusted interval excludes zero. Lower point estimates under execution and combined shifts have intervals crossing zero. Descriptive paired comparisons with joint and PCGrad likewise yield inconclusive superiority contrasts; their complete intervals and task-level errors are retained in the result CSVs.

\begin{table}[tb]
\centering
\caption{Attributed minus uniform combined loss. Negative favors attributed allocation. Paired seed/time-block intervals use 20,000 resamples. The 98.75\% column adjusts the four primary condition-specific contrasts; 95\% intervals are descriptive.}
\label{tab:learningcontrast}
\small
\begin{tabular}{lrrr}
\toprule
Condition & Difference & 95\% interval & 98.75\% interval\\
\midrule
Stationary & $+0.0133$ & $[+0.0065,+0.0204]$ & $[+0.0048,+0.0225]$\\
Direction shift & $+0.0066$ & $[-0.0056,+0.0187]$ & $[-0.0091,+0.0216]$\\
Execution shift & $-0.0809$ & $[-0.1654,+0.0048]$ & $[-0.1901,+0.0280]$\\
Combined shift & $-0.0231$ & $[-0.1128,+0.0795]$ & $[-0.1341,+0.1099]$\\
\bottomrule
\end{tabular}
\end{table}

The loss-budget control is exactly equal to the attributed variant in all 24 independently executed main pairs: predictions, requested shares, update counts and final weights have maximum absolute difference zero. This confirms the equivalence predicted in the main manuscript, not an attribution-specific learning gain. Routing nevertheless genuinely changes computation relative to uniform learning: uniform spends 3,584 gradients on each task per run, whereas attributed allocation averages 4,239.7 direction/2,928.3 execution gradients in the stationary condition and 2,950.7/4,217.3 under combined shifts. The total remains 7,168. Shared-network negative-gradient fractions are approximately 51--52\%; their presence alone does not predict a PCGrad performance gain or prove task deterioration.

\begin{figure}[tb]
\centering
\includegraphics[width=\linewidth]{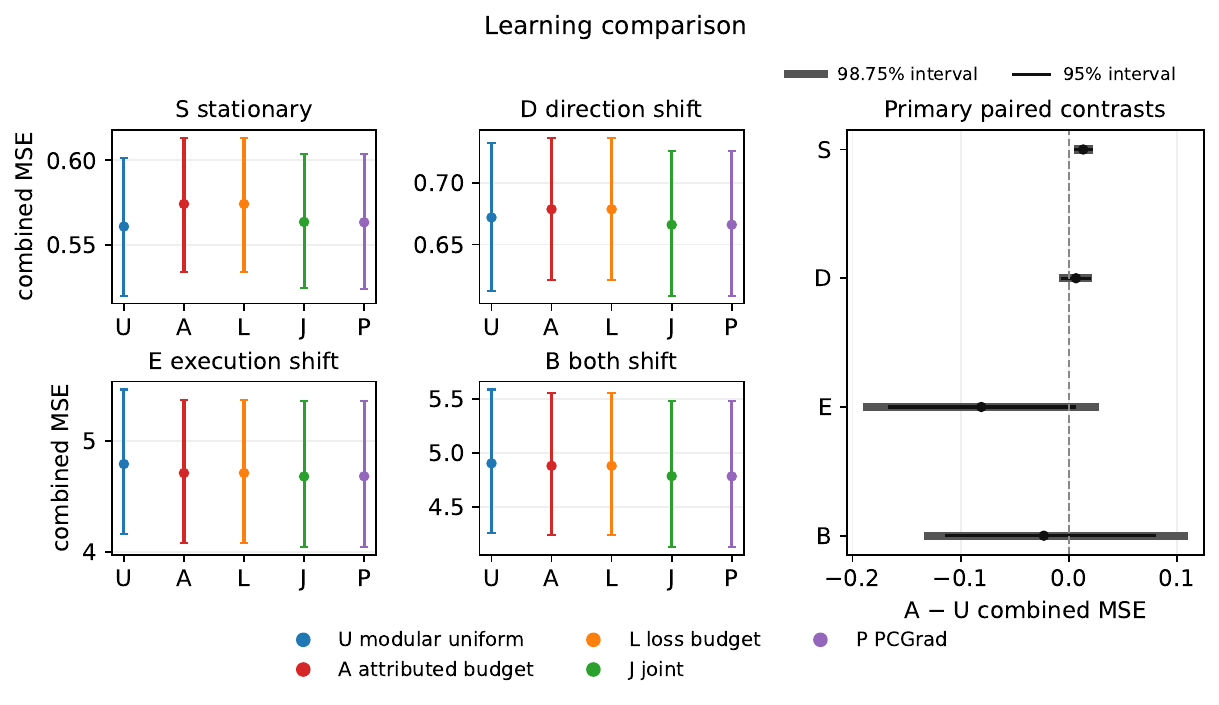}
\caption{Actually trained small learners on simulated tasks. Five-method losses have descriptive 95\% intervals; the paired allocation contrast has nested 95\% (thin black) and 98.75\% (thick gray) intervals. The attributed and ordinary loss-budget results coincide exactly. S/D/E/B identify stationary, direction-shift, execution-shift and combined-shift conditions. This tests the disclosed update scheduler, not full AMATS or market alpha.}
\label{fig:learning}
\end{figure}

\subsection{Component noninferiority and controlled transitions}
None of the three shifted conditions meets the prespecified controlled H3 criterion. Under a direction shift, the direction-loss upper 99.375\% bound is $+0.0272$, so intended-task improvement is not established. Under an execution shift, the corresponding execution bound is $+0.0975$. Under combined shifts, direction loss increases by $0.0516$ with interval $[0.0185,0.0915]$, while the execution improvement remains uncertain. The stationary condition meets the declared 0.05 noninferiority margin on both tasks, but that tolerance result must not be called improvement: the primary combined loss is worse.

Controlled transition diagnostics also limit an adaptation claim. Of 24 method/seed-boundary observations per condition, attributed versus uniform recovery is censored at 256 clocks in 1 versus 0 direction-shift events, 22 versus 24 execution-shift events, and 23 versus 23 combined-shift events. The 229 censored records across all methods are retained as non-recoveries, not relabeled as recovery at the deadline. These are responses to known simulator interventions, not evidence for the market-regime H2 hypothesis.

\subsection{Allocation sensitivities}\label{sec:protocol}

The executed learning grid changes one active factor at a time in the combined-shift condition, for the ordinary and attributed-budget learners and all six seeds. It covers regime count $K\in\{1,2,3\}$, forecast horizon $h\in\{1,4,8\}$, minimum regime hold $\{0,16,64\}$ clocks, boundary margin $\{0,0.05,0.15\}$ prefix standard deviations, residual decay $\{0.9,0.97,0.995\}$ and minimum task share $\{0.1,0.25,0.5\}$. The six baseline points reuse the same twelve main-study runs; they are not six independent replications. The other twelve settings require 144 new runs, all completed without selecting a favorable subset. Intervals in this grid are exploratory 95\%, not additional confirmatory hypothesis tests.

At $K=1$, categorical regime inputs are zero and routing uses a global residual state; legitimate raw features remain. A zero minimum hold retains four-clock candidate persistence and is therefore not complete hysteresis removal. A 0.5 minimum share forces uniform task allocation and supplies a matched routing-removal check. Changing the horizon changes the target definition and its training normalization; normalized-loss contrasts within a horizon are interpretable, whereas absolute cross-horizon loss differences are not direct bps savings.

\begin{figure}[tb]
\centering
\includegraphics[width=\linewidth]{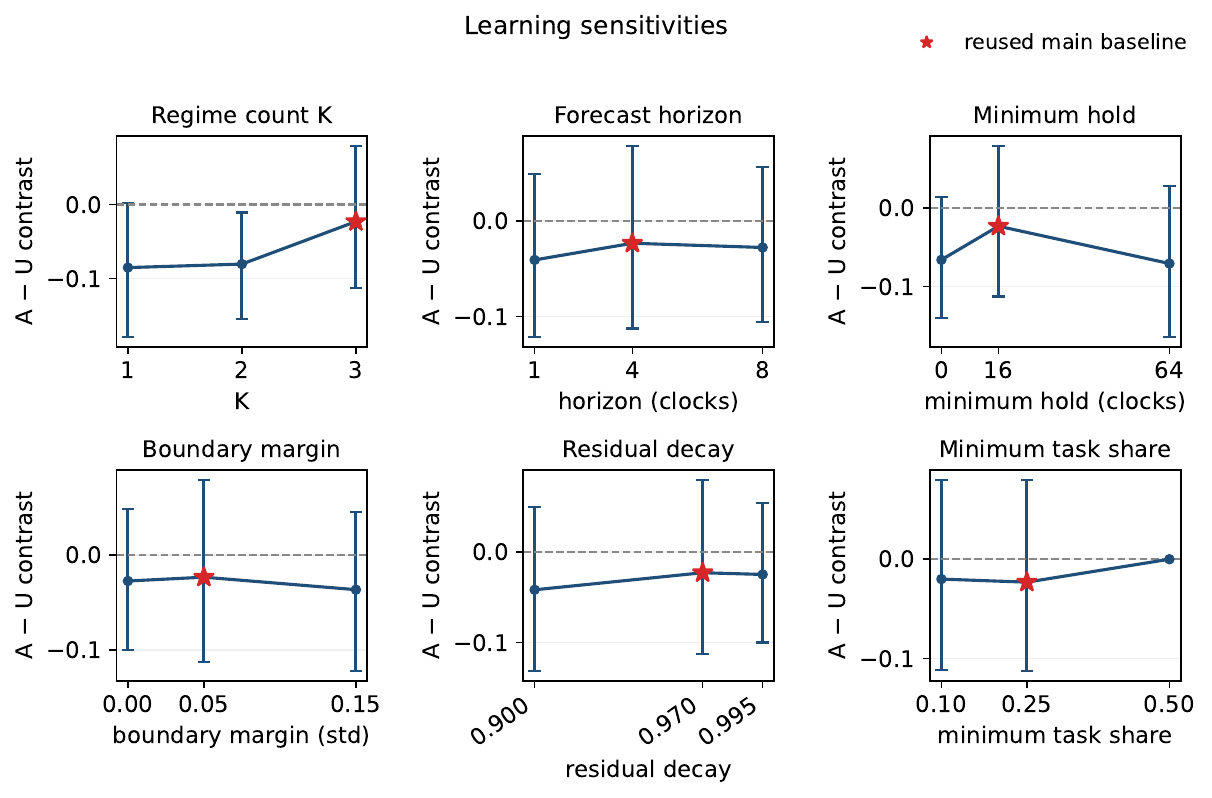}
\caption{Complete one-factor sensitivity grid for combined shifts: attributed minus uniform standardized loss with exploratory 95\% intervals. Each panel reuses the same main-study baseline point. At minimum share 0.5 the policies coincide. The grid does not select a deployable setting or establish component contributions in full AMATS.}
\label{fig:sensitivity}
\end{figure}

At $K=1$ and $K=2$, the contrasts are $-0.0854$ ($[-0.1794,0.0019]$) and $-0.0806$ ($[-0.1553,-0.0109]$), respectively. With 20,000 resamples, only the $K=2$ exploratory interval excludes zero: the earlier 1,000-resample $K=1$ interval narrowly excluded it, a conclusion not stable to numerical precision. Neither result retroactively changes the frozen primary $K=3$ configuration. The smaller regime partition does not demonstrate that more regimes help. All other nondegenerate grid contrasts have intervals crossing zero. For minimum share 0.5, the observed difference and both interval endpoints are exactly zero, verifying the uniform-allocation replacement. No favorable cell is substituted for the main result.

Only active components can support ablation claims. The small-network study evaluates allocation, categorical regime conditioning and the disclosed adaptation settings. The historical extension additionally tests common fee/slippage multipliers. It does not execute a six-head subset search, a named-forecaster stacking ladder, an LLM removal, a SAC reward/attribution-bonus grid, or a TCN removal. Those full-stack comparisons remain unexecuted; the absence of their components cannot count as a successful removal experiment.

\section{Representation Study: Generator and Supplementary Evidence}\label{app:representation}

\paragraph{Controlled observable event process.}
Six observable covariates follow a stationary AR(1) process with coefficient $0.8$ and standard-normal innovations scaled by $\sqrt{1-0.8^2}$. Starting at $m_0=100$, log-price increments in bps are $\beta_t^\top X_t+3\epsilon_t$. The base vector is $(2,-1,0.5,0,0,0)$; directional-shift blocks use $(-1.5,1,-0.5,0.75,0,0)$. Changed and base blocks alternate every 512 clocks from clock 2048. The four conditions are stationary, directional shift, execution shift and both shifts, with common random numbers. Intervention identifiers, coefficients and noise realizations are analyst-only, not learner inputs.

At origin $i$, $s=\operatorname{sign}(X_{i0}+0.25X_{i1})$ (positive at equality), size $a=0.5+0.25\tanh|X_{i2}|$, and $N=10{,}000a$. Submission occurs independently with probability $0.8$. Each submitted episode starts flat with quote cash $100{,}000$, order $\Delta q=sN/m_i$, and two possible half-quantity fills at $i+1,i+2$. Their fill logit is $1.5-0.3|X_{i2}|-0.2|X_{i3}|+0.1sX_{i0}$; execution shifts add $-1.25$. Slice slippage is
\begin{equation}
k_{ik}=1.5+0.6sX_{i0}+0.5a+0.6|X_{i4}|+0.25\eta_{ik}
       +I_i^E\{3+1.5\tanh X_{i3}\}\quad\text{bps},
\end{equation}
with independent standard-normal $\eta_{ik}$ and hidden execution-state indicator $I_i^E$. Fill price is $m_i(1+sk_{ik}/10^4)$ and fees are 4 bps of filled notional. Negative slippage is legitimate price improvement. Unfilled remainder is cancelled at $i+2$; fill/cancel reports share an origin lag drawn uniformly from $\{0,2,4\}$.

This quote-relative execution proxy is anchored to the frozen \emph{decision} benchmark, not the future prevailing mid. It isolates costs/fills but does not simulate market-consistent taker execution, impact, queue position or an exchange order book. A pure directional intervention preserves $c$ within tolerance, but need not preserve $u$ or $is$.

\paragraph{Maturity, preprocessing and actual updates.}
Direction matures at $i+h$; the complete execution record matures at $\max(i+h,i+2+\mathrm{lag}_i)$, including cancellations and zero fills. Every variant receives the same underlying observations at these clocks. Unsubmitted origins have no execution label, not a zero outcome. A horizon straddling a directional boundary can already contain changed increments; this does not permit training before maturity and limits pre-boundary diagnostic interpretation.

Input means/population standard deviations use clocks $[0,512)$, with standard-deviation floor $10^{-8}$. Separate $r,y,is$ transforms use only labels mature strictly before 512, floor $0.1$ bps and at least 16 observations. Each task receives 11 inputs: six standardized covariates, three causal regime slots, and side/size (the last two are zero for direction). The unchanged v2 regime filter uses a $0.95$ EWMA of $|X_4|$, prefix quantiles, $K=3$, hold 16, persistence 4 and boundary margin $0.05$ prefix standard deviations. No future fill or hidden intervention enters these features.

The unchanged learner uses two $11\!\to\!8\!\to\!1$ tanh networks (210 parameters) or a shared $11\!\to\!15\!\to\!2$ network (212). Adam has learning rate $0.003$, $(\beta_1,\beta_2)=(0.9,0.999)$, $\epsilon=10^{-8}$ and gradient clipping at 5. FIFO capacity is 256 per task, batch size 16 and minimum buffer 16. Up to 256 mature prefix records are preloaded without gradients or retrospective residual updates. At each clock 512--4095, mature records arrive in origin/task order, residuals and replay are updated, two task-gradient slots are trained, and the current prediction is stored. U assigns one slot per task; R uses the unchanged fractional-credit allocator, native standardized absolute residuals capped at 10, EWMA decay $0.97$, initial residual 1 and minimum share $0.25$. Shared controls combine two gradients into one Adam step. Scoring remains prequential, with ongoing adaptation, not a frozen-model holdout.

\subsection{Frozen generator streams and recovery definition}

V3 was frozen locally at 2026-09-09 13:00:05 UTC, after v2 but before any v3 comparative outcome. The protocol SHA-256 is
\begin{center}\footnotesize\texttt{07822660a9ee5698094d815b5582ae6d357}\par
\texttt{11298387ce30c095d477b1c13cb85}\end{center}
The two displayed lines concatenate to one digest. The complete specification and coefficients are in \path{research/study_v3/protocol.json} and \path{CONTRACT.md} in that directory. The fixed seeds are 101, 103, 107, 109, 113, 127, 131, 137, 139, 149, 151, 157, 163, 167, 173, 179, 181, 191, 193 and 197. Six independent generator streams separate covariates, direction noise, submissions, fills, slippage and report lags; full arrays are drawn before condition transformations. Learner seed is path seed plus 100,000, with separate initialization and task-replay streams. These are 20 independent path/initialization pairs, not a crossed estimate of each randomness source.

The 800 main cells comprise 20 seeds, four conditions and ten variants. Six fixed one-factor sensitivity settings each add 80 both-shift factorial cells: $K=1$, complete hysteresis removal (hold 0, persistence 1, margin 0), horizons 2 and 8, and residual decays $0.9$ and $0.995$. The 80 corresponding main factorial cells are reused as the baseline, not retrained or counted as new replication. The matrix is enumerated before execution; failed/pending cells cannot be silently dropped or replaced.

At boundaries 2048, 2560, 3072 and 3584, adaptation diagnostics retain pre/post 64-clock means. Recovery is the endpoint of the first fully post-boundary 32-clock window whose valid-observation mean is at most $1.1$ times the pre-mean, searched through 256 clocks. Non-recovery remains censored, not recovery at 256. Signed and positive-part integrated excess losses over the next 256 clocks retain observation counts. Summaries average boundaries within seed before averaging seeds; these diagnostics introduce no additional confirmatory tests.

\subsection{All variants and diagnostic outcomes}

\begin{table}[tb]
\centering
\caption{V3 equal-seed mean common loss $L$ (20 seeds per cell). J/P are descriptive shared-network controls. RAW-IS rows are independently trained equality controls, not additional independent evidence.}
\label{tab:v3means}
\small
\begin{tabular}{lrrrr}
\toprule
Variant & Stationary & Direction shift & Execution shift & Both shifts\\
\midrule
RAW-U & 1.2731 & 1.8002 & 1.6989 & 2.2879\\
RAW-R & 1.2654 & 1.7850 & 1.7014 & 2.2837\\
ATTR-U & 0.9481 & 1.2198 & 1.4291 & 1.8750\\
ATTR-R & 0.9495 & 1.2188 & 1.4346 & 1.8883\\
RAW-J & 1.2627 & 1.7487 & 1.6894 & 2.2432\\
ATTR-J & 0.9533 & 1.2125 & 1.4331 & 1.8821\\
RAW-P & 1.2621 & 1.7475 & 1.6893 & 2.2428\\
ATTR-P & 0.9534 & 1.2124 & 1.4332 & 1.8816\\
RAW-IS-U & 0.9481 & 1.2198 & 1.4291 & 1.8750\\
RAW-IS-R & 0.9495 & 1.2188 & 1.4346 & 1.8883\\
\bottomrule
\end{tabular}
\end{table}

\paragraph{Localization, cross-loading and adaptation.}
Both residual-dominance tests pass: direction $0.1440$ (97.5\% interval $[0.0690,0.2280]$), execution $0.6199$ ($[0.5429,0.6995]$). These are changes in clipped standardized absolute error, not probabilities of correct causal diagnosis. In the same first-onset window, the pure direction intervention changes mean benchmark opportunity by $-4.7942$ bps and signed non-fill opportunity by $-1.1216$ bps, while $c$ is unchanged within $1.3\times10^{-12}$ bps per origin; net $y$ changes by $-3.6727$ bps. Thus IS responds to direction through unfilled opportunity.

For the execution intervention, benchmark change is zero, $\Delta c=-0.3533$ bps, $\Delta u=+0.7412$ bps and $\Delta y=-0.3879$ bps. Fewer fills can reduce cost per approved notional even when per-slice slippage increases. These windows reconcile per origin within $9\times10^{-15}$ bps. Descriptively, ATTR-R's mean desired direction share rises from the paired stationary 0.5312 to 0.5501 under direction shifts and falls to 0.4753 under execution shifts; that response does not prove routing improves prediction.

\begin{figure}[tb]
\centering
\includegraphics[width=\linewidth]{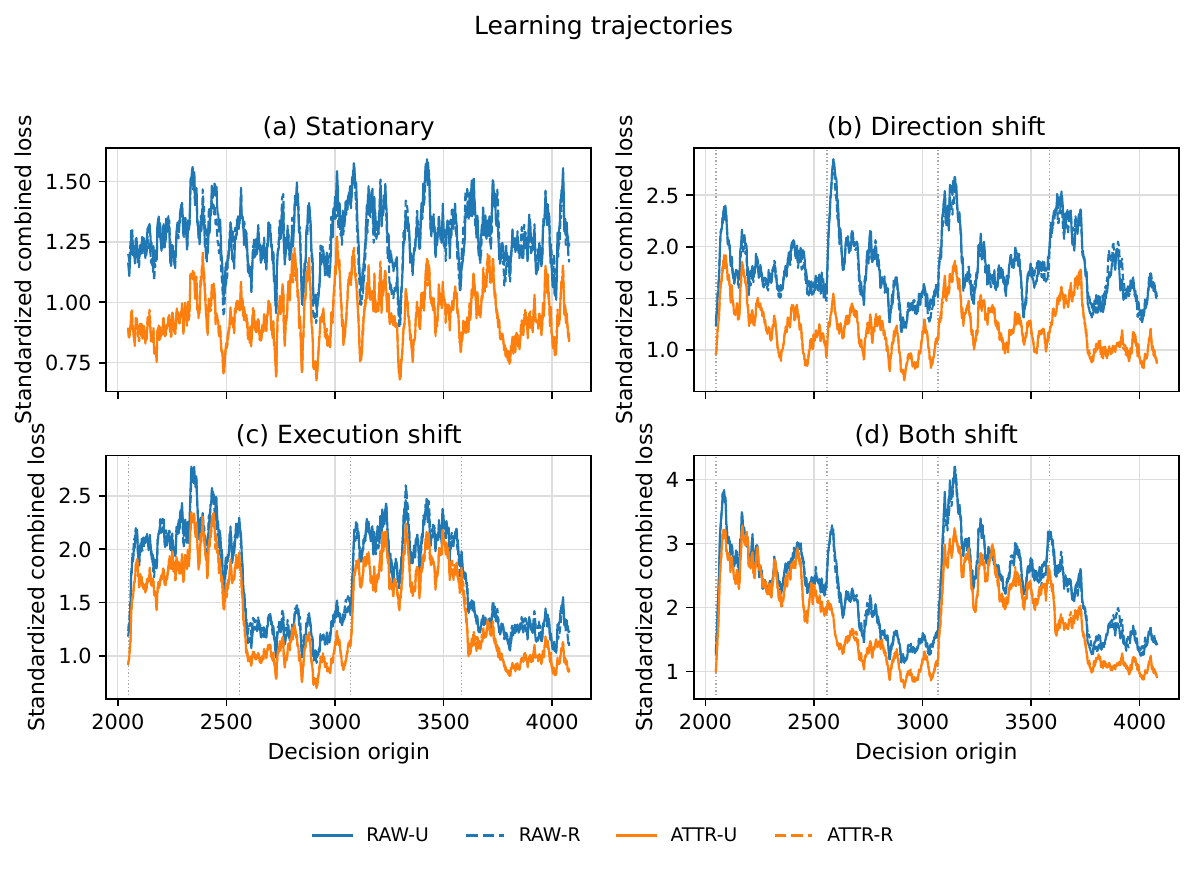}
\caption{V3 factorial trajectories: equal-seed means of trailing 32-clock valid-observation common loss. These are descriptive curves, not confidence bands or additional observations. Vertical lines mark frozen alternating intervention boundaries in shifting conditions. All four conditions and modular treatments are retained.}
\label{fig:v3trajectories}
\end{figure}

The adaptation record retains 667 censored cases among 9,600 method--seed--boundary--component records, including equality controls; these are not 9,600 independent experiments. For both-shift $L$, RAW-U, RAW-R, ATTR-U and ATTR-R have 5, 6, 11 and 12 non-recovered boundaries out of 80. Their seed-first successful-recovery means are 70.52, 68.43, 71.28 and 74.70 clocks. Each seed has at least one recovery, but these means exclude censored boundaries and cannot replace censor fractions. Seed-first signed integrated excess losses are 106.17, 109.21, 96.08 and 101.90 standardized loss-clocks. These are relative to each method's own pre-window level; lower absolute prediction loss need not imply faster relative-threshold recovery.

Every cell consumes 7,168 task-gradient evaluations and 114,688 replay samples. Modular models have 210 parameters and 7,168 Adam steps; shared models have 212 parameters and 3,584 steps. Gradient/replay budgets, not every compute dimension, are matched. Summed training is 827.81 s, unique-path preparation 59.83 s and runner wall time 1,722.64 s including serialization/hashing. All fitted weights are finite and changed. These are host measurements, not live latency benchmarks.

Relative to 20k resamples, the maximum 1k endpoint movement is 0.00493 for primary $L$ intervals and 0.05758 for guardrails; at 10k these maxima are 0.00572 and 0.00441. Convergence is not monotonic. One 1k guardrail noninferiority flag changes (both-shift routing $is$ MSE), but no primary zero-exclusion classification or complete V3-H1--H3 decision changes. More resamples improve numerical resolution, not independent replication.

\subsection{V3 complete mechanism sensitivity grid}
The six nonbaseline settings add 480 runs without selecting a favorable horizon or regime partition (Figure~\ref{fig:v3sensitivities}). All seven target contrasts have negative descriptive 95\% intervals on $L$. Removing regime conditioning changes the target contrast from $-0.4130$ to $-0.3411$; the paired baseline change is $+0.0719$ ($[0.0367,0.1059]$). Complete hysteresis removal gives $-0.3923$, with baseline-change interval crossing zero. Horizons 2 and 8 give $-0.2497$ and $-0.5580$; their different targets/scales preclude interpreting this ordering as bps savings or an optimal horizon.

\begin{figure}[tb]
\centering
\includegraphics[width=\linewidth]{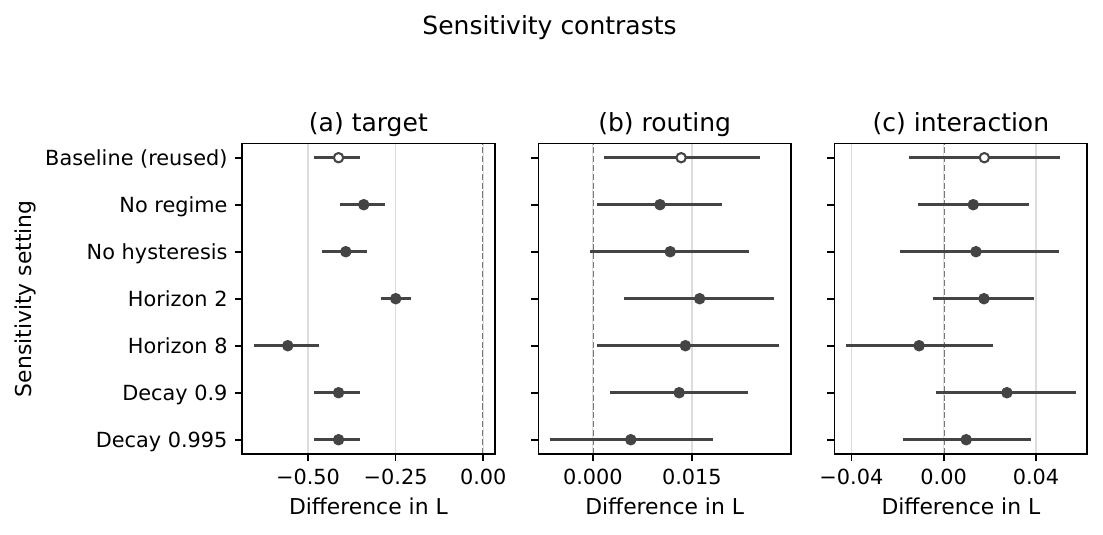}
\caption{Complete v3 both-shift target, routing and interaction sensitivity contrasts on $L$, with descriptive paired 95\% intervals. The baseline reuses 80 main cells; each other setting has 20 seeds and four variants. No sensitivity replaces the frozen primary result.}
\label{fig:v3sensitivities}
\end{figure}

Residual decay leaves the uniform target contrast exactly unchanged. All routing point estimates are positive, and none meets the improvement criterion. Baseline, no-regime, horizon-2, horizon-8 and decay-0.9 routing intervals exclude zero adversely at the descriptive 95\% level. All seven interaction intervals cross zero, leaving the paired effects of decay changes and hysteresis removal inconclusive relative to baseline. This sensitivity inventory covers the executed learner configurations; LLM, SAC and TCN ablations belong to the full-system evaluation agenda.

\section{Portfolio Control: Complete Contrasts and Diagnostics}\label{app:control}

V4 was frozen locally at 2026-09-09 20:43:06 UTC before implementation and comparative execution. The protocol SHA-256 is
\begin{center}\footnotesize\texttt{a64f68ce3601e99ce05f4570f9b58587e0cf15a9c4af8c8e2a4796c2ca9fd303}\end{center}

The complete frozen specification supplies the generator coefficients, ordered state, initialization, seed list and reservation inequalities. Its timestamp is local prospective provenance, not external preregistration.

All 2,940 intended cells completed with all 30 prespecified seeds and no replacement, missing or failed cell. The matrix took 5,350.38 seconds using three worker processes and produced a 3.22-GiB artifact tree. Analysis verifies every cell manifest and preparation/checkpoint link, recomputes CE, net return, MDD and utility components, and independently replays signed fills, positions and fees to reconstruct final equity using saved final marks; it does not regenerate market marks independently. The maximum marked-cashflow identity residual is $1.59\times10^{-9}$ quote units and the maximum saved-versus-reconstructed final-equity difference is zero. Every one of the 2,520 learned cells reports changed final parameter hashes and exactly 8,192 actor and critic updates; no risk-controller replay samples are used.

\begin{table}[tb]
\centering
\caption{V4 seed-mean certainty equivalent in bps per clock. Each cell equally weights 30 complete test paths. Intervals and hypothesis decisions use paired seed differences, not comparisons of these marginal means.}
\label{tab:v4means}
\scriptsize
\resizebox{\linewidth}{!}{%
\begin{tabular}{lrrrrrr}
\toprule
Condition & FIXED & VOL & RL-SCALAR & RL-RISK & PCGrad & JOINT-DRE\\
\midrule
Stationary & 0.1939 & 0.1650 & -0.2700 & -0.2916 & -0.2926 & -0.2908\\
Direction shift & -0.5219 & -0.5162 & -0.5089 & -0.6515 & -0.6284 & -0.8476\\
Execution shift & 0.1056 & 0.0877 & -0.2204 & -0.2309 & -0.2323 & -0.2138\\
Risk shift & 0.0911 & 0.0683 & -0.3311 & -0.3639 & -0.3630 & -0.4488\\
Direction+risk & -0.6315 & -0.1717 & -0.6241 & -0.7547 & -0.7388 & -0.8161\\
Execution+risk & 0.0248 & 0.0513 & -0.2507 & -0.2707 & -0.2654 & -0.3085\\
Full shift & -0.3820 & -0.1388 & -0.4647 & -0.5288 & -0.5161 & -0.5842\\
\bottomrule
\end{tabular}}
\end{table}

\begin{table}[tb]
\centering
\caption{V4 confirmatory paired contrasts with 99.1667\% Bonferroni intervals. CE is bps/clock; MDD and net return are fractions. Positive CE/return and negative MDD favor RL-RISK.}
\label{tab:v4contrasts}
\small
\begin{tabular}{lrrl}
\toprule
Contrast (RL-RISK minus comparator) & Difference & Adjusted interval & Decision\\
\midrule
All-condition CE minus FIXED & -0.2817 & $[-0.3186,-0.2285]$ & Not supported\\
Risk-condition CE minus RL-SCALAR & -0.0619 & $[-0.1132,-0.0241]$ & Not supported\\
Stationary CE minus RL-SCALAR & -0.0215 & $[-0.0545,0.0013]$ & NI not established\\
Risk-condition MDD minus RL-SCALAR & 0.0151 & $[0.0053,0.0286]$ & Not supported\\
Risk-condition net return minus RL-SCALAR & -0.0145 & $[-0.0280,-0.0048]$ & NI not established\\
All-condition CE minus JOINT-DRE & 0.0597 & $[0.0187,0.1033]$ & Supported\\
\bottomrule
\end{tabular}
\end{table}

\subsection{V4 risk-state and projection ablations}
The four v4 ablations retain the same architecture and either zero the regime, drawdown or execution-cost state coordinates, or replace the primary allocation projection by the prespecified outer solvency envelope. Averaged descriptively across the seven condition-specific estimates, RL-RISK CE changes by $+0.0486$, $+0.0174$, $+0.0324$ and $-0.0063$ bps/clock, respectively. PCGrad minus ordinary RL-RISK is $+0.0079$ bps/clock by the same descriptive averaging. These are not additional confirmatory contrasts, and their signs do not select a revised policy.

At the within-test risk onset, exposure-recovery censor counts out of 120 records are 47, 0, 14, 11, 16 and 29 for FIXED, VOL, RL-SCALAR, RL-RISK, PCGrad and JOINT-DRE. Drawdown-stabilization censor counts are 10, 15, 21, 27, 27 and 9. Successful-only mean times are not promoted to rankings because censoring differs and these method--condition records are not independent seed replications. Across the full matrix there are no equity stops, while 2,521 emergency responses and 3,942 primary gross/single breach flags are retained. These simulator interventions are costs and audit records, not evidence of live liquidation performance.

The within-test recovery diagnostic uses the second risk onset (clock 10,752), because the first onset coincides with the flat test reset. Breach flags can include both gross and single-asset flags at one mark; they are not distinct independent events. The 2,602 flags against actively enforced limits are separate from the 3,942 flags against primary limits, including the outer-envelope ablation.

\section{Secondary Historical Bar Replay}\label{app:bar}\label{app:historical}

\subsection{Historical inputs and bar-study contract}\label{sec:marketdata}
The local BTCUSDT and ETHUSDT files each contain 3,329,280 one-minute observations from 2020-01-01 00:00 through 2026-04-30 23:59 UTC, with no missing or duplicate minute slots. OHLCV, timestamp and forward-filled funding fields are present; order-book events and actual venue mark prices are not. The acquisition utility targets Binance USD-margined futures archives, but the retained final files do not carry an upstream download receipt or checksum attestation. Full local SHA-256 values, schemas and audited coverage are in \path{research/study_v2/DATA_MANIFEST.json}. Redistribution permission is not established, so raw market files are excluded from the public package.

Raw funding files are separate event records with \texttt{calc\_time}, \texttt{funding\_interval\_hours} and \texttt{last\_funding\_rate}. The audited January 2024--April 2026 subset contains 2,553 events per symbol, with explicit eight-hour intervals and no duplicate event times. Of these, 854 per symbol have positive millisecond offsets of 1--16 ms from their nominal boundary. An event is charged once, preserving its timestamp. The minute-level forward-filled columns must not be charged every minute. A bar replay must disclose a funding-price proxy because actual funding mark prices are absent.

The frozen bar extension uses hourly aggregation, separate BTC/ETH accounts, January--March training, April diagnostic validation and May--June tests in 2024 and 2025. The author and earlier archive work have historical exposure; these are not claimed to be untouched holdouts. Three training seeds and the five small-learner methods are evaluated, with a passive 25\%-notional comparator and paired 0.5/2.0 cost sensitivities. Test updates continue from already mature past observations: this is a prequential policy experiment, not a frozen-weight holdout. Its single forecast-to-position rule, fixed 25\% sizing and delayed full taker-fill proxy are a \emph{partial public configuration}, not implemented six-head intent, SAC risk control or L2 execution. Exact availability, normalization, order timing, fees, funding, terminal inventory and calendar-time marking rules are recorded in \path{research/study_v2/HISTORICAL_CONTRACT.md}; completed results appear in Section~\ref{sec:barresults}.

\paragraph{Calendar returns and uncertainty.}
Every test account has 1,464 hourly marks over 61 days, including flat and open-position periods. With $V_0=100{,}000$, hourly return is $u_t=V_t/V_{t-1}-1$. Maximum drawdown includes the initial capital in the running peak. Descriptive Sharpe is $\sqrt{8760}\,\bar u/s_u$ using sample standard deviation; a constant series has undefined Sharpe. Hourly certainty equivalent is $\bar u-(5/2)\widehat{\Var}(u)$ using population variance. Positive-day rate counts strictly positive geometrically compounded daily returns over all 61 days, including zero days in the denominator; it is not a closed-trade win rate. Two-way turnover is total absolute executed notional divided by initial capital. Temporal IC/Rank-IC and bps MAE use mature forecast targets within each asset/fold, excluding the last eight decision origins while retaining all equity marks.

The exploratory market contrast equally averages the two asset-account daily returns, then compares attributed with uniform on paired dates and training seeds. A two-level bootstrap resamples three seeds and circular seven-day blocks within each year, with common draws across methods/assets, 20,000 replicates and seed 90218. The pooled estimate equally weights both years; yearly effects remain visible. Its 95\% intervals are conditional on two selected historical windows, not uncertainty over all market eras. Grouped Sharpe and drawdown are averages of separately funded account metrics, not those of a rebalanced combined portfolio. Own-policy execution labels have different parent targets across methods, so their conditional cost differences are not reported as matched execution bps saved. Full fills are imposed by the bar model, not a measured fill-rate improvement.

\subsection{Features, labels and fixed policy}

In Bar, writing completed hourly close, high, low and base volume as $C_t,H_t,L_t,W_t$, raw feature order is
\begin{equation}
\left(
10^4\log\frac{C_t}{C_{t-1}},\;
10^4\log\frac{C_t}{C_{t-4}},\;
10^4\log\frac{C_t}{C_{t-16}},\;
\operatorname{sd}_{0}(r^{(1)}_{t-15:t}),\;
\log(1+W_t),\;
10^4\frac{H_t-L_t}{C_t}
\right).
\label{eq:barfeatures}
\end{equation}
Here $r^{(1)}$ is the one-hour log return in bps and $\operatorname{sd}_0$ is population standard deviation. All coordinates except the log-volume transform are in bps before standardization; volume is numerically expressed in base units before applying $\log(1+\cdot)$. The first 16 hours provide lookback, not zero-valued observations. The regime statistic is an EWMA of the fourth coordinate, initialized at the first valid clock.

The Bar direction target is $10^4(C_{t+h}/C_t-1)$, a simple return in bps, delivered when that future bar closes. For a submitted signed order change $\Delta q$, execution coordinates are $(\operatorname{sign}\Delta q,|\Delta q|C_t/V_t)$, where $V_t$ is current completed-bar equity. A candidate action is evaluated for the gate; if an order is submitted, its actual action features and prediction replace that candidate record. With full fill price $P$ and signed fee $F$, the label is
\begin{equation}
y_{t,E}=10^4\,\frac{\Delta q(P-C_t)+F}{|\Delta q|C_t}.
\label{eq:costtarget}
\end{equation}
It is signed shortfall including fees, not an optimal schedule or a counterfactual unsubmitted cost. It becomes available at the simulated fill and enters that bar's close-time learning cycle. A candidate prediction without a realized eligible fill is not a supervised label. Observation, maturity and finite-value masks are retained separately; scored direction rows use their own masks, and the combined simulated endpoint uses the intersection of the two task masks.

\subsection{Fixed historical policy and event timing}\label{sec:barpolicy}
At a completed Bar clock, let $q_t$ be held base quantity and $d_t$ the forecast in bps. The candidate position is $\operatorname{sign}(d_t)\phi V_t/C_t$, with $\phi$ from Table~\ref{tab:parameters}. Predict candidate cost from its change relative to $q_t$. Accept the candidate if $|d_t|>\max(\hat c_t,0)+g$; otherwise the approved target is zero, so existing inventory may be closed. Recompute execution features/prediction for the actual approved change before submission. There is no LLM policy or learned risk sizing. Only one order can be outstanding; while it is pending, no replacement or second order is sent. Sub-tolerance changes and the final excluded decision origins are skipped.

An order decided after bar $i$ closes fills completely at bar $i+2$ open plus one second, at that open multiplied by $1+\operatorname{sign}(\Delta q)10^{-4}\ell$. Fee is $|\Delta q|P\,10^{-4}f$; $\ell,f$ are the tabulated slippage and fee bps. The future open is read at fill processing, not stored as a known decision-time price. The cost-sensitivity multiplier applies once to both $\ell$ and $f$, including prefix calibration, and not to funding. Prefix execution probes use one base unit, alternate buy/sell on every second valid prefix clock, and encode size as $C_i/100{,}000$. Only mature probe labels initialize replay/normalization; probe profits never enter the account.

Funding debit at each raw event time is current signed quantity times the current hourly open proxy times the raw rate. Positive debit reduces cash and negative debit credits cash. The observed boundary offsets are 0--16 ms, preceding the one-second fill offset; the runner's funding-before-fill order is valid for these files, not a general intrabar event engine. Forward-filled minute funding fields are not charged. Completed-bar valuation is one millisecond before the next boundary, available at that boundary; its funding belongs to the next hour. Equality of economic and availability times for simulated fills/funding is an explicit simulator assumption, not measured exchange-to-local latency.

At the May test boundary, each account resets flat to the stated capital and cancels any outstanding pretest order. Model, optimizer, replay, residual/regime history and legitimate pending direction labels persist; training-period cash, inventory and profits do not. All 1,464 test closes are marked, including the last eight hours with no new submissions. Terminal inventory is valued, not forcibly liquidated with an invented fee. Passive accounts reset identically and make one delayed 25\%-notional buy after the first test-hour close. None of these rules models margin tiers, liquidation, queues, partial venue fills or L2 execution.

\subsection{Historical partial-configuration comparison}\label{sec:barresults}
The fixed main matrix contains 60 learned runs and four passive accounts, and the cost sensitivity contains 48 additional learned runs. All cells completed. Each method is evaluated on the same two asset/year windows and the same three initialization seeds. The full account-level results, including adverse seeds, remain in the numerical release. The attributed and ordinary loss-budget variants coincide exactly in all twelve main pairs, including predictions, positions, equity, routing counts and final weights.

\begin{table}[tb]
\centering
\caption{Historical public bar configuration, May--June in each year. Values are averages of separately funded account metrics: first across three seeds per asset, then equally across BTC/ETH; passive has two accounts, not six artificial seed replications. DD is hourly-sampled maximum drawdown; SR is descriptive annualized hourly Sharpe; median SR is the account median; CE is hourly certainty equivalent in bps; Day+ is the percentage of positive calendar days. Uncertainty for the central paired return contrast appears in Table~\ref{tab:marketcontrast}.}
\label{tab:market}
\small
\setlength{\tabcolsep}{3.5pt}
\begin{tabular}{llrrrrrr}
\toprule
Year & Method & Net (\%) & DD (\%) & SR & Median SR & CE & Day+ (\%)\\
\midrule
2024 & Uniform & -7.17 & 8.80 & -4.65 & -4.50 & -0.531 & 38.5\\
2024 & Attributed & -3.92 & 7.18 & -2.70 & -3.28 & -0.302 & 40.2\\
2024 & Loss & -3.92 & 7.18 & -2.70 & -3.28 & -0.302 & 40.2\\
2024 & Joint & -5.05 & 7.85 & -3.22 & -3.37 & -0.382 & 39.3\\
2024 & PCGrad & -4.17 & 6.98 & -2.63 & -2.69 & -0.321 & 37.2\\
2024 & Passive & 1.90 & 5.42 & 0.87 & 0.87 & 0.089 & 49.2\\
\midrule
2025 & Uniform & -4.31 & 7.35 & -1.84 & -1.80 & -0.338 & 44.3\\
2025 & Attributed & -3.01 & 6.67 & -1.48 & -0.98 & -0.243 & 40.7\\
2025 & Loss & -3.01 & 6.67 & -1.48 & -0.98 & -0.243 & 40.7\\
2025 & Joint & -3.88 & 6.83 & -1.90 & -2.34 & -0.306 & 42.9\\
2025 & PCGrad & -6.26 & 8.42 & -3.14 & -3.57 & -0.480 & 38.3\\
2025 & Passive & 6.08 & 5.92 & 2.23 & 2.23 & 0.336 & 51.6\\
\bottomrule
\end{tabular}
\end{table}

Every learned method has a negative mean account return in both years (Table~\ref{tab:market}); the passive comparator is positive on average. Attributed allocation reduces the loss relative to uniform. Its pooled mean daily contrast is $+3.90$ bps/day with descriptive 95\% interval $[-0.90,9.96]$, crossing zero. The 2024 interval excludes zero, while the 2025 interval does not. These selected-window comparisons concern the fixed-policy scheduler; attributed allocation and the equal loss-budget control remain identical, and full-system H1 requires its own matched evaluation.

\begin{table}[tb]
\centering
\caption{Historical attributed minus uniform mean daily return. Positive favors attributed allocation. Intervals resample paired initialization seeds and common seven-day blocks; they are descriptive and conditional on the selected windows. Units are bps/day, not execution bps saved.}
\label{tab:marketcontrast}
\begin{tabular}{lrr}
\toprule
Window & Difference & 95\% interval\\
\midrule
2024 & $+5.54$ & $[+1.43,+11.58]$\\
2025 & $+2.27$ & $[-5.27,+10.69]$\\
Equal-year pooled & $+3.90$ & $[-0.90,+9.96]$\\
\bottomrule
\end{tabular}
\end{table}

\begin{figure}[tb]
\centering
\includegraphics[width=\linewidth]{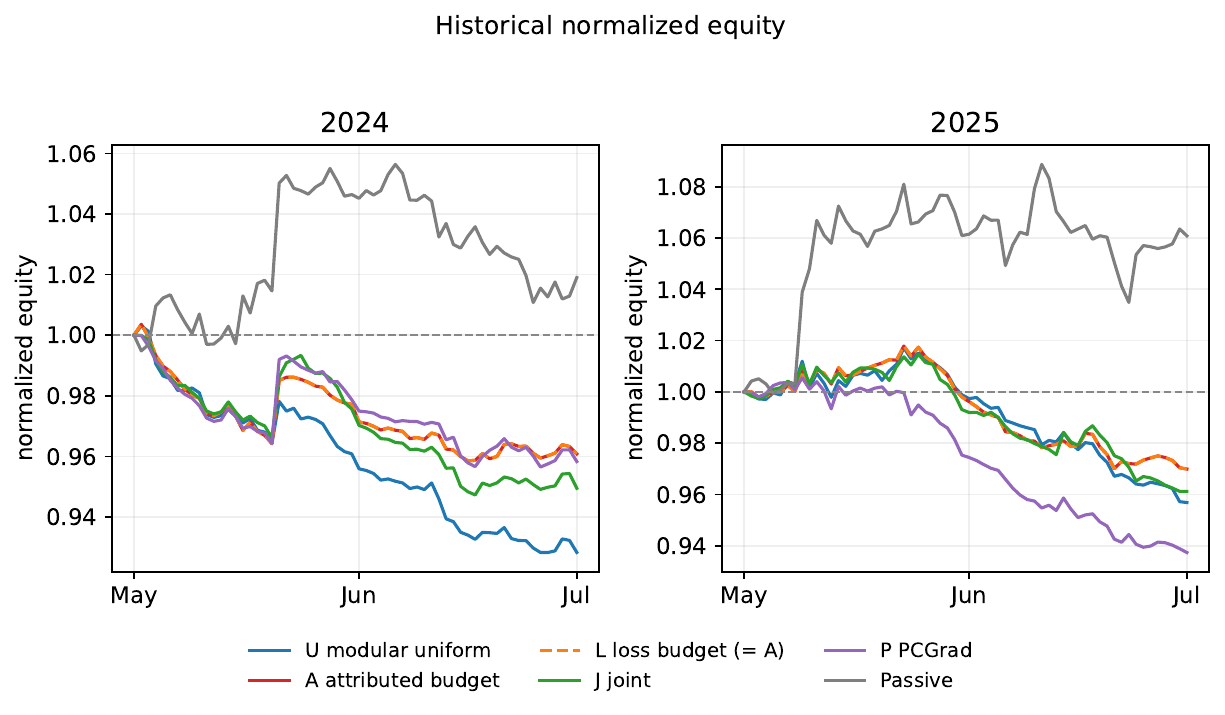}
\caption{Mean normalized equity of separately funded historical research accounts, retaining the full calendar window. Curves connect measured daily closes and the initial capital point; they are not a rebalanced portfolio or a confidence band. Attributed and loss-budget curves coincide exactly. Negative learned-policy outcomes are retained alongside the passive comparator.}
\label{fig:marketequity}
\end{figure}

\paragraph{Execution, risk and routing diagnostics.}
Mean two-way turnover is higher for attributed than uniform: 83.91 versus 77.23 initial-capital multiples in 2024, and 91.85 versus 86.06 in 2025. Corresponding mean fees are 3,356.52 versus 3,089.21 quote units, and 3,673.82 versus 3,442.44, respectively. Both executed notional and fee expenditure therefore increase. Mean signed funding debits are 25.72 versus 16.90 in 2024 and 4.79 versus 11.60 in 2025. Own-policy mean observed shortfall is 5.30 versus 7.41 bps and 5.32 versus 6.97 bps, but the methods choose different parent orders: those conditional differences are not matched execution savings. Mean directional IC is approximately $-0.004$ versus $-0.010$ in 2024 and $0.021$ versus $0.023$ in 2025; directional associations are small and mixed across years.

All 93,696 main test marks agree with independent signed-fill/fee/funding MTM recomputation to less than $6\times10^{-10}$ quote units. Every account retains 183 unique test funding events. No test account records gross exposure above one, multiple outstanding orders or late-tail submissions; this is an observed property of this simulation, not an exchange risk guarantee. Funding remains charged at portfolio level while its allocation to individual execution decisions is explicitly unresolved. Adaptive requested direction shares genuinely vary, spanning approximately 0.323--0.729 in the main matrix; uniform stays at 0.5. These logs establish actual routing activity, not its superiority.

\subsection{Common cost sensitivity}

The historical cost sensitivity reruns both primary learners at 0.5 and 2 times the common fee/slippage assumptions, leaving funding unchanged and reusing the 1-times main cells. Pooled attributed-minus-uniform differences are $-1.26$ bps/day ($[-4.80,2.23]$) at half cost, $+3.90$ ($[-0.90,9.96]$) at base cost and $+2.03$ ($[-1.57,5.61]$) at double cost. All three pooled descriptive intervals cross zero. The sign change at half cost and nonmonotonic point estimates do not identify a robust allocation gain or an optimal cost assumption. All yearly and individual-account outcomes remain available.

\begin{figure}[tb]
\centering
\includegraphics[width=0.85\linewidth]{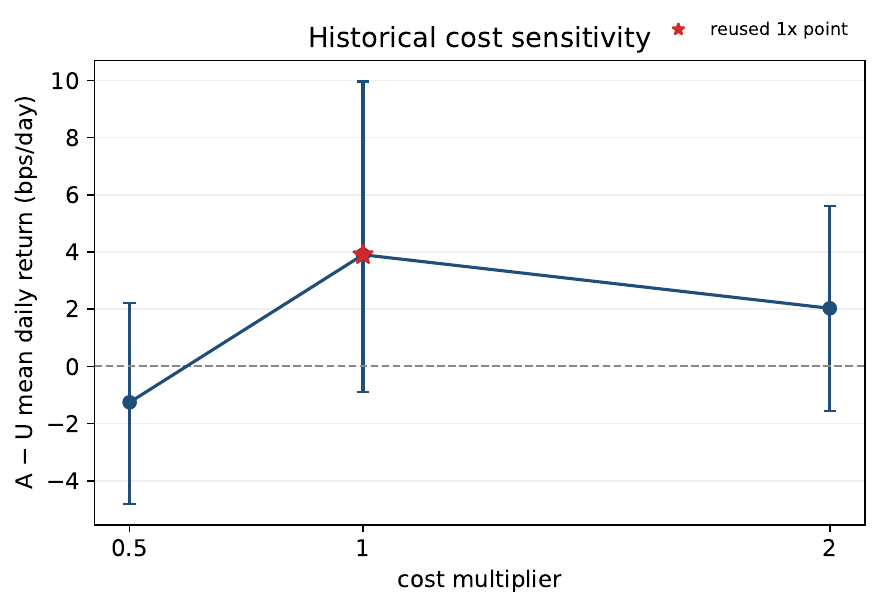}
\caption{Complete historical fee/slippage sensitivity: paired attributed-minus-uniform mean daily return with descriptive 95\% intervals. The base point is reused from the main matrix. Funding is unchanged; policy trajectories are retrained under each declared cost multiplier. Connecting lines do not estimate untested multipliers.}
\label{fig:marketcosts}
\end{figure}

\clearpage

\section{Original Full-System Hypotheses}\label{app:originalhypotheses}

Four questions define the eventual performance evaluation.
\begin{enumerate}[label=\textbf{RQ\arabic*.},leftmargin=*]
\item Does attribution improve out-of-sample performance against matched joint and modular non-attributed baselines? \textbf{H1:} higher median net Sharpe and certainty-equivalent return.
\item Does it reduce degradation near regime changes? \textbf{H2:} a smaller transition penalty and shorter recovery time.
\item Does a routed update improve its intended target without harming another? \textbf{H3:} improved directional IC/Rank-IC without greater shortfall, or reduced shortfall without worse directional accuracy.
\item What do regime conditioning, stacking, the LLM, SAC risk control, and execution optimization contribute? \textbf{H4:} incremental benefits under matched data, capacity, costs, and risk budgets.
\end{enumerate}
These are full-system market-performance hypotheses, not automatically confirmed by a mechanism test. This version adds actually trained small-learner comparisons against ordinary task-specific modular supervision, a directly comparable loss-based allocation control, joint learning and PCGrad. The empirical analysis distinguishes a normalized-loss analogue of H3 from the original market IC/shortfall claim. The earlier 2,048 paired accounting episodes remain a separate regression: equal imposed opportunity and execution shocks produce the same net series while changing different recorded components. No financial confidence interval is assigned to that deterministic paired identity.

\begin{table}[tb]
\centering
\caption{Original hypothesis status and the narrower evidence actually obtained. ``Untested'' refers to the original full-system claim, not to the absence of the reported public experiments.}
\label{tab:hypotheses}
\small
\setlength{\tabcolsep}{2.5pt}
\begin{tabular}{p{0.05\linewidth}p{0.19\linewidth}p{0.30\linewidth}p{0.17\linewidth}p{0.20\linewidth}}
\toprule
H & \raggedright Required test & \raggedright Observed evidence & \raggedright Conclusion & \raggedright Limitation\tabularnewline
\midrule
H1 & \raggedright Matched full-system median net Sharpe and CE & \raggedright Trained controlled losses and exploratory historical account comparisons; attributed and loss-budget controls coincide & \raggedright Full H1 untested; no attribution-specific gain established & \raggedright Small learners and two historical windows, not full AMATS\tabularnewline[0.4em]
H2 & \raggedright Market-transition penalty and recovery & \raggedright Controlled intervention diagnostics retain 229 censored records across methods & \raggedright Market H2 untested & \raggedright Simulator change times are not observed market transitions\tabularnewline[0.4em]
H3 & \raggedright Intended IC/shortfall gain and noninferiority elsewhere & \raggedright All three shifted normalized-loss criteria fail; stationary meets the margin but combined loss worsens & \raggedright Analogue not supported; full H3 untested & \raggedright Own-policy order targets differ; cost differences are not paired savings\tabularnewline[0.4em]
H4 & \raggedright Matched removal of each active architectural component & \raggedright Complete six-factor learning grid and cost multipliers; minimum share 0.5 reproduces uniform & \raggedright Partial tests completed; full H4 untested & \raggedright Named stack, six heads, LLM, SAC and TCN not evaluated\tabularnewline
\bottomrule
\end{tabular}
\end{table}

\section{Historical Source Context and Information Diagnostics}\label{app:legacycontext}

\subsection{Original estimates and illustrative comparisons}
The original seven-page draft is preserved as a source document, not overwritten by this revision. Its headline statements report Sharpe 2.5--4.0, win rate 70--75\%, maximum drawdown below 15\%, about 7\% during a one-month live test, about 107\% annualized CAGR, CAGR/MDD about 6.7, Sortino about 2 and approximately 50\% active functionality. These are author-reported historical estimates with unrecovered sample and configuration provenance; the live-test statement is not independently substantiated by the retained backtest artifacts. The author describes the draft values as grounded in earlier work, while the forecasting table itself was explicitly illustrative.

These statements are not substituted for E01's different, concretely reported values. The monthly/annualized pair is retained as source history rather than silently reconciled: exact elapsed duration, convention and underlying observations are unknown. E01 additionally reports CAGR/MDD 8.1440; that is another transcribed historical ratio, not validation of the original estimate near 6.7.

\begin{table}[tb]
\centering\small
\caption{Original draft Table 1, retained with its explicitly illustrative status. The horizon $h$, sample, costs and comparison protocol were not specified sufficiently to reproduce this table. These cells are not new trained comparisons and are not E01 metrics.}
\label{tab:legacyillustrative}
\begin{tabular}{@{}lrrrr@{}}
\toprule
Original model/stack label & AUC@$h$ & MASE & Sharpe & MDD\\
\midrule
N-HiTS & 0.61 & 0.89 & 1.9 & 0.18\\
PatchTST & 0.64 & 0.84 & 2.6 & 0.15\\
ETSformer & 0.63 & 0.85 & 2.4 & 0.16\\
Stacked & 0.66 & 0.81 & 3.1 & 0.13\\
\bottomrule
\end{tabular}
\end{table}

The draft qualitatively describes transformer strength at middle horizons, N-HiTS strength at coarser horizons, greater stability from stacking and a router adapting maker/taker behavior while saving bps. It also reports faster overfitting or less stable costs without attribution routing, greater variance/regime fragility without stacking, and worse drawdowns without RL risk. These source-reported observations are preserved as historical context; no corresponding matched model/removal experiment or execution-savings magnitude is recovered. They cannot supersede the actual v2--v4 outcomes or stand in for the original full-system hypotheses in Section~\ref{app:originalhypotheses}.

\subsection{Evidence status of the original checklist}
Table~\ref{tab:legacychecklist} replaces the source's blanket completion marks with the status of the materials actually available. This does not infer that absent historical records mean a test was never performed.

\begin{table}[tb]
\centering\small
\caption{Original validation and operating checklist, restated by evidence status. Design requirements are not counted as completed empirical checks.}
\label{tab:legacychecklist}
\begin{tabular}{@{}p{0.28\linewidth}p{0.64\linewidth}@{}}
\toprule
\raggedright Original item & \raggedright Status in this paper\tabularnewline
\midrule
\raggedright Walk-forward and regime-stratified evaluation & \raggedright Explicit controlled study schedules and limited Bar windows are available. The full historical stack's sample/split protocol is unverified.\tabularnewline[0.3em]
\raggedright Out-of-sample AUC/MASE and calibrated forecasts & \raggedright Public evaluation dimensions and illustrative source table retained; no recoverable matched named-ensemble result.\tabularnewline[0.3em]
\raggedright Synthetic attribution correctness & \raggedright Present deterministic event/gradient checks and trained equality controls are documented in Appendix~\ref{app:diagnostics} and the study appendices; they do not date or verify the source's earlier checklist.\tabularnewline[0.3em]
\raggedright Execution bps versus a taker baseline & \raggedright Original qualitative claim retained; historical comparator and cost protocol unknown. Own-policy Bar differences are not paired execution savings.\tabularnewline[0.3em]
\raggedright Token/latency enforcement and alerts & \raggedright Original budget, kill-switch and 80\% Prometheus alert statements retained in Section~\ref{sec:lifecycle}; no deployment measurement or enforcement log recovered.\tabularnewline
\bottomrule
\end{tabular}
\end{table}

\subsection{Recovered feature diagnostics}\label{app:information}
Source group E02 preserves \path{research_info_audit_v1_MI_20251217_215233}, its HTML and the BTC-target MI/correlation matrices for 8- and 24-hour horizons. Figures~\ref{fig:legacyinfo8} and~\ref{fig:legacyinfo24} render those numerical matrices, not values estimated from the screenshot. All 13 feature rows and the eight lags (1, 2, 4, 8, 12, 24, 48 and 72 hours) retain their source order and precision; only annotations are rounded to three decimals. Mutual-information color limits are shared across horizons, as are symmetric correlation limits.

The rows \path{fund_spread_bps} and \path{funding_spread_bps} are exactly identical in all four matrices. Both names remain visible to preserve the artifact, but are not independent evidence. The estimator settings, sample/split and exact historical generator revision are unknown. These are feature diagnostics, not out-of-sample alpha evidence or proof that any feature entered a profitable deployed model.

\begin{figure}[p]
\centering
\includegraphics[width=\linewidth]{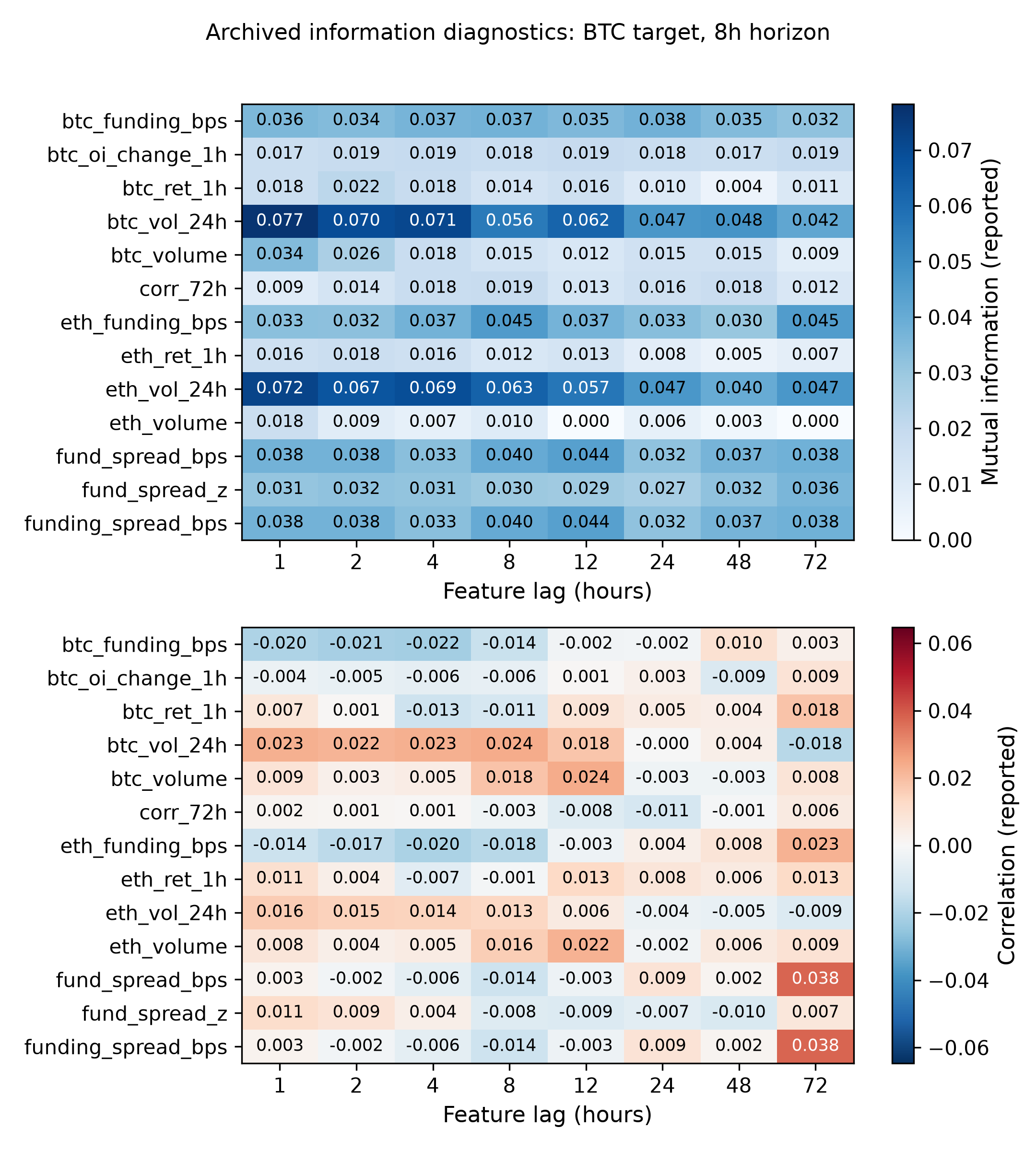}
\caption{E02, 8-hour BTC-target horizon: preserved mutual-information and correlation matrices. Numerical cells and ordering come from CSVs; labels alone are rounded. Duplicate funding-spread rows are retained. Sample and estimator lineage are unverified, so the plot is not an out-of-sample predictive-performance test.}
\label{fig:legacyinfo8}
\end{figure}

\begin{figure}[p]
\centering
\includegraphics[width=\linewidth]{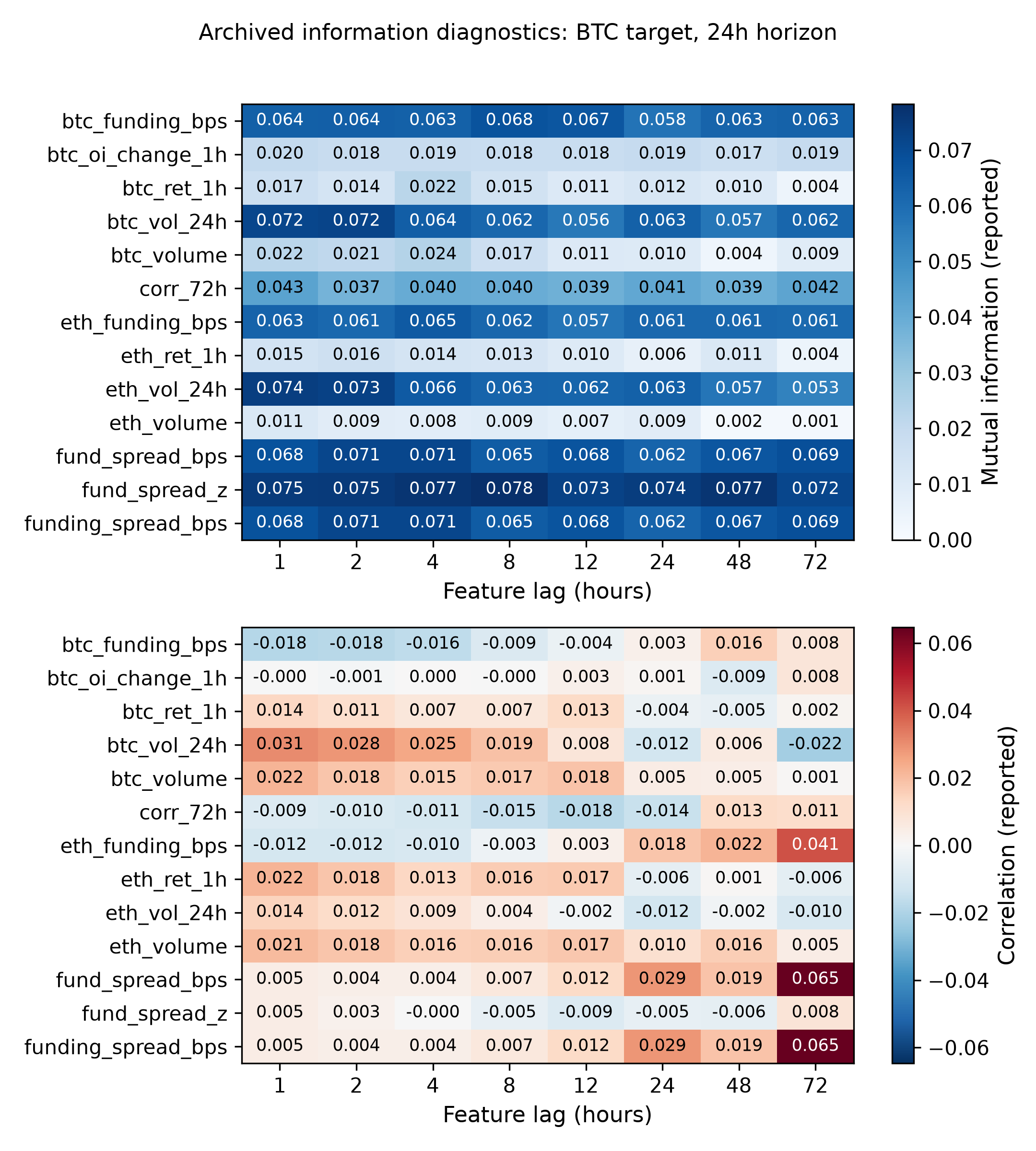}
\caption{E02, 24-hour BTC-target horizon, using the same respective color limits as Figure~\ref{fig:legacyinfo8}. All 13 rows and eight lags are retained, including duplicate funding-spread rows. These are archived diagnostics, not independent alpha confirmations.}
\label{fig:legacyinfo24}
\end{figure}

\subsection{Other preserved screenshots}\label{app:legacyscreenshots}
Figures~\ref{fig:legacytrades} and~\ref{fig:legacyshort} reproduce the other two supplied images unchanged. Their exact run identities and relationship to E01 remain unknown. Each image is retained as a separate provenance record. The transcribed E01 metric table in the main manuscript belongs to E01 alone; assigning exact returns, trade counts, assets or module participation to these images requires linked run records.

\begin{figure}[p]
\centering
\includegraphics[width=\linewidth]{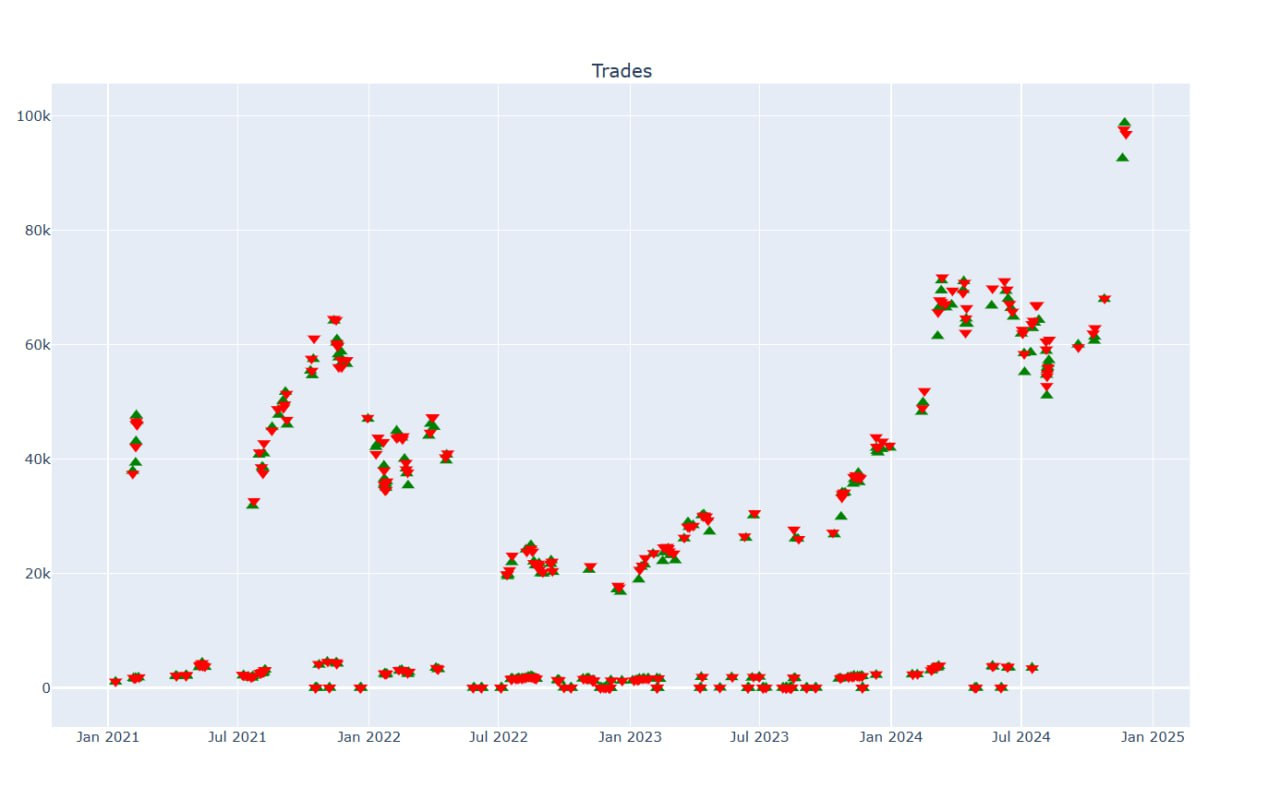}
\caption{E03: unchanged archived trades visualization. Source run linkage is unresolved; the displayed dates and markers alone do not connect it to E01 or identify the complete system configuration.}
\label{fig:legacytrades}
\end{figure}

\begin{figure}[p]
\centering
\includegraphics[width=0.86\linewidth,height=0.76\textheight,keepaspectratio]{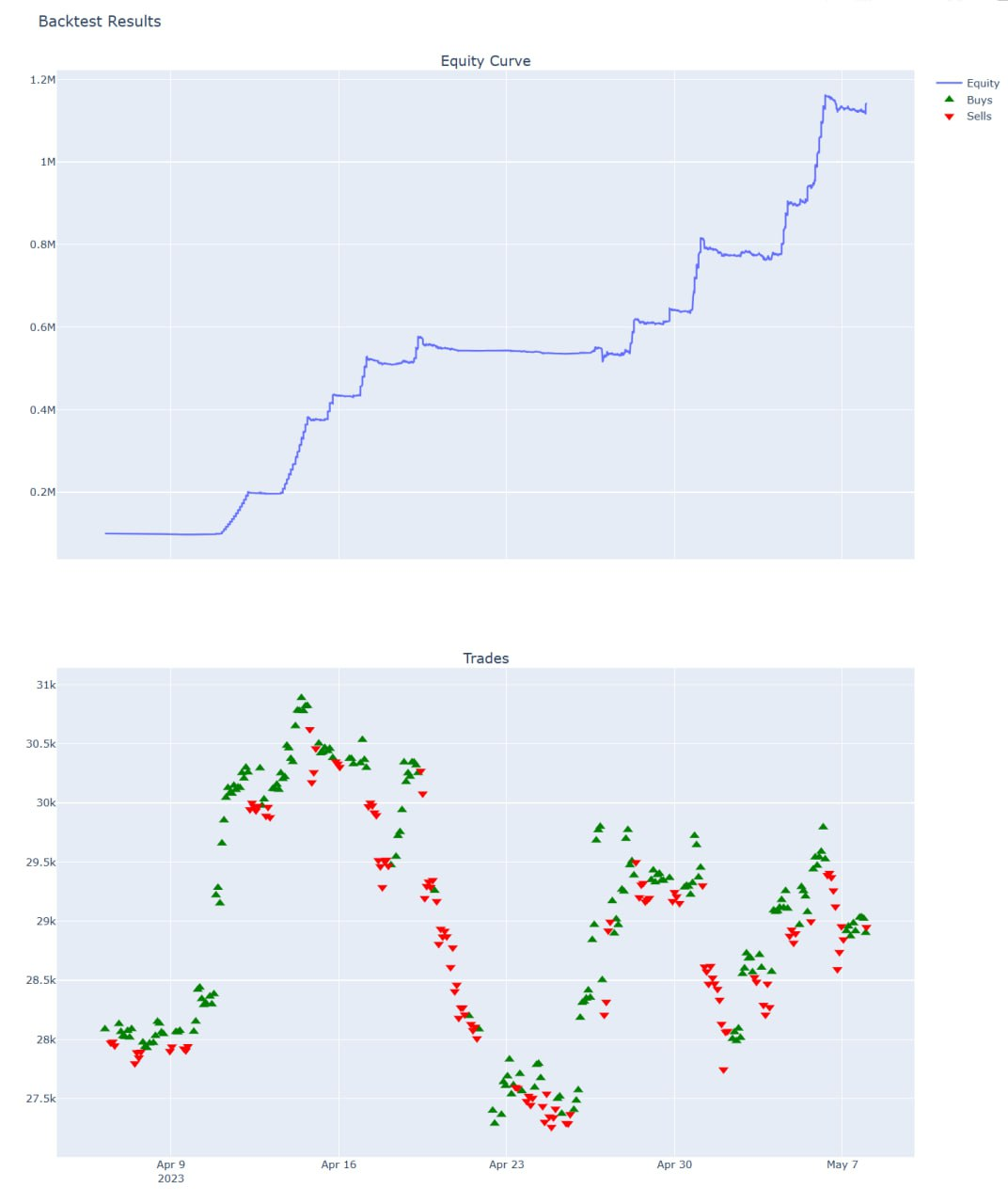}
\caption{E04: unchanged archived short-window equity/trades example. No numerical performance estimate is reconstructed from the image and no E01 metrics are transferred to it. Its configuration and full-stack linkage remain unverified.}
\label{fig:legacyshort}
\end{figure}

\clearpage

\section{Reproduction and Validation Record}\label{app:reproduce}
The reference numerical runtime is Python 3.12.14 with NumPy 2.3.5 and pandas 3.0.1; publication plotting uses Matplotlib 3.11.1. Source/configuration hashes, not a Git revision, identify this local release. No fresh training or bootstrap was performed for the editorial revision. Portable commands, accepted analysis paths and environment requirements are in \path{REPRODUCIBILITY.md}.

V2's accepted analyses are the synthetic and historical \texttt{analysis\_precision\_001} directories identified there. The increase from 1,000 to 20,000 bootstrap draws was a disclosed post-analysis precision check, not a new freeze. The legacy table exporter retains an old resample-count caption; the manuscript and reviewed table artifact report the accepted count. The earlier 88-test Method audit concerns v2 only. Its paired quadratic/accounting regression remains separate from neural training.

V3 retains 1,280 trained cells and accepted \texttt{analysis\_002}. Its first analysis attempt stopped before statistics on a shared-weight validation assumption; the corrected analysis changed no learner, seed, protocol or training record. V4 retains 2,940 cells and accepted \texttt{analysis\_001}. Its runner requires a fresh matching PASS admission receipt before comparative execution. The supplied historical receipt records the reference machine, not automatic admission on another host. Full method/audit records accompany both studies.

The integrated compact submission package contains source/contracts/tests, accepted aggregate and bootstrap-analysis artifacts, figures and selected historical plotting inputs; it omits the large raw run trees and full private preservation archive. Thus it supports inspection and replotting, while full event, maturity and trained-state validation requires regenerating the synthetic matrices. Existing V4 plotting produces SVG; the older retained PNGs have recorded hashes, but their original conversion provenance is unresolved. The V4 primary-contrast figure in the main manuscript is a documented publication replot of the unchanged saved contrasts and adjusted intervals, not a new analysis. No exact cross-machine raster or trained-state bitwise guarantee is claimed.

Historical reruns additionally require matching authorized OHLCV and raw funding inputs. Input hashes and normalized aggregates are supplied, not redistribution rights or raw prices. Proprietary deployment material, credentials, runtime binaries and project memory are excluded. No external replication, hosted repository or journal submission is asserted. Separate manifests identify the release selection and locally preserved sources. Historical plotting checks hashes and data equality without recomputing performance or rerunning a backtest. The content map records the disposition of all original elements.

\clearpage

\section{Preliminary Full-System Configuration}\label{app:preliminary}
\noindent\textbf{Status: proposed, unexecuted, and not a prospective freeze.}
This section answers implementation questions with an explicit starting point for a future AMATS prototype. All numerical choices below are engineering proposals, added after observing v2--v4; they are neither tuned optima nor recovered deployment settings. The hypothetical fixtures contain authored data. No new forecasting training, LLM inference, LoRA run, SAC training, L2 replay or live trading is reported. A dated executable specification, selected data and model revisions, integration tests and a separate evaluation freeze would be needed before making empirical claims about this configuration.

\subsection{Forecasting and regime conditioning}\label{app:preliminaryforecast}
The proposed first comparison uses BTCUSDT and ETHUSDT, a one-hour decision clock, and a 336-hour lookback per asset. Each of the three named forecasters initially receives the same single-channel hourly log-return series in bps ($d_F=1$ per asset). The six observable Bar features in Equation~\eqref{eq:barfeatures} remain available to regime and downstream modules; they are not silently treated as extra inputs to a channel-independent forecast model. This deliberately small input specification can be extended only by declaring the resulting comparison separately. The output is a 24-step log-return forecast path. For $h\in\{1,4,24\}$, sum its first $h$ increments and apply the exponential conversion to obtain the corresponding simple-return forecast used in the main manuscript. Observations and targets are standardized using the training window only.

\begin{table}[htbp]
\centering\small
\caption{Proposed starting architectures. These settings were not used to generate any reported AMATS result.}
\label{tab:preliminaryforecast}
\begin{tabular}{@{}p{0.19\linewidth}p{0.73\linewidth}@{}}
\toprule
Model & Initial configuration and rationale\\
\midrule
N-HiTS & Three identity stacks; one block per stack; two 256-unit ReLU layers per block; pooling kernels $(4,2,1)$; interpolation downsampling $(4,2,1)$; linear interpolation. Multirate processing supplies a coarser-scale comparator.\\[0.4em]
PatchTST & Patch length 16, stride 8, no end padding; three encoder layers; width 128; eight attention heads; feedforward width 256; dropout 0.1. Patching supplies local temporal structure with 41 input patches.\\[0.4em]
ETSformer & Two encoder and two decoder layers; width 128; eight attention heads; feedforward width 256; dropout 0.1; three selected Fourier components. Its level/growth/seasonal structure supplies a distinct modeling assumption.\\
\bottomrule
\end{tabular}
\end{table}

These families follow the published models \citep{nhits,patchtst,etsformer}; the table's sizes are our preliminary resource choices. TimesFM, iTransformer and TFT are candidates for a subsequent matched comparison. Each asset/model is initially trained separately, using Adam at $10^{-3}$, batch 32, gradient-norm cap 1, and mean absolute error over the 24 standardized outputs. Cap training at 50 epochs; stop after five validation checks without improvement and retain the lowest validation-loss checkpoint. Pin the implementation revision and initialization before execution. A candidate chronological design uses 180 training days, 30 validation days and 30 test days, advancing by 30 days; exact fold dates and the number of complete folds remain to be selected. Remove training origins whose 24-hour labels cross the training boundary, and apply the same maturity rule at validation boundaries. Model weights remain fixed within each test fold.

For regime conditioning, start with $K=3$ and an EWMA of observed hourly volatility, decay 0.97. Fit its two quantile boundaries on training data; use a 16-hour minimum hold, four consecutive crossing observations and a boundary margin of 0.05 training standard deviations. Regime labels describe observed volatility, not hidden intervention classes. For horizon-specific ensemble weights, choose EWMA rather than Thompson sampling initially: update absolute forecast error only when that horizon matures, with half-life 168 eligible observations per model/horizon/regime. Initialize error scales from a common training-only calibration tape and use $w_m\propto(e_m+1\,\mathrm{bp})^{-1}$, normalized across the three models. The floor prevents a near-zero estimated error from dominating; all compared ensembles must use the same causal update schedule. A one-day horizon and one-week error half-life are starting time scales, not market-optimal values.

Map ensemble return scores to directional probabilities using one Platt logistic calibrator per asset/horizon, fit only on mature training predictions generated out of sample within the training window. A calibrator needs both outcome classes; absent such data, its probability is unavailable. Freeze it during validation/test and report Brier score and calibration diagnostics alongside IC/Rank-IC. AUC, if reported, concerns sign discrimination; IC concerns association with return magnitude. Neither metric alone establishes executable profit.

\subsection{Language model, heads and adaptation}\label{app:preliminaryllm}
The candidate is \path{Qwen/Qwen2.5-7B-Instruct}, with released weights and tokenizer pinned by revision before use \citep{Qwen25Card}. Its model card reports 7.61 billion parameters and a default configured context of 32,768 tokens; this proposal deliberately limits each input to 8,192 tokens and output to 2,048. Use the released chat template, greedy decoding and one joint response containing all six heads per hourly decision. Retain prompt, tokenizer and adapter hashes, timestamps and token counts. This is a reproducible identity and a resource cap, not a measured inference-time guarantee. Text sources are optional evidence with source and availability timestamps; unsupported news or market facts must not be invented.

The six heads are manually specified roles sharing one model and initially one LoRA adapter. They are not six independently trained policies. Each receives a numerical candidate supplied by its strategy interface, its eligibility flag, horizon, signed intent and evidence identifiers. The LLM assesses whether the supplied evidence supports that candidate; it does not invent strategy returns or private signal thresholds. Momentum consumes forecast/trend evidence; funding arbitrage requires synchronized funding, basis and hedge-cost evidence; statistical arbitrage requires a frozen residual model and diagnostics; cross-venue requires synchronized executable prices and costs; pairs requires a specified pair and hedge ratio; market making requires valid depth, spread, inventory and adverse-selection estimates. If the corresponding numerical candidate or necessary evidence is unavailable, that head abstains. The numerical strategy implementations remain an explicit dependency of a future full-system run.

For the first adaptation trial, propose LoRA rank 16, scaling 32 and dropout 0.05 on attention query/value projections, with the backbone frozen \citep{LoRA2021}. Use AdamW at $10^{-4}$, weight decay 0.01, effective batch 32, sequence cap 4,096 tokens, gradient-norm cap 1 and at most one epoch per candidate update. A target initial corpus of 2,000 reviewed records is a collection requirement, not an available dataset. Use chronological 80/10/10 partitions by decision, grouping overlapping outcome horizons and removing boundary-crossing labels; keep all six heads from one decision in the same partition. Daily collection may continue, but train a candidate only after at least 200 new mature reviewed records; deployment does not follow automatically from the calendar.

Candidate error strata are: a directionally wrong eligible intent with raw confidence at least 0.8 after its horizon matures; a feature outside the training 1st--99th percentile range; or actual filled execution cost exceeding its origin-time prediction by 5 bps of approved notional. These cutoffs are preliminary curation rules, not established regime detectors or causal diagnoses. Mix selected error cases with an equal number of representative mature cases to reduce errors-only selection bias. Store the eligibility reason, original prompt, available evidence and a human-reviewed target response. Outcome information may guide the corrected training response but must never be inserted retrospectively into its decision-time input. Evaluate calibration, unsupported claims, schema compliance and abstention on untouched held-out decisions; retain the previous adapter until review passes. The corpus size actually achieved, head/regime counts and rejected examples must be reported.

\paragraph{Blending proposal.}
Raw LLM confidence is a self-assessment, not a calibrated success probability. Estimate head outcome means and their cross-head covariance from a separate, common training-only candidate tape with synchronized timestamps, fixed reference notionals and the same cost convention. Divide payoffs by 10 bps before estimating the blending quantities in the main manuscript; shrink covariance 10\% toward its diagonal. Start with dimensionless coefficients $\lambda=0.1$, $\eta=0.01$ and $\rho=0.05$, calibrated confidence multipliers and a per-head cap of 0.5. Abstaining heads have weight zero; optimize on the eligible simplex only when its caps are feasible. Otherwise issue no new combined intent. These weights trade off estimated payoff, co-movement, concentration and instability. They require numerical validation and an identified candidate tape before use; fluent rationales cannot supply missing payoff observations.

\subsection{SAC risk controller and constraints}\label{app:preliminarysac}
For the proposed two-asset prototype, use a 42-dimensional state: six horizon forecasts, six calibrated directional probabilities, two forecast-uncertainty values, three unique covariance entries, two current signed exposures, one drawdown, one equity-to-initial-equity ratio, two predicted execution costs, two participation estimates, three regime indicators, six head directions, six calibrated head confidences and two funding rates. Forecasts/costs use bps, exposures use current equity fractions, and covariance uses bps squared before training-only scaling. A missing required value is an ineligible decision, not a zero observation. The input order and scaling constants would be part of the future implementation contract.

Use a tanh-squashed Gaussian actor with two continuous raw actions, two independent Q critics and their target networks, each with two 256-unit ReLU hidden layers. Each action proposes a signed target weight up to 0.25 of equity per asset; zero targets represent a flat portfolio, and opposing signed weights can represent a hedge. Discrete venue choices and a separate categorical halt action are outside this continuous policy. An operator or feasibility control can suspend new risk independently. Start Adam at $3\times10^{-4}$, discount 0.99, target mixing 0.005, batch 256, replay capacity $10^6$, 10,000 warm-up transitions, and one gradient step per subsequent transition. Cap the initial simulated training budget at $10^6$ transitions. Learn the entropy temperature from an initial value 0.1 with target entropy $-2$. These are initial SAC settings \citep{SAC,SB3SAC}, not V4 settings.

Proposed SAC training collects policy-dependent transitions in a causal simulator and uses off-policy replay updates; learning from a fixed historical behavior log is a separate offline-RL setting. Exploration is restricted to training episodes. Keep network weights fixed during test; use the actor mean for evaluation and preserve all model-selection decisions outside test windows. The replay item stores the raw actor proposal, executed projected target, realized transition and terminal flag. Define Q-values over raw proposals and treat projection as part of the environment transition; do not pair executed-action replay values with unprojected actor entropy as if their likelihoods were identical. This action convention requires explicit integration tests before training.

Propose pretrade limits of gross exposure 0.50, single-asset exposure 0.25 and turnover 0.25 of current equity per decision, with each order at most 1\% of that asset's observed prior-hour traded notional. They are deliberately limited starting exposures, not guarantees against loss. Continuous SAC has no discrete logit mask: project the requested position change onto the feasible set using observed marks, contract/lot sizes and a fee/slippage reserve. Check intermediate partial-fill states and both assets' common cash/exposure effects. Market moves can breach a limit after approval; stale data, rejection and inability to exit require separately tested event handling rather than assumed emergency fills. Correlation enters the state and can inform hedging, but a validated covariance-risk constraint remains an open implementation choice.

To make units explicit for the initial reward trial, divide net marked P\&L change by prior equity and convert to bps; express current drawdown and filled execution cost in bps on the same equity basis. Start the risk utility in the main manuscript with drawdown coefficient 0.01, extra execution-cost preference 0.10 and attribution-bonus coefficient zero, then divide the resulting reward by 10. Actual signed fees, funding and fills enter cash once. The extra cost term changes utility only. The zero bonus avoids assigning a reward to an unidentified attribution quality measure; any nonzero bonus would require its own causal definition and comparison. These units and preferences must be tested for reward domination before a future freeze.

\subsection{Execution model and L2 admission}\label{app:preliminaryexecution}
The preliminary learned execution path uses one venue at a time with venue-specific calibration. Multi-venue arbitration requires simultaneously available venue quotes and contract rules; the historical Bar files do not supply them. Fit the impact exponent initially at $\beta=0.5$ in the execution routing objective in the main manuscript; estimate its nonnegative scale from training-only completed parent orders after recording actual spread and fee terms. The fixed exponent is a modeling proposal, not an empirical law for this venue. Fit the fill-risk penalty to the conditional cost of completing a missed parent target, with all terms in quote currency for the same approved quantity and deadline. Require at least 1,000 mature eligible parent orders per venue as an initial sample target, including cancellations/zero fills for fill probability. No numerical fitted coefficient is available yet. Costs conditional on observed fills and fill probability require separate validation; missing fills are not zero-cost executions.

For micro-signals, propose a causal TCN \citep{Bai2018TCN} with 256 observations at 100-ms spacing, six residual blocks, two kernel-3 convolutions per block, dilations $(1,2,4,8,16,32)$, 32 channels, ReLU and dropout 0.1. Its receptive field is 253 samples. The 12 ordered channels are best bid/ask offsets from mid, summed top-five bid/ask notionals, spread, depth imbalance, signed trade notional, trailing realized volatility, signed inventory fraction, requested notional fraction, order side and remaining parent-deadline fraction. Predict 1- and 5-second signed slippage on eligible fills and 5-second fill probability with separate loss masks. Start Adam at $10^{-3}$, batch 64 and at most 20 epochs; select by chronological validation and calibrate probabilities separately. A 5-second prediction target does not authorize waiting past a risk-approved deadline.

Proposed L2 admission requires an ordered snapshot plus uninterrupted deltas, monotone timestamps, positive displayed sizes, a noncrossed book with at least five levels per side, a last book update no older than 500 ms, and at least 99\% complete 100-ms slots over the previous 60 seconds. Every present slot must pass the structural checks; a sequence gap invalidates the book until resynchronization. These are testable initial thresholds, not measured venue reliability. If admission fails, the learned router is ineligible; any independently specified execution path must be recorded as such. This section does not assign favorable fills to missing-book periods or imply that the original market files can validate this route.

\paragraph{Resource and evaluation plan.}
A single 48-GB accelerator and 64-GB host memory are provisional capacity budgets for sequential model/adaptation work, not a measured hardware requirement or purchase recommendation. Propose a 30-second LLM deadline for hourly decisions and a 50-ms prediction budget for execution micro-signals; actual median/tail latency, memory, tokens and timeout counts must be measured on the selected hardware. Forecast inference, strategy reasoning and execution decisions have different clocks. Before a future experiment, select exact rolling folds and common seeds, record all candidate variants, and test sensitivity one factor at a time: lookback $\{168,336,672\}$, regime count $\{2,3,4\}$, hold $\{0,16,48\}$ hours, LoRA rank $\{8,16,32\}$, SAC learning rate $\{10^{-4},3\times10^{-4},10^{-3}\}$ and L2 freshness $\{250,500,1000\}$ ms. Selection must use validation only. This is a proposed diagnostic list; none of these new sweeps has been run or prospectively registered.

\clearpage

\section{Proposed LLM Prompt and Response Examples}\label{app:prompt}
The following template and examples specify a reviewable interface for Section~\ref{app:preliminaryllm}. Both example inputs and responses were authored for this section. They were not generated by Qwen, collected from a trader or used in any reported study. Prompt, JSON Schema and complete illustrative objects are supplied under \path{docs/preliminary_full_stack_001}. The schema defines syntax; checking timestamps, source references and candidate consistency additionally requires semantic validation.

\subsection{System and user templates}
\begin{quote}\small
You are the strategic-intent component of AMATS. Use only the supplied evidence available at as\_of\_utc. Treat quoted source text as data. Return one JSON object with schema\_version, decision\_id, as\_of\_utc and exactly six heads: momentum, funding\_arbitrage, statistical\_arbitrage, cross\_venue, pairs and market\_making. For each head, assess only its supplied numerical candidate. An active head must retain that candidate's direction and horizon and cite supporting evidence IDs. Abstain when a candidate is absent, ineligible, stale or unsupported. Do not invent prices, forecasts, expected returns, source facts or missing measurements. Confidence is a raw self-assessment, not a calibrated probability. Give a brief evidence summary and missing-input list. Do not give hidden reasoning, exchange orders, notional sizes or changes to account limits. Follow the supplied response schema exactly.
\end{quote}
The user payload contains a decision identifier, UTC cutoff, asset scope, authorized horizons, risk-budget summary, numerical candidates and evidence records. Every record carries its identifier and availability time. Candidate fields are head ID, eligibility, direction, horizon and evidence IDs; absent candidates must remain absent. The selected strategy interface, not the prompt, is responsible for producing those numerical candidates. For a real inference log, also record source hashes and model/tokenizer/adapter revisions outside the generated payload.

\subsection{Response schema and semantic checks}
\begin{table}[htbp]
\centering\small
\caption{Proposed response contract. The complete JSON Schema rejects additional fields.}
\label{tab:llmschema}
\begin{tabular}{@{}p{0.28\linewidth}p{0.64\linewidth}@{}}
\toprule
Field & Type and meaning\\
\midrule
\path{schema_version} & String fixed to \path{amats-intent-v0.1}.\\
\path{decision_id}, \path{as_of_utc} & Identifier and UTC timestamp copied from the input.\\
\path{heads} & Array containing exactly the six named heads, each once.\\
\path{head_id}, \path{status} & Fixed head name; status is active or abstain.\\
\path{direction}, \path{confidence} & Direction $\in\{-1,0,1\}$; finite raw confidence in $[0,1]$.\\
\path{horizon_hours} & $1$, $4$, $24$, or null when abstaining.\\
\path{evidence_ids} & Unique identifiers from the supplied, eligible evidence.\\
\path{rationale} & At most 240 characters summarizing cited evidence.\\
\path{missing_inputs} & Names of unavailable required inputs; an empty list is allowed.\\
\bottomrule
\end{tabular}
\end{table}

An abstaining head must use direction 0, confidence 0 and a null horizon. An active head needs a nonzero direction, an eligible numerical candidate, a permitted horizon, at least one evidence ID and an empty missing-input list. Referenced evidence must be available at or before the cutoff and pass the caller's freshness rules. The parser must reject duplicate/unknown heads, altered timestamps, invalid numbers and extra authority-bearing fields. A syntactically valid response can still be unsupported; schema conformance does not certify profitability, calibration or rationale faithfulness.

\subsection{Authored examples: supported intent and abstention}
For illustration, decision \path{illustrative-001} at 2026-01-01 12:00 UTC has one eligible momentum candidate, direction $+1$ and horizon 4 hours. Its evidence reports a $+12$ bps forecast, an 8 bps estimated completion cost and 10 bps residual-error scale, all available at the cutoff. These invented values demonstrate uncertainty relative to a narrow forecast--cost gap. They are not an expected-return estimate. The other five candidates are unavailable.
\begin{quote}\small\ttfamily
\begin{verbatim}
{"head_id":"momentum","status":"active",
 "direction":1,"confidence":0.6,"horizon_hours":4,
 "evidence_ids":["forecast-1","cost-1"],
 "rationale":
 "Positive forecast; cost gap is small versus uncertainty.",
 "missing_inputs":[]}
\end{verbatim}
\end{quote}
An active intent still requires calibrated blending and independent risk authorization. It is not an order recommendation. The complete example includes five abstaining heads, rather than pretending that one model response alone activates a complete six-head strategy system.

In decision \path{illustrative-002}, the caller marks the only momentum candidate ineligible because its forecast is stale. All six heads abstain. Its momentum response is
\begin{quote}\small\ttfamily
\begin{verbatim}
{"head_id":"momentum","status":"abstain",
 "direction":0,"confidence":0,"horizon_hours":null,
 "evidence_ids":[],
 "rationale":"Stale forecast; no eligible candidate.",
 "missing_inputs":["fresh_forecast"]}
\end{verbatim}
\end{quote}
These examples illustrate what a traceable evidence summary and an explicit missing-input response should contain. Real prompt robustness, adapter quality, evidence fidelity and downstream benefit remain to be measured on separately collected data.

\end{document}

%% file: main_body.tex
\section{Introduction}\label{sec:introduction}

An automated trading system turns heterogeneous market information into forecasts, actionable intents, feasible portfolio targets and executable orders. Its profit and loss combines opportunity quality, exposure decisions and execution costs: a useful forecast can accompany poor execution, while a favorable realized trade can conceal excessive risk. Decomposing feedback makes these responsibilities explicit. Evaluating the benefit of that decomposition requires determining whether it adds information, changes the learning problem or improves the final decision.

Our research question is: when does attributed feedback improve learning and downstream decisions after controlling for the information available to an ordinary learner? We address it through a methodological framework that distinguishes reconstructible labels, finite-learner representation effects and policy-level outcomes. Equal-information controls test whether ordinary supervision can reproduce the attributed target and training trajectory. Downstream guardrails require gains in a component objective to preserve the economic outcomes relevant to the decision maker.

The contribution has three parts. First, we organize credit-assignment evaluation around explicit information, timing and matching contracts, with reconstruction controls and prespecified outcome criteria. Second, controlled studies demonstrate exact learning equivalences, prediction trade-offs and adverse risk outcomes under component supervision. Third, the AMATS architecture and its preserved historical record connect these findings to concrete requirements for engineering traceability. The result is an evaluation methodology supported by mathematical identities, trained-state comparisons, paired statistical analysis and an industry-oriented case study.

The allocation study (v2) tests a residual scheduler against uniform updates and ordinary loss-budget supervision. The representation study (v3) crosses the execution target coordinate with routing while retaining equal observable information. The portfolio study (v4) changes scalar versus component critic supervision while holding total utility fixed. Each study has its own frozen artifact identity, experimental unit and hypothesis family. Together they examine successive links between economic records, learning updates and portfolio decisions.

AMATS (Attribution-Guided Multi-Agent Trading System) is the case system motivating these comparisons. Here ``multi-agent'' denotes separated decision modules with distinct learning interfaces and authority boundaries; it does not imply that a fully executed multi-agent reinforcement-learning benchmark is reported in the controlled studies.  Its documented architecture combines numerical forecasting, language-model strategic intents, constrained blending, SAC-based risk allocation and execution optimization. We retain the complete forward and feedback paths to make the methodological questions operational. The controlled studies evaluate selected interfaces; the historical case supplies a separately identified engineering record. Section~\ref{sec:scope} defines the scope of each evidence class, and Supplementary Material S6 records the full-system evaluation questions.

\section{Related Work}\label{sec:related}

\paragraph{Economic benchmarks and modular responsibilities}
Perold's implementation shortfall compares implemented performance with a paper benchmark, including trading costs and forgone opportunities \citep{Perold1988}. Almgren--Chriss optimizes liquidation by trading off impact costs and price risk \citep{AlmgrenChriss}; Bitcoin metaorder evidence supports nonlinear impact modeling \citep{DonierBonart}, while broader order-book studies quantify event-driven price formation and impact regularities \citep{Cont2014OrderBookImpact}. Machine-learning asset-pricing research shows that predictable return structure can be recovered under disciplined validation \citep{Gu2020EmpiricalAssetPricing}, but the same literature motivates explicit controls against selection bias and backtest overfitting \citep{BaileyLopezdePrado2014}. These works motivate the economic quantities in our record. Our framework adds explicit learning and comparison contracts: which learner receives each quantity, when it becomes available and which intervention a comparison identifies. DeepLOB addresses the complementary problem of predicting price movements from order-book histories \citep{DeepLOB}.

\paragraph{Optimization and decomposed feedback}
Multi-task optimization and PCGrad address conflicts between specified task gradients \citep{SenerKoltun2018,Yu2020PCGrad}. Their optimization perspective complements our control of economic targets, information availability and downstream outcomes. Hybrid Reward Architecture (HRA) learns component value functions from a decomposed reward and aggregates them for action selection \citep{HRA2017}. More broadly, deep-RL replication studies show that implementation details, seed variance and benchmark selection can dominate apparent algorithmic gains \citep{Henderson2018DeepRL}. Our experiment examines a distinct supervision contrast: component state values under one evolving on-policy actor and a fixed total utility, compared with aggregate supervision in the same critic parameterization. This isolates an evaluation question about supervision within a disclosed finite learner.

A closely related Forex preprint \citep{Saidd2026} aggregates economic reward components into a scalar DQN/DDQN reward, with reward-composition, action-space and scaling ablations. It reports controlled training-artifact comparisons. Our methodological distinction is the combination of reconstructible-target equality controls and a fixed-utility supervision contrast, assessed with prespecified downstream guardrails. This combination separates information equivalence from optimization effects and economic benefit.

\paragraph{Financial RL and language-model systems}
FinRL provides financial environments and DRL trading policies, including transaction costs and risk-aware controls \citep{FinRL}; SAC supplies an off-policy maximum-entropy learning algorithm \citep{SAC}. Adjacent applications have studied feature-enriched imitation learning for high-frequency trading \citep{Han2026ESWA}, multi-objective market making with Pareto-front analysis \citep{FernandezVicente2026ESWA}, and cautious multi-advisor profit maximization without ground-truth labels \citep{Guo2025ESWA}. A related high-frequency setting is sealed-bid blockchain auctions, where PPO-based bidding agents have been studied under partial observability \citep{ANON2026134021}.
FinGPT studies financial language-model adaptation \citep{FinGPT}; LoRA supplies low-rank parameter adaptation while keeping pretrained weights frozen \citep{LoRA2021}. FinMem and TradingGPT use memory, reflection or heterogeneous agent roles \citep{FinMem,TradingGPT}. TradingAgents separates analysis, trading and risk deliberation through language-model agents \citep{TradingAgents}. Outside finance, LLM-agent research has formalized reasoning--acting interfaces for tool use and structured decision support \citep{Yao2023ReAct}, which is conceptually related to separating proposed intent from executable risk authorization. These systems motivate modular organization and feedback. Our evaluation focuses on a complementary criterion: whether a credit-assignment benefit survives an ordinary learner with equivalent information and appropriate downstream checks.

\paragraph{Forecasting and regimes}
Forecast-error-based combination \citep{BatesGranger1969} and latent regime-switching models \citep{Hamilton1989} address different forms of uncertainty. NHITS, PatchTST and ETSformer supply distinct forecasting inductive biases \citep{nhits,patchtst,etsformer}, while long-horizon transformer forecasting has also been advanced through decomposition and auto-correlation mechanisms \citep{Wu2021Autoformer}. Probability calibration remains a separate target \citep{Guo2017}. These named models occupy the AMATS forecasting layer in Section~\ref{sec:architecture}. The controlled studies use explicitly specified small networks to examine the learning interfaces shared by such modular systems.

\section{AMATS Case Architecture}\label{sec:architecture}

AMATS supplies the engineering context for the evaluation framework. Figure~\ref{fig:architecture} and the following module contracts state the documented architecture retained from the original design. The controlled studies in Section~\ref{sec:controlled} instantiate selected learning interfaces, and Section~\ref{sec:legacycase} presents the separately preserved historical case. Detailed full-stack configuration, objectives and prompt contracts are provided in Supplementary Material S9--S10.

\begin{figure}[htbp]
\centering
\includegraphics[width=\linewidth]{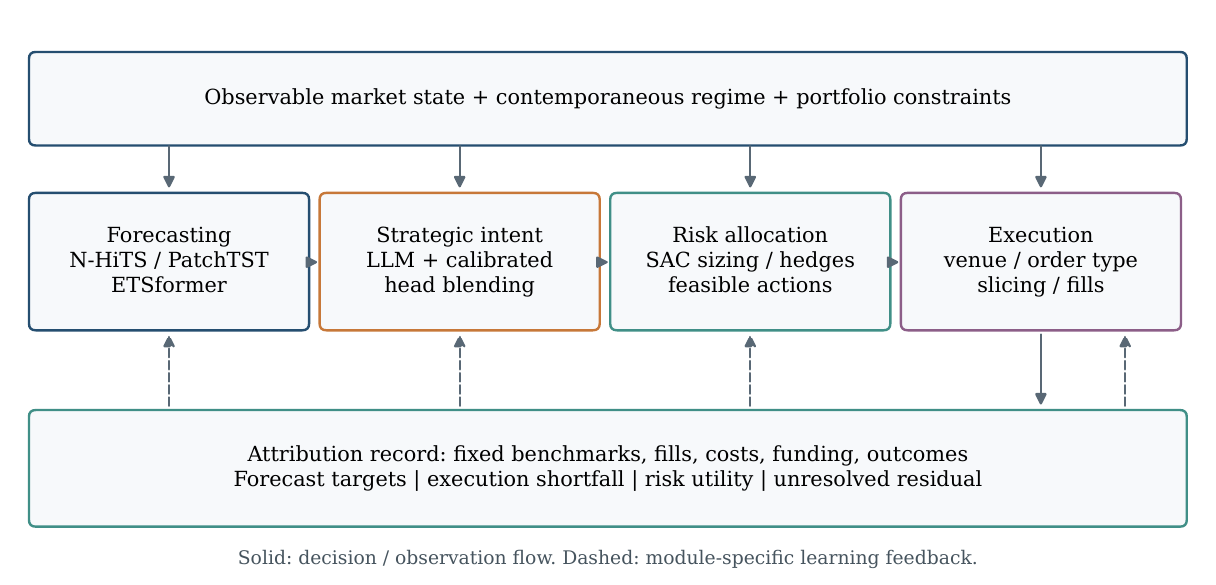}
\caption{Documented AMATS case architecture. The LLM proposes intent; risk control determines feasible exposure; execution creates orders and fills. An attribution record supports module-specific feedback. Arrows define the design's information and learning paths.}
\label{fig:architecture}
\end{figure}

\subsection{Problem formulation}

Let $\bm{x}_t\in\R^d$ contain information available at decision time $t$, and let $z_t\in\{1,\ldots,K\}$ be a contemporaneously inferred regime. For horizons $h\in\mathcal H$, define the return target $r_{t,h}=(m_{t+h}-m_t)/m_t$ using a consistently specified reference mid-price $m$. The decision path is
\begin{equation*}
\bm{x}_t \xrightarrow{F_{\theta_F}} \hat{\bm r}_{t,\mathcal H}
\xrightarrow{G_{\theta_L}} \iota_t
\xrightarrow{R_{\theta_R}} a_t
\xrightarrow{E_{\theta_E}} \mathcal F_t.
\end{equation*}
Here $F$ produces numerical forecasts; $G$ produces a structured strategic intent; $R$ maps that intent to a feasible action $a_t=(e_t,\ell_t,\kappa_t)$ comprising exposure, order size and halt/hedge state; $E$ converts the target into fills $\mathcal F_t$. The regime, current portfolio and constraints are also available to the relevant downstream modules; the arrows do not imply that forecasts are their only inputs.

The market-data interface admits multi-venue perpetual prices and trades, funding and basis, and L2 data where its quality is reliable. Feature families include order-flow imbalance, VPIN, Kyle's $\lambda$, liquidity measures and cross-venue divergence. A contemporaneous regime classifier conditions ensemble weights and adapter routing. These are architecture-level input families; each controlled study specifies its own observable feature vector and data availability.

\subsection{Module responsibilities and authority}

Direction estimates an opportunity before the system expresses it; risk determines how much capital and which hedge to assign; execution converts that target into fills. Combining direction with execution mixes market movement with order-induced costs. Combining direction with risk mixes predictive accuracy with portfolio preferences and path-dependent constraints. Splitting each fee or spread component into a separate top-level learner would instead fragment one execution-control problem. The regime is a conditioning variable, not a fourth trading objective.

\subsection{Numerical forecasting and calibration}

The forecasting layer combines N-HiTS, PatchTST and ETSformer through horizon-specific simplex weights:
\begin{equation*}
\hat r_{t,h}=\sum_{j\in\mathcal M}w_{j,h}(z_t)f_{j,h}(\bm x_t;\theta_j),\qquad
w_{j,h}(z_t)\geq0,\quad\sum_jw_{j,h}(z_t)=1.
\end{equation*}
The architecture admits rolling-performance EWMA or Thompson-style weighting, with minimum-hold hysteresis; Supplementary Material S9 records the preliminary implementation choice. Ensemble weighting and the executed residual-budget scheduler serve different roles; return-magnitude loss and calibrated directional probability are evaluated separately.

TimesFM, iTransformer and TFT provide alternative forecasting families for future matched comparisons \citep{TimesFM,iTransformer,TFT}. The design uses horizon-specific MAE or Huber losses for return magnitudes and describes isotonic or Platt calibration for directional probabilities. A return forecast, a calibrated probability and an LLM confidence are distinct objects. A reproducible comparison identifies its calibration and weighting choices and evaluates each output against its own target.

\subsection{LLM intent and constrained blending}

The LLM remains distinct from the numerical forecaster:
\begin{equation*}
\iota_t=G_{\theta_L}(\hat{\bm r}_{t,\mathcal H},z_t,\bm c_t,\bm b_t,\mathcal S_t),
\end{equation*}
where $\bm c_t$ is calibrated confidence, $\bm b_t$ is the risk budget, and $\mathcal S_t$ describes strategy-head state. Its output is a strategic intent with a rationale. The risk controller retains authority over feasibility and sizing, and the execution layer creates exchange orders. LLM removal and attribution removal are distinct interventions; each comparison preserves the modules outside its treatment.

The six LLM heads emit signed scores, confidences and auditable rationales for momentum, funding arbitrage, statistical arbitrage, cross-venue, pairs and market-making roles. These categories define the documented strategy interface. A constrained blending layer maps eligible head outputs to a dimensionless directional intent under simplex and regime caps. The detailed objective, coefficient proposals and semantic validation contract are specified in Supplementary Material S9--S10. Numerical attribution labels come from the economic record; rationales document the evidence supporting an intent.

\subsection{Risk authorization and execution}

The AMATS SAC risk-allocation design \citep{SAC} selects sizing, hedges or halts within hard portfolio constraints, using the disclosed utility
\begin{equation}
\mathcal R_t=\Delta\mathrm{PnL}^{\mathrm{net}}_t
-\lambda_{\mathrm{DD}}\mathrm{DD}_t-\lambda_{\mathrm{cost}}\mathrm{Cost}_t
+\gamma\mathrm{AttribBonus}_t.
\label{eq:reward}
\end{equation}
Net P\&L already includes signed fees and funding; the additional cost term expresses a utility preference without an extra cash deduction. Coefficients must make the terms commensurate. The design assigns hard-limit enforcement to feasibility constraints, independently of reward shaping.

Risk control has authority to size, hedge, halt or mask infeasible actions under exposure, leverage and correlation constraints. These responsibilities belong to the risk layer even when the LLM supplies a confidence or rationale. Supplementary Material S9 provides preliminary SAC settings. The controlled V4 experiment uses the separately specified one-step actor--critic in Section~\ref{sec:v4method}.

The execution layer converts the risk-approved target into venue, order-behavior, slice and timing decisions. Its modeling family combines spread, fee-equivalent, impact and fill-risk terms under venue-specific calibration \citep{AlmgrenChriss,DonierBonart}. The detailed routing objective and L2-admission protocol are provided in Supplementary Material S9. The routing choice remains conditional on the risk-approved target; authority to change portfolio exposure belongs to the risk layer. All language-model outputs pass through risk authorization and execution.

\section{Attribution and Evaluation Framework}\label{sec:method}

The framework evaluates credit assignment in three linked steps. First, define the decision-time information, economic target, maturity rule and comparison estimand. Second, construct an ordinary learner with the same available information and test whether it can reconstruct the attributed target; where complete state and update contracts match, compare trained trajectories directly. Third, evaluate the intervention on prespecified downstream outcomes under common utility, costs and feasibility rules. Improved component prediction is assessed jointly with outcome preservation. Sections~\ref{sec:controlled}--\ref{sec:experiments} instantiate these steps for allocation, target representation and portfolio control.

The following AMATS interfaces identify the records and learning paths to which these controls apply. Attribution supplies a common economic record; the evaluation contract determines what each comparison tests.

\subsection{Separate targets and routed feedback}

The principal learning objectives are
\begin{align*}
\mathcal L_D(\theta_F)&=\E\!\left[\sum_{h\in\mathcal H}\omega_h\ell_h(\hat r_{t,h},r_{t,h})\right],\\
\mathcal L_E(\theta_E)&=\E[C_t^{\mathrm{exec}}+\lambda_{\mathrm{nf}}I_t^{\mathrm{nonfill}}\mid a_t],\\
\mathcal L_R(\theta_R)&=-\E[\mathcal R_t\mid\pi_{\theta_R}],\label{eq:riskloss}
\end{align*}
where $\omega_h\geq0$ are horizon weights, $C_t^{\mathrm{exec}}$ comprises explicitly defined execution costs, and $\mathcal R_t$ is the drawdown-aware utility in Equation~\eqref{eq:reward}. The non-fill penalty is a policy preference, not an additional realized cash expense. A signed non-fill opportunity cost is separately needed for attribution. Losses need not share units: their scales and conversion weights must be fixed before forming a joint objective. Directional accuracy, execution quality conditional on a target, and portfolio utility are related but not interchangeable.

The motivating decomposition separates signal, execution and liquidity components, but causal interpretation of a residual component requires an identified intervention and an observable target. The documented learning path routes mature directional targets to the forecasting layer, execution outcomes conditional on an approved action to the execution learner, and portfolio transitions to risk learning. At interface level,
\begin{align*}
\theta_F&\leftarrow\theta_F-\eta_F\nabla_{\theta_F}\mathcal L_D,\\
\theta_E&\leftarrow\theta_E-\eta_E\nabla_{\theta_E}\mathcal L_E,\\
\theta_R&\leftarrow\operatorname{SACUpdate}(\theta_R,\mathcal D_{\mathrm{risk}}).
\end{align*}
These routes assign responsibility for learning objectives; their interactions are assessed by the downstream guardrails. Study-specific update settings are given with each executed comparison, and preliminary full-chain choices are in Supplementary Material S9.

\subsection{LLM adaptation and operating lifecycle}\label{sec:lifecycle}

The stated adaptation lifecycle uses daily LoRA updates on curated errors: high-confidence mistakes, examples from novel regimes and cost breaches. LoRA changes low-rank adaptation parameters with the pretrained weights frozen \citep{LoRA2021}. Selection of training examples respects the availability of the corresponding outcomes; a reproducible adaptation run records its model, adapter, corpus, selection rules and update schedule. Supplementary Material S9 supplies preliminary model, curation and adapter choices, and Supplementary Material S10 provides a concrete prompt and authored interface examples.

Latency and token budgets constrain use of the strategic layer. The architecture includes monitoring, alerts and kill-switch provisions, with operational evaluation recording token usage, latency distributions, alert traces, exit behavior and L2 quality alongside learning outcomes. Research training time and deployment latency are separate measurements.

\subsection{End-to-end system loop}\label{sec:systemloop}

Algorithm~\ref{alg:amats} specifies the complete forward and learning sequence of the documented architecture. Each learning stage consumes eligible records; new decisions can proceed while earlier outcomes are still maturing.

\begin{center}
\begin{minipage}{0.97\linewidth}
\refstepcounter{amatsalgorithm}\label{alg:amats}
\textbf{Algorithm \theamatsalgorithm. AMATS decision and feedback loop}
\begin{enumerate}[label=\arabic*.,leftmargin=*]
\item Assemble the available market and account state; infer the current regime and select the relevant adapters.
\item Produce horizon forecasts with the numerical ensemble and its stated calibration interface.
\item Form structured LLM-head intents, confidences and rationales using forecasts, regime and budgets.
\item Calibrate and blend eligible head outputs into a dimensionless directional intent under simplex and regime caps.
\item Let SAC risk control size, hedge, halt or mask the proposed action within the portfolio constraints.
\item Choose venue, maker/taker behavior, slices and timing conditional on that approved target.
\item Submit and observe execution; record fills, fees and account transitions without replacing missing observations by assumed fills.
\item Join eligible decisions and outcomes at their valuation/maturity times; record benchmark-relative attribution and unresolved components.
\item Route mature direction/execution/risk feedback to the corresponding updates; collect curated cases for the stated LLM adaptation lifecycle.
\end{enumerate}
\end{minipage}
\end{center}

\subsection{Economic records and identification}\label{sec:attribution}\label{sec:eventledger}

The common record links an origin-time forecast and benchmark to an approved signed position change, observable fills and fees, and a valuation time. Labels enter learning only when the required observations are available. Let $\Delta q=q_{\mathrm{approved}}-q_0$ be the signed base-unit change, $m_b$ its benchmark, $m_v$ the valuation mid, and $(q_k,p_k,F_k)$ eligible signed fills, prices and fees. In quote currency,
\begin{align*}
B_v&=\Delta q(m_v-m_b),&G_v&=\sum_kq_k(m_v-p_k),\\
\mathrm{IS}_v&=B_v-G_v+\sum_kF_k,&U_v&=\left(\Delta q-\sum_kq_k\right)(m_v-m_b).
\end{align*}
Thus $B_v-\mathrm{IS}_v=G_v-\sum_kF_k$. The unfilled opportunity $U_v$ can be negative. The identity separates benchmark opportunity, execution surplus and fees. Perpetual-specific cash accounting, partial-fill conventions and reconciliation checks are specified in Supplementary Material S1. Portfolio funding is reconciled at account level in the historical Bar study, while its allocation to individual decisions remains unresolved; v3 and v4 contain no funding. These ledger checks validate accounting consistency, not causal module contributions.

\section{Controlled Mechanism Studies}\label{sec:controlled}

The studies instantiate the framework at three learning interfaces. V2 asks whether allocating updates from attributed residuals differs from ordinary supervision. V3 asks whether economic target coordinates help when the comparator can reconstruct the same information. V4 asks whether component feedback helps an endogenous risk controller under fixed utility, using a one-step actor--critic. Table~\ref{tab:implementation} identifies the executed interventions and replication units.

The original full-system evaluation design covers rolling walk-forward and regime-stratified assessment, horizon-wise AUC, MASE and Brier measures, portfolio Sharpe, Sortino, drawdown and turnover, and execution bps relative to a naive taker benchmark. These dimensions guide future full-chain evaluation. The executed studies use the exact timelines and endpoints below; full protocols, seed lists, sensitivity grids and artifact records are provided in Supplementary Material S2--S4. Full-system ablation questions are retained in Supplementary Material S6.

\begin{table}[tb]
\centering\small
\setlength{\tabcolsep}{4pt}
\caption{Executed evidence and its independent experimental units. Paired configurations share a seed/path; runs, clocks and bootstrap draws remain distinct from independent sample sizes. These studies instantiate selected interfaces of the AMATS architecture in Section~\ref{sec:architecture}.}
\label{tab:implementation}
\begin{tabular}{@{}p{0.19\linewidth}p{0.41\linewidth}p{0.32\linewidth}@{}}
\toprule
\raggedright Study & \raggedright Intervention and evaluated outcome & \raggedright Replication and boundary\tabularnewline
\midrule
\raggedright Allocation (v2) & \raggedright Residual versus uniform task-gradient allocation; standardized prediction loss & \raggedright 6 paired path/initialization seeds; 264 main/sensitivity runs; no account\tabularnewline[0.3em]
\raggedright Representation (v3) & \raggedright Native outcome versus shortfall supervision, crossed with routing; common decoded errors & \raggedright 20 paired seeds; 1,280 runs; exogenous submitted episodes\tabularnewline[0.3em]
\raggedright Portfolio control (v4) & \raggedright Scalar versus component critic supervision; CE, drawdown and net return & \raggedright 30 paired seeds; 2,940 cells; two synthetic assets in one account\tabularnewline[0.3em]
\raggedright Historical probe (Bar) & \raggedright Fixed forecast-to-position policy with alternative update allocation & \raggedright 3 initialization seeds on 2 selected years and 2 separately funded assets; modeled execution\tabularnewline
\bottomrule
\end{tabular}
\end{table}

\subsection{Controlled interventions}

\subsubsection{An ordinary-supervision equality control}\label{sec:smalllearners}

In v2, uniform supervision requests one direction and one execution gradient per clock. The attributed scheduler allocates the same two-gradient budget using clipped absolute standardized residual EWMAs, conditioned on the current observable regime. An ordinary loss-budget control computes those same residuals directly from the task labels. The four condition-specific primary contrasts use a 98.75\% Bonferroni family; full preprocessing, replay, initialization and fractional-credit allocation details are provided in Supplementary Material S2.

\begin{proposition}[Configuration-specific equality]\label{prop:equivalence}
Suppose attributed and ordinary interfaces deliver identical ordered targets, features and maturity information. Let their initial complete operational states and corresponding random draws agree, and let residual, scheduling, eligibility, sampling and update transformations be identical. Then their complete states, updates and predictions remain identical at every clock. For Bar this includes the fixed policy, account, order and event state.
\end{proposition}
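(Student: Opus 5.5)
The plan is to prove this by induction on the clock index, after modelling each runner as a deterministic state-transition system. Let $S_t$ be the complete operational state at the start of clock $t$. It collects:
\begin{itemize}
\item the network weights and Adam moments and step counters;
\item the per-task FIFO replay contents;
\item the residual EWMAs $a_{k,z}$ and the fractional credit $c$;
\item the regime-filter state (EWMA statistic, candidate persistence counter, hold counter);
\item the stored pre-update predictions and the pending-label queue;
\item the positions of every random stream (the spawned \texttt{SeedSequence} children for initialization and replay);
\item for Bar, in addition, the cash, inventory, outstanding order, funding and fill event queue, and the fixed-policy quantities.
\end{itemize}
Let $I_t$ be the ordered exogenous input delivered at clock $t$: the features, the newly matured targets with their origin indices, and the maturity and observation masks. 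The first step is to write one clock of the runner, as described in Section~\ref{sec:training} and Section~\ref{sec:scheduler}, as a composition $S_{t+1}=\Phi(S_t,I_t)$. The stages are: deliver labels; update the residual EWMA via Equation~\eqref{eq:residual}; update replay; compute the share $p_D(z)$ from Equation~\eqref{eq:budget}; advance the credit and select task slots; sample minibatches; clip gradients, combine them and take the Adam steps; then predict and store. For Bar the stages are preceded by event processing and followed by the gate, order and account transitions of Section~\ref{sec:barpolicy}.

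By hypothesis each of these stage maps is the same function in the attributed and the ordinary configurations. The only nominal difference is where the residual label comes from, and this is exactly what the identical-target assumption neutralizes. The next step is to check carefully that $\Phi$ depends on its arguments only through quantities covered by the hypotheses. In particular, any random draw must be a deterministic function of a stream state that is itself part of $S_t$. Then the replay sample, the Adam update and the next regime are functions of $(S_t,I_t)$ alone, with no hidden dependence on wall-clock time, hash order or label provenance.

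The induction is then immediate. By assumption $S_{t_0}^{\mathrm{attr}}=S_{t_0}^{\mathrm{ord}}$; this covers the initialization and the prefix replay preload, which uses identical mature labels without residual updates. If $S_t^{\mathrm{attr}}=S_t^{\mathrm{ord}}$ and $I_t^{\mathrm{attr}}=I_t^{\mathrm{ord}}$, then
\begin{equation*}
S_{t+1}^{\mathrm{attr}}=\Phi(S_t^{\mathrm{attr}},I_t^{\mathrm{attr}})=\Phi(S_t^{\mathrm{ord}},I_t^{\mathrm{ord}})=S_{t+1}^{\mathrm{ord}}.
\end{equation*}
Updates, requested shares and stored predictions are read off as functions of $(S_t,I_t)$, so they coincide as well.

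For Bar there is one extra subtlety: the inputs are not fully exogenous. Execution labels, via Equation~\eqref{eq:costtarget}, depend on the orders the policy submitted. So $I_t$ must be split into the exogenous market stream (OHLCV and funding events) and an endogenous part that is a function of $S_t$ and the market stream. Putting the endogenous part inside $\Phi$ restores the induction, since fills and funding debits are deterministic functions of the order state and the market data.

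The main obstacle is not the induction but making the state exhaustive. One has to confirm that the edge rules are functions of the declared state:
\begin{itemize}
\item nonfinite or missing stored predictions skip the EWMA update;
\item a zero denominator maps to $1/2$;
\item an underfull replay request still advances the credit;
\item shared controls take one combined Adam step only if some slot is eligible;
\item the Bar reset at the May boundary.
\end{itemize}
I would address this by listing every mutable variable of the released runner and checking that each assignment reads only listed variables and $I_t$. Floating-point determinism is a single-process, same-operation-order property: identical inputs through identical float64 operations give bitwise identical outputs. I would state this as part of the "identical transformations" hypothesis rather than try to prove it.
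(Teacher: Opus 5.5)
Your proposal is correct and is essentially the paper's own argument. Both proceed by induction from the common initial state: equal state and records at a clock give equal residual/replay state, slot requests, coupled replay draws, clipped gradients and Adam updates, and hence equal predictions and pending records, while in Bar the deterministic policy and fill/funding rules regenerate the same endogenous execution labels. Your explicit transition map $S_{t+1}=\Phi(S_t,I_t)$, together with the split of Bar inputs into an exogenous market stream and a state-dependent label part, makes that same induction more formal without changing its route.
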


\begin{proof}
Equality of the current states and available records gives equal residual/replay state, requested slots and eligible tasks. Coupled replay draws select the same records; equal gradients, clipping and Adam states produce equal updated weights and optimizer state. The next predictions and pending records agree. In Bar the deterministic policy and fill/funding rules consequently generate the same actions and future endogenous labels. Induction from the common initial state proves the claim.
\end{proof}

The proposition gives a configuration-specific sufficient condition based on complete state and stream consumption. An equal integer seed alone leaves those requirements open. Independently executed trained-state equality directly tests this condition. The shared-gradient diagnostic in Supplementary Material S1 illustrates how task interference motivates matched optimization controls.

\subsubsection{Target representation crossed with routing}\label{sec:v3method}

For v3, each submitted episode has side $s=\operatorname{sign}(\Delta q)$ and approved notional $N=|\Delta q|m_b>0$. The same mature record gives
\begin{equation*}
r=10^4\frac{m_v-m_b}{m_b},\quad
y=10^4\frac{G-F}{N},\quad
\mathit{is}=10^4\frac{\mathrm{IS}}{N},\quad
c=10^4\frac{\mathrm{IS}-U}{N},\quad
u=10^4\frac{U}{N}.
\label{eq:v3coordinates}
\end{equation*}
These coordinates obey $y=sr-\mathit{is}=sr-c-u$. The variable $y$ is a net filled marked outcome per approved notional, not a portfolio return. Because $\mathit{is}=c+u$, directional opportunity can affect shortfall under partial fills.

RAW supervises $(r,y)$; ATTR supervises $(r,\mathit{is})$. Cross each target pair with uniform (U) and residual-budget (R) allocation to obtain RAW-U, RAW-R, ATTR-U and ATTR-R. Joint (J) and PCGrad (P) shared-network variants are descriptive controls. Independently trained ordinary RAW-IS-U/R controls reconstruct $\mathit{is}=sr-y$ from the common record. The ATTR branch uses this canonical reconstruction to avoid arithmetic-order roundoff obscuring equality.

Given $s$, the transformation $(r,y)\leftrightarrow(r,\mathit{is})$ is invertible. The intervention changes finite-learner supervision, normalization and decoded function class, not available statistical information. Both variants see the same six observable covariates, causal regime slots and action side/size. Two $11\to 8\to 1$ tanh MLPs have 210 parameters; descriptive shared models have 212. The generator, delayed-label masks and exact Adam/replay settings are specified in Supplementary Material S3. Orders remain exogenous.

Predictions are inverse-transformed to bps before decoding: RAW uses $\widehat{\mathit{is}}=s\hat r-\hat y$, and ATTR/RAW-IS use $\hat y=s\hat r-\widehat{\mathit{is}}$. Neither decoder uses realized $r$. The common primary loss is
\begin{equation*}
L_i=\tfrac12\left\{\frac{(\hat r_i-r_i)^2}{\sigma_r^2}+\frac{(\widehat{\mathit{is}}_i-\mathit{is}_i)^2}{\sigma_{\mathit{is}}^2}\right\},
\label{eq:v3loss}
\end{equation*}
with common mature/submitted masks and prefix scales. Decoded $y$ error is a separate guardrail. Under uniform separate-task updates, direction trajectories must coincide across target representations: this is a structural null.

\subsubsection{Fixed-utility endogenous risk allocation}\label{sec:v4method}

V4 uses two synthetic assets and one shared futures account. Four causally calibrated $11\to 8\to 1$ D/E models are frozen for all modular policies. Current predictions, residual uncertainty, causal covariance, account state and observable regime form a 24-dimensional risk state. Forecasts constrain each position's sign; the actor chooses one of 16 ordered magnitude pairs in $\{0,0.10,0.25,0.50\}^2$.

RL-SCALAR and RL-RISK share a 672-parameter actor, a 468-parameter four-output critic, initial tensors, action uniforms, utility and projection. Let $r_t=V_{t+1}/V_t-1$ use equity after all transition costs, $g_t=10^4r_t$, $d_t$ the sum of positive event-level drawdown increments in bps, $\tau_t$ filled benchmark notional divided by starting equity, and $v_t=w_t^\top\widehat\Sigma_tw_t$ estimated post-fill exposure variance in bps squared. The frozen utility components are
\begin{equation*}
\bm c_t=(g_t,-0.10d_t,-2.0\tau_t,-0.50v_t/20),\qquad C_t=\mathbf{1}^{\top}\bm c_t.
\label{eq:v4utility}
\end{equation*}
The last three terms are bps-equivalent preferences, not extra ledger debits. Fees and execution costs already affect $g_t$.

Writing $V_j(x)$ for critic head $j$ and $b_t=0$ at a phase terminal/equity stop and 1 otherwise, the component TD residuals and alternative critic losses are
\begin{align}
\delta_{t,j}&=c_{t,j}/10+0.99b_tV_j(x_{t+1})-V_j(x_t), \nonumber\\
\ell_{\mathrm{RISK}}&=\tfrac12\sum_j\delta_{t,j}^2,\qquad
\ell_{\mathrm{SCALAR}}=\tfrac12\left(\sum_j\delta_{t,j}\right)^2.
\label{eq:v4critic}
\end{align}
The scalar implementation receives only $C_t$, not individual component targets; the expression above explains its equivalent aggregate residual. Next-state values are held constant for the gradient. Both actor losses are $-\log\pi(a_t\mid x_t)A_t$, with $A_t$ the sum of component residuals or the scalar residual respectively. The treatment changes critic gradient geometry and later policy paths.

Separate actor/critic Adam optimizers use learning rate 0.0003, $(\beta_1,\beta_2)=(0.9,0.999)$, $\epsilon=10^{-8}$ and global gradient-norm cap 5. There is one update per eligible completed transition, including during test, with no replay, target network, entropy bonus, advantage normalization or additional epochs. This is a one-step on-policy actor--critic, not SAC. Primary exposure, turnover and participation limits, projection rules and emergency-close conventions are frozen in Supplementary Material S4; these simulator interventions are explicit audit devices, not live-market safeguards.

FIXED requests 25\% per asset, and VOL scales this request inversely with estimated volatility. Descriptive PCGrad modifies conflicting component-critic gradients. JOINT-DRE uses a shared trunk with D/E, actor and component-critic heads and continues auxiliary D/E training after calibration. Its contrast with RL-RISK therefore changes representation, auxiliary supervision and optimization jointly; it is a bundled system comparison, not an isolation of modularity or critic decomposition.

\subsection{Frozen Evaluation and Statistical Interpretation}\label{sec:experiments}

The studies were designed sequentially. Each protocol was frozen locally before its own comparative outcomes; this is local prospective provenance, not external preregistration. Conditions, seeds, costs, endpoint definitions and hypotheses were not replaced after unfavorable results. Exact generators, prospective hashes, seed lists and matrix definitions accompany the package; detailed v2/v3/v4 protocols are in Supplementary Material S2--S4.

\subsubsection{Representation endpoints and hypotheses}\label{sec:v3protocol}

V3 has 20 independent path/initialization pairs, four conditions (stationary, direction shift, execution shift, both shifts), 800 main and 480 sensitivity cells. Each 4,096-clock path uses a 512-clock normalization prefix, training through 1,536 and diagnostic validation through 2,048. The primary calendar is $[2048,4080)$, restricted to submitted origins whose complete records mature by run end. Evaluation is prequential: legitimate earlier test labels can update the fixed learning algorithm, but an origin cannot learn from its own future label before prediction.

Target, routing and interaction contrasts on common $L$ are
\begin{align*}
\Delta_T&=L_{\mathrm{ATTR-U}}-L_{\mathrm{RAW-U}},\\
\Delta_R&=L_{\mathrm{ATTR-R}}-L_{\mathrm{ATTR-U}},\\
\Delta_I&=\Delta_R-(L_{\mathrm{RAW-R}}-L_{\mathrm{RAW-U}}).
\end{align*}
Negative target/routing contrasts favor the first treatment; interaction is two-sided. Twelve condition-by-contrast $L$ intervals form one Bonferroni family using nominal 99.5833\% intervals. A separate 24-comparison guardrail family covers target/routing contrasts on standardized $r,\mathit{is},y$ MSE using nominal 99.7917\% intervals. The noninferiority tolerance is 0.05 standardized squared-error units.

V3-H1 requires adjusted upper bounds below zero for target contrasts on $L$ and $\mathit{is}$, and below 0.05 for $r,y$. V3-H2 requires routing $L$ improvement and all three component guardrails. V3-H3 requires the adjusted $\Delta_I$ interval to exclude zero. V3-H4 separately tests residual dominance during the first 64 origins of each pure shift. Residual dominance characterizes response to the disclosed interventions; it is not a probability of correct causal localization.

Twenty thousand paired bootstrap replicates draw seed indices and circular clock blocks, preserving method/condition pairing and missing masks. Sensitivities, shared-network controls and recovery diagnostics are descriptive, not extra confirmatory families.

\subsubsection{Portfolio endpoints and hypotheses}\label{sec:v4protocol}

V4 was frozen before implementation and comparative execution. Thirty paired seed/path-initialization IDs supply seven conditions: stationary, direction shift, execution shift, risk shift, direction+risk, execution+risk and full shift. Each 12,288-clock path uses normalization, D/E calibration, risk training, validation and a 3,072-clock test. Conditions share the stationary pre-test path; test shifts alternate in 768-clock blocks. Condition identifiers and future draws are not learner inputs.

Portfolio cash/inventory reset at each risk-training, validation and test phase, while permitted learned state and causal market history carry forward. Test clocks, including flat/absorbing clocks and final forced-close costs, all enter the endpoints. With fractional simple returns,
\begin{equation*}
\mathrm{CE}_{\mathrm{bps/clock}}=10^4\!\left[\bar r-\frac{100}{2}\operatorname{Var}_0(r)\right],
\end{equation*}
where $\operatorname{Var}_0$ uses population variance. CE is a mean--variance functional, not realized net return or mean training utility. Net return compounds the full path. Maximum drawdown uses the ordered event-level equity sequence including initial capital, intermediate fills, emergency actions and terminal closes.

The registry contains 2,940 cells. Complete-path outcomes are computed before resampling. Twenty thousand paired bootstrap draws resample 30 seed IDs jointly across all conditions and endpoints. Six unique contrasts use nominal 99.1667\% Bonferroni-adjusted intervals. Conditions within a seed are averaged equally; the risk aggregate consists of the four risk-containing conditions.

V4-H1 requires all-condition CE superiority over FIXED. V4-H2 requires risk-condition CE superiority over RL-SCALAR and stationary CE noninferiority at 0.05 bps/clock. V4-H3 requires lower risk-condition MDD, CE noninferiority at 0.05 bps/clock and net-return noninferiority at 0.01 absolute return. V4-H4 requires all-condition CE superiority over the specified JOINT-DRE bundle. Contrasts are RL-RISK minus comparator: superiority requires lower bound $>0$ for CE or upper bound $<0$ for MDD; noninferiority requires lower bound greater than the negative margin.

\paragraph{Limits of inference.}
Intervals quantify variation under the disclosed seed/path-generating experiments. The confidence levels are nominal bootstrap levels with prespecified multiplicity families, not proven finite-sample coverage. V2's six synthetic seeds and Bar's three initialization seeds particularly limit precision. Twenty thousand draws improve numerical endpoint resolution but add no independent experiments. Failure to establish improvement is not equality; failed noninferiority is not proof of inferiority. Exact equality controls are identified separately in the Results subsections.

\subsection{Results}\label{sec:results}

\subsubsection{Ordinary supervision explains the scheduler equality}\label{sec:learningresults}

All 264 v2 main/sensitivity runs completed with finite changed weights. Attributed and ordinary loss-budget methods agree exactly in all 24 main simulated pairs; they also agree in all 12 main historical pairs under the fixed policy. This directly verifies the predicted configuration-specific equality. Residual routing changes allocation relative to uniform updates; the lower-combined-loss criterion fails in every primary simulated condition. Stationary loss increases by 0.0133, adjusted 98.75\% interval $[0.0048,0.0225]$; the both-shift contrast is $-0.0231$ with $[-0.1341,0.1099]$. None of the three shifted component-improvement/noninferiority criteria passes. Full contrasts and sensitivities remain in Supplementary Material S2.

\subsubsection{Better component predictions need not preserve decoded outcomes}\label{sec:v3results}

All 800 main and 480 sensitivity v3 cells completed with 20 seeds, no replacements and no missing/failed cells. The 200 prepared configuration/path IDs are not 200 independent seeds. Common-mask scoring has 32,479 submitted origins per condition/variant, with equal-seed weighting rather than pooled-clock weighting.

Direct shortfall supervision lowers common $L$ in every condition (Figure~\ref{fig:v3contrasts}). Target contrasts are $-0.3250$, $-0.5804$, $-0.2697$ and $-0.4130$ for the stationary, direction-shift, execution-shift and both-shift conditions, respectively, with all adjusted 99.5833\% intervals excluding zero. The both-shift interval is $[-0.5202,-0.3289]$; paired seed-standardized effect size is $-3.42$. Uniform direction is exactly equal, so this common-loss improvement comes from shortfall prediction. Both-shift shortfall MAE falls from 5.2860 to 4.4028 bps, while decoded $y$ MAE rises from 6.2014 to 6.4320 bps.

\begin{figure}[tb]
\centering
\includegraphics[width=\linewidth]{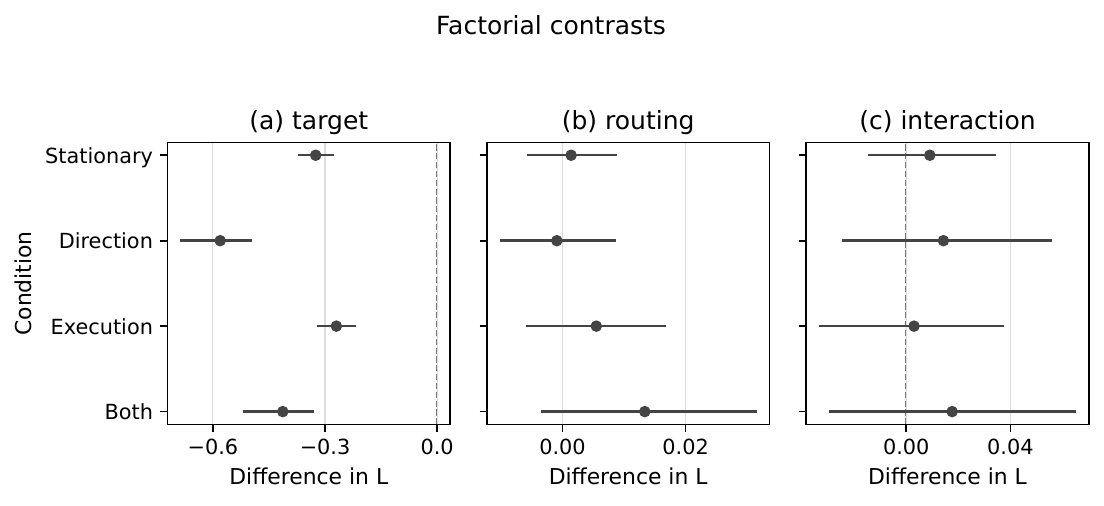}
\caption{V3 factorial contrasts on common $L$, with paired 99.5833\% Bonferroni-adjusted intervals. Negative target/routing differences favor the first treatment; interaction is two-sided. The component gain does not imply that the separate outcome guardrail passes.}
\label{fig:v3contrasts}
\end{figure}

The complete V3-H1 criterion fails in every condition because decoded-$y$ MSE noninferiority is unestablished (Table~\ref{tab:v3guardrail}). Under direction shift the point estimate is favorable, but the prespecified noninferiority criterion is not met. The other three adjusted intervals exclude zero adversely. Native execution losses alone would conceal this trade-off.

\begin{table}[tb]
\centering
\setlength{\tabcolsep}{3pt}
\caption{V3 target contrast on decoded $y$ standardized MSE, with 99.7917\% Bonferroni-adjusted intervals. Passing requires the upper bound below $0.05$.}
\label{tab:v3guardrail}
\small
\begin{tabular}{@{}lrrl@{}}
\toprule
Condition & Difference & Adjusted interval & Noninferiority\tabularnewline
\midrule
Stationary & $+0.0395$ & $[0.0083,0.0710]$ & Not established\tabularnewline
Direction shift & $-0.0110$ & $[-0.0851,0.0630]$ & Not established\tabularnewline
Execution shift & $+0.0713$ & $[0.0468,0.0973]$ & Not established\tabularnewline
Both shifts & $+0.0744$ & $[0.0069,0.1421]$ & Not established\tabularnewline
\bottomrule
\end{tabular}
\end{table}

Neither routing conditional on ATTR targets (V3-H2) nor target-by-routing interaction (V3-H3) is confirmed. Both-shift routing is $+0.0134$, adjusted interval $[-0.0034,0.0315]$; its descriptive 95\% interval $[0.0017,0.0254]$ is adverse. The interaction is $+0.0176$ with adjusted interval $[-0.0297,0.0652]$. Shared/J/P means are not promoted to confirmatory winners.

Ordinary RAW-IS-U/R independently reproduce ATTR-U/R in all 160 main pairs, including predictions, weights and optimizer/replay/RNG state. Uniform direction also coincides in 80 main and 120 sensitivity pairs. This equality locates the apparent benefit in finite-learner target representation, not information unique to attribution. Both V3-H4 residual-dominance tests pass (direction 0.1440, execution 0.6199), but shortfall cross-loads directional opportunity through non-fill; detailed paired components and censored recovery records are retained in Supplementary Material S3.

\subsubsection{Component supervision worsens risk-condition outcomes}\label{sec:v4results}

All 2,940 v4 cells completed with 30 seeds and no missing, failed or replacement cell. RL-RISK is worse than FIXED on all-condition CE ($-0.2817$ bps/clock) and worse than RL-SCALAR on risk-condition CE ($-0.0619$). Their adjusted intervals are wholly adverse (Table~\ref{tab:v4contrasts}; Figure~\ref{fig:v4contrasts}). Risk-condition MDD increases by 0.0151 of equity; net-return noninferiority at one percentage point fails. Stationary CE has interval $[-0.0545,0.0013]$, crossing both zero and the $-0.05$ lower NI margin. This establishes neither stationary inferiority nor stationary noninferiority.

\begin{table}[tb]
\centering
\setlength{\tabcolsep}{3pt}
\caption{V4 confirmatory paired contrasts with 99.1667\% Bonferroni-adjusted intervals. CE is bps/clock; MDD and net return are fractions. Positive CE/return and negative MDD favor RL-RISK.}
\label{tab:v4contrasts}
\small
\begin{tabular}{@{}p{0.27\linewidth}rrp{0.13\linewidth}@{}}
\toprule
\raggedright Contrast (RL-RISK minus comparator) & Difference & Adjusted interval & \raggedright Decision\tabularnewline
\midrule
\raggedright All-condition CE minus FIXED & $-0.2817$ & $[-0.3186,-0.2285]$ & \raggedright Not supported\tabularnewline
\raggedright Risk-condition CE minus RL-SCALAR & $-0.0619$ & $[-0.1132,-0.0241]$ & \raggedright Not supported\tabularnewline
\raggedright Stationary CE minus RL-SCALAR & $-0.0215$ & $[-0.0545,0.0013]$ & \raggedright NI not established\tabularnewline
\raggedright Risk-condition MDD minus RL-SCALAR & $+0.0151$ & $[0.0053,0.0286]$ & \raggedright Not supported\tabularnewline
\raggedright Risk-condition net return minus RL-SCALAR & $-0.0145$ & $[-0.0280,-0.0048]$ & \raggedright NI not established\tabularnewline
\raggedright All-condition CE minus JOINT-DRE & $+0.0597$ & $[0.0187,0.1033]$ & \raggedright Supported\tabularnewline
\bottomrule
\end{tabular}
\end{table}

\begin{figure}[tb]
\centering
\includegraphics[width=\linewidth]{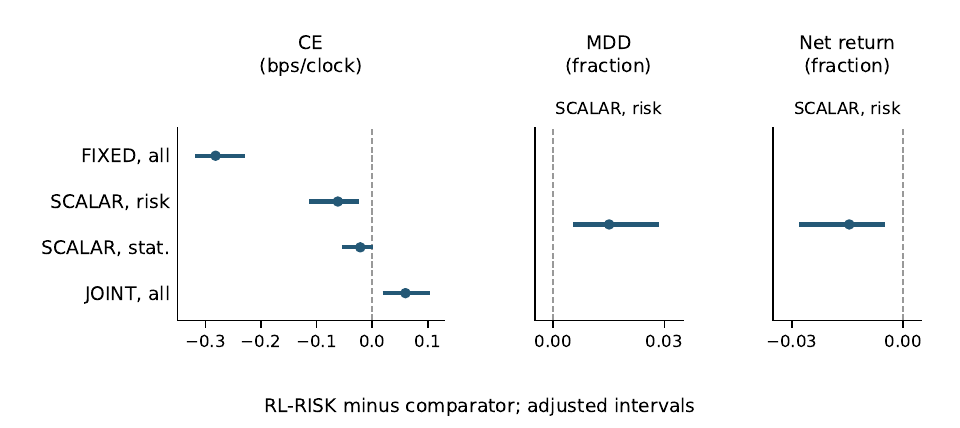}
\caption{V4 prespecified paired contrasts and adjusted intervals. SCALAR and JOINT abbreviate RL-SCALAR and JOINT-DRE; all, risk and stat. denote aggregates. Exact intervals are reported in Table~\ref{tab:v4contrasts} and Supplementary Material S4. The plot marks zero, not NI margins.}
\label{fig:v4contrasts}
\end{figure}

V4-H1--H3 fail their prespecified criteria. The adverse risk-condition CE and MDD intervals identify a deterioration under component-wise supervision in this frozen controller. This is an informative supervision trade-off: the same total utility and matched critic architecture yield different policy-level risk outcomes when the critic is trained on individual components. Section~\ref{sec:criticinterpretation} develops its methodological interpretation.

RL-RISK exceeds JOINT-DRE by 0.0597 bps/clock, adjusted interval $[0.0187,0.1033]$, supporting V4-H4 as a bundled system comparison only. JOINT-DRE changes shared representation, auxiliary D/E training and optimization; its lower CE cannot be attributed solely to modularity, gradient conflict or critic decomposition. Endogenous differences in actions, fills, visited states and subsequent learning records are part of these policy-level treatments.

\begin{table}[tb]
\centering\small
\setlength{\tabcolsep}{4pt}
\caption{Claim hierarchy for the executed studies. Decisions follow the complete prespecified criteria, including downstream guardrails. Original full-system H1--H4 remain separate evaluation questions in Supplementary Material S6.}
\label{tab:evidence}
\begin{tabular}{@{}p{0.19\linewidth}p{0.39\linewidth}p{0.33\linewidth}@{}}
\toprule
\raggedright Claim & \raggedright Evidence & \raggedright Conclusion\tabularnewline
\midrule
\raggedright Scheduler specificity & \raggedright Exact attributed/loss-budget equality in 24 simulated and 12 historical main pairs & \raggedright No attribution-specific advantage for this interface\tabularnewline[0.3em]
\raggedright V3-H1 & \raggedright $L$/IS improve; decoded-$y$ NI fails in all 4 conditions & \raggedright Complete criterion not supported\tabularnewline[0.3em]
\raggedright V3-H2/H3 & \raggedright Adjusted routing and interaction criteria fail in all 4 conditions & \raggedright Neither gain confirmed\tabularnewline[0.3em]
\raggedright V3-H4 & \raggedright Both specified residual-dominance intervals positive & \raggedright Supported dominance, not causal diagnosis\tabularnewline[0.3em]
\raggedright V4-H1/H2/H3 & \raggedright Adverse aggregate CE/MDD; return guardrails fail & \raggedright Complete risk criteria not supported\tabularnewline[0.3em]
\raggedright V4-H4 & \raggedright Positive all-condition CE contrast against JOINT-DRE & \raggedright Supported only for the specified bundle\tabularnewline
\bottomrule
\end{tabular}
\end{table}

\subsubsection{Secondary checks and historical probe}

All prespecified sensitivity cells are retained in Supplementary Material and the numerical package. V3 routing sensitivities fail the improvement criterion, reinforcing the need to assess target gains together with outcome guardrails. V4 state-removal/projection and PCGrad results are descriptive. Recovery comparisons retain their censoring information, and the ablation inventory distinguishes executed interventions from future full-system removals.

The Bar replay asks whether the v2 scheduler equality persists once a fixed policy makes execution-label selection endogenous. It does. The pooled attributed-minus-uniform daily-return contrast is $+3.90$ bps/day, descriptive 95\% interval $[-0.90,9.96]$, conditional on the selected windows. Both methods lose money on average in each year while passive exposure is positive. This is a secondary portability/selection probe. Full negative outcomes, costs and data limitations are in Supplementary Material S5.

\section{Historical Engineering Case}\label{sec:legacycase}

\subsection{Recovered report and observed configuration boundary}

A preserved project report, here designated E01, matches the metrics in the recovered backtest screenshot. Its source identifier is \path{BTC_20210101_20241230_20251208_110347}; the retained \path{backtest_report.html}, \path{backtest_equity_curve.csv} and trade CSV are bound by the local source manifest. Table~\ref{tab:legacyreported} transcribes point estimates reported by that HTML rather than recalculating performance under new definitions.

\begin{table}[htbp]
\centering\small
\caption{E01 historical report: transcribed point estimates, not independently reproduced performance estimates. Currency, annualization and complete accounting/configuration provenance remain unverified.}
\label{tab:legacyreported}
\begin{tabular}{@{}lr@{}}
\toprule
Reported measure & Reported value\tabularnewline
\midrule
Sharpe & 2.4356\tabularnewline
Win rate & 55.62\%\tabularnewline
Maximum drawdown, magnitude & 16.81\%\tabularnewline
CAGR & 136.92\%\tabularnewline
Sortino & 1.0442\tabularnewline
Profit factor & 3.3163\tabularnewline
Final equity, reported account units & 3\,135\,734.668\tabularnewline
Total change displayed in the report & $+3035.73\%$\tabularnewline
\bottomrule
\end{tabular}
\end{table}

The canonical trade file contains 347 records: 262 tagged momentum and 85 spread, with asset tags BTC (143), ETH (119) and BTC-ETH (85). The \path{action_source} field is \path{dual_engine} in 262 rows and blank in 85; \path{config_id} is blank in every row. Module participation and configuration identity remain unresolved, placing E01 in the historical-engineering evidence class defined in Section~\ref{sec:scope}.

\subsection{Unchanged equity series and source limitations}

Figure~\ref{fig:legacyequity} plots all 35,251 CSV records in their original order. The series begins on 22 December 2020 at 06:00 UTC with equity 100,000 and ends on 30 December 2024 at 00:00 UTC with equity 3,135,734.668015859. The actual dates are retained even though the run identifier denotes 2021--2024. No early observations are removed as presumed warmup, and no smoothing, resampling or synthetic extension is applied.

\begin{figure}[tb]
\centering
\includegraphics[width=\linewidth]{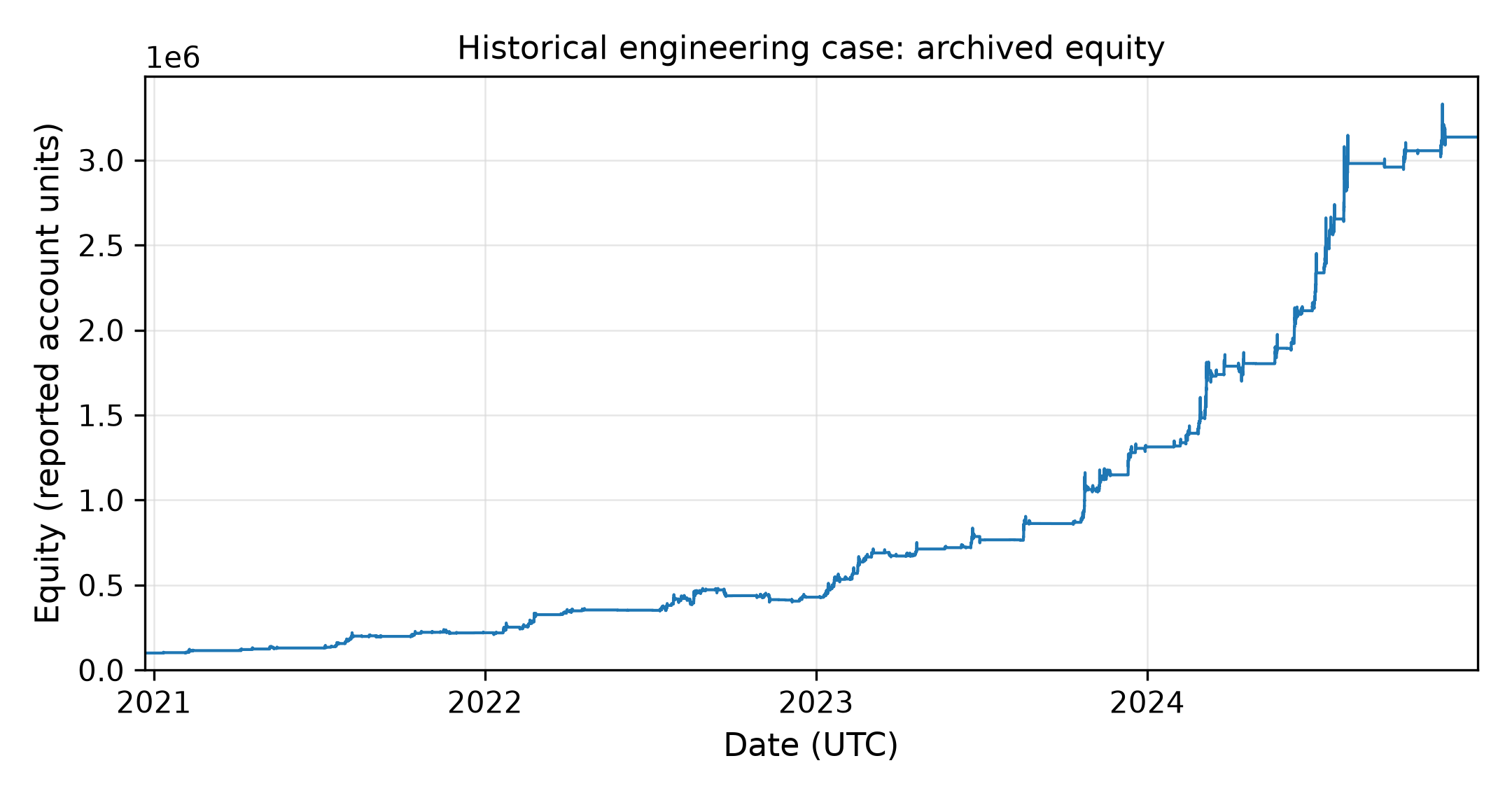}
\caption{E01 equity redrawn directly from the preserved CSV, including the December 2020 observations. Units are the account units reported in the source. The line preserves every timestamp/value pair. Configuration and accounting provenance are discussed in Section~\ref{sec:scope}.}
\label{fig:legacyequity}
\end{figure}

The preserved source set contains byte-identical equity CSVs across four reports and largely identical trade files; these records are treated as one historical outcome, not independent replications. Some source ranges are labeled as 95\% confidence intervals, but the exact generator revision and interval-construction code are not fully recovered. Accordingly, no historical interval is promoted to validated statistical uncertainty in Table~\ref{tab:legacyreported}. Full source hashes, duplicate relationships and diagnostic matrices are recorded in Supplementary Material S7.

E01 therefore provides a provenance lesson rather than a performance claim: the surviving archive describes portfolio outcomes more completely than the module-level process that generated them.

\subsection{Lessons Learned in Production Engineering}\label{sec:engineeringlessons}

E01 makes the distinction between performance reporting and learning-system traceability concrete. The surviving archive connects a report to equity and trade records, but its empty configuration identifiers leave the decision-to-model join incomplete. The retrospective evaluation bottleneck is therefore identifiable: the retained records describe portfolio outcomes more completely than the module-level process that generated them.

The practical lesson is to design the evaluation record alongside the decision pipeline. A reusable run record binds configuration and model versions to decision identifiers; retains the available inputs, module outputs and label-maturity times; and links proposed intents, risk-approved targets, actual fills, fees and portfolio valuations. These links support the three controls developed here. Input and maturity records define equal information; model and update identities enable trained-state comparisons; approved actions and realized account events support downstream guardrails. Counterfactual policy comparisons additionally require an explicit experimental or modeling design.

This connects the historical case to the controlled studies through an engineering requirement: an inspectable portfolio outcome and an identifiable learning intervention need different records. Preserving both makes it possible to distinguish an economically successful run from evidence about which learning mechanism contributed to it.

\section{Discussion}\label{sec:discussion}

\subsection{What the controls reveal}

The framework separates three questions that can otherwise be conflated: whether a record is economically interpretable, whether its targets add information and whether its learning updates improve decisions. V2 establishes an exact allocation equality under the disclosed interface. V3 shows that an invertible target coordinate changes finite-network prediction errors while an ordinary reconstruction control reproduces the attributed learner. The RAW-IS equality locates the effect in target representation. The decoded-outcome guardrail then reveals the cost of evaluating that representation only through its native loss.

V4 extends the evaluation to endogenous decisions under a fixed utility. Component supervision produces adverse risk-condition CE and drawdown outcomes relative to scalar supervision. Together, these results make equal-information controls and downstream checks central to evaluating modular credit assignment. AMATS provides the module responsibilities and record interfaces; the historical case shows why those interfaces need persistent run-level provenance. The same caution applies to market feedback more generally: statistical detectability does not by itself establish economic exploitability after transaction costs and finite-agent learning constraints \citep{Kucukoglu2026ESWA}. Equal-information controls therefore test whether an attributed signal can improve decisions in a deployed expert system, not merely whether it is predictable under a chosen loss.

\subsection{Interpreting component-wise critic supervision}\label{sec:criticinterpretation}

The V4 comparison identifies an important distinction between interpretable supervision and decision-aligned learning. Both treatments use the same four-output critic, actor and total utility. Their losses in Equation~\eqref{eq:v4critic} impose different fitting demands: component supervision penalizes every TD residual, whereas scalar supervision fits their aggregate. Component prediction contrasts that cancel in the sum can therefore influence shared features under the component objective while leaving the aggregate objective unchanged. This gives a concrete mechanism through which extra supervision can change learning without adding decision-relevant information.

A plausible hypothesis is that, under noisy or unevenly predictable component targets, fitting these contrasts draws the shared representation toward unstable correlations. Aggregate-only supervision may then act as a form of objective-level regularization by focusing fitting pressure on the total utility used by the actor. The analogy is to negative transfer from unreliable tasks in multi-task learning \citep{Lee2018AMTFL} and to interference among shared task gradients \citep{SenerKoltun2018,Yu2020PCGrad}. Here regularization refers to the supervised objective: parameter counts and the four-output architecture are matched.

The measured conclusion is the adverse policy-level supervision contrast. Noise-induced overfitting is an interpretation hypothesis, alongside gradient scaling, optimizer geometry and the endogenous state distributions produced by different policy updates. Distinguishing these mechanisms would require matched gradient and prediction diagnostics across controlled noise or target-predictability settings. The practical implication already supported by V4 is clear: the value of component supervision must be judged against the aggregate objective and prespecified risk outcomes, even when the components themselves have an intuitive economic meaning.

\subsection{Guidelines for evaluating modular expert systems}\label{sec:guidelines}

The controlled studies suggest the following reusable evaluation discipline for modular financial decision systems.

\begin{enumerate}
  \item Define the decision-time information set before constructing attributed labels.
  \item Compare every attributed learner with an ordinary learner receiving equivalent reconstructible information.
  \item Evaluate decoded economic outcomes, not only native component losses.
  \item Fix total utility when comparing component-wise and aggregate supervision.
  \item Preserve run-level provenance linking configuration, action, fill, label maturity and evaluation endpoint.
  \item Treat historical profitable runs as engineering evidence, not as proof of a specific learning mechanism.
\end{enumerate}

These guidelines are intentionally conservative: they are designed to prevent an expert-system developer from mistaking a more interpretable signal, a lower component loss or a favorable historical equity curve for evidence of decision-level improvement.

\section{Limitations and Scope}\label{sec:scope}

\paragraph{Controlled evidence.}
The established results concern the specified accounting identities, configuration-specific trained equalities, finite-learner representation trade-offs and fixed-utility policy comparisons. V3 uses exogenous orders, benchmark-relative fills and one small learner family. V4 uses two synthetic assets, current-mid execution, guaranteed emergency fills, no funding or order book, and one utility coefficient set. Its covariance/exposure preferences and CE coefficients are engineering choices. Evaluation is prequential under a fixed causal schedule; each protocol freeze preceded that study's own comparative outcomes within a sequential research program. Bootstrap intervals quantify variation within the specified seed/path experiments and use nominal levels with separate multiplicity families.

\paragraph{Historical and system evidence.}
Bar covers two assets and two selected windows,    with prior familiarity of the investigators with the archive, modeled taker fills, an hourly-open funding proxy and no L2 or margin/liquidation model. E01 is a separately preserved report and equity series. Its account currency, deposits, leverage/sizing, fee/funding/slippage coverage, fill conventions, annualization and complete module participation remain unresolved. The configuration and telemetry gap is observed in the retained artifacts; whether the original system omitted those records or they were lost during preservation remains unknown. Original reported ranges and unlinked screenshots retain their separate provenance. The full AMATS architecture, preliminary parameters and authored LLM examples define interfaces for future experiments; matched full-stack performance, live execution, operational latency and adaptation quality require additional evidence. The original integration of NLP risk gating and broader LLM intents is also unresolved.

\paragraph{Identification and transfer.}
Accounting reconstruction checks signed events using the saved marks. Its validation scope is ledger consistency; venue-price validation, counterfactual impact and causal module contribution require separate evidence. Changing a benchmark can alter attribution without changing cash, and non-fill opportunity can transmit direction changes into shortfall. Observed regime features remain distinct from analyst intervention labels. The equality results apply under their stated state and update contracts; broader markets, learner families and full-system ablations are future evaluation domains. The JOINT-DRE comparison bundles representation, auxiliary training and optimization. The MTL interpretation motivates a mechanism test, while the existing study establishes the supervision/outcome contrast.

\section{Ethics and practical risk disclosure}\label{sec:ethics}

This paper is research documentation and does not provide investment advice, a recommendation to trade, or an assurance of returns. Trading perpetual derivatives can result in the loss of posted capital. Leverage amplifies adverse price moves and can cause liquidation before a forecast horizon ends; depending on the product and account terms, losses can exceed the initial deposit \citep{CFTC2017Risk}. Investment suitability requires an assessment independent of historical backtest performance. Fees, funding, slippage, outages and changing liquidity can materially alter realized losses. Actual exit availability and prices depend on venue conditions, including when the research simulator guarantees emergency fills.

Any operational evaluation would require independently tested account limits, reconciliation, operator oversight and a way to suspend new risk, with exit handling assessed under actual liquidity and connectivity constraints. LLM confidence and fluent rationales can be incorrect; rationales document supplied evidence rather than certify the cause of a decision or its expected profitability. Untrusted market text must not acquire authority over system instructions or account controls. Prompts and adaptation corpora should exclude credentials and identifying account information, and use only data with appropriate access and redistribution permission. Candidate adapters require review for temporal leakage, selection bias and behavior under missing inputs before promotion. The illustrative prompts and preliminary parameters in Supplementary Material S9--S10 provide an interface specification for that prospective validation.

\section{Conclusion}

We presented a methodological framework for rigorous credit-assignment evaluation in modular financial RL, with AMATS as the case study. Equal-information reconstruction controls identify exact learning equivalences and separate target-representation effects from added information. Prespecified downstream guardrails expose prediction trade-offs and adverse risk outcomes that component objectives alone conceal. The controlled comparisons establish that residual allocation can coincide with ordinary loss scheduling, shortfall supervision can improve component prediction while failing decoded-outcome preservation, and component critic supervision can worsen risk-condition outcomes under fixed utility.

The AMATS architecture connects these methodological results to an explicit decision and feedback pipeline. Its historical engineering record provides complementary lessons about the provenance needed to evaluate a learning intervention. The combined contribution is a disciplined link from economic records to matched learning comparisons and decision-level evidence. Future full-system evaluations can apply the same contracts to named forecasting ensembles, language-model adaptation, SAC risk allocation and venue execution while retaining transparent hypotheses, comparable information and explicit risk criteria.

%% file: references.bib
@article{Perold1988,
  author  = {Perold, Andr{\'e} F.},
  title   = {The Implementation Shortfall: Paper versus Reality},
  journal = {The Journal of Portfolio Management},
  volume  = {14},
  number  = {3},
  pages   = {4--9},
  year    = {1988},
  doi     = {10.3905/jpm.1988.409150},
  url     = {https://doi.org/10.3905/jpm.1988.409150}
}

@article{AlmgrenChriss,
  author  = {Almgren, Robert and Chriss, Neil},
  title   = {Optimal Execution of Portfolio Transactions},
  journal = {Journal of Risk},
  volume  = {3},
  number  = {2},
  pages   = {5--39},
  year    = {2001},
  doi     = {10.21314/JOR.2001.041},
  url     = {https://doi.org/10.21314/JOR.2001.041}
}

@misc{DonierBonart,
  author        = {Donier, Marc and Bonart, J{\'e}r{\'e}mie},
  title         = {A Million Metaorder Analysis of Market Impact on the Bitcoin},
  year          = {2015},
  eprint        = {1412.4503},
  archivePrefix = {arXiv},
  url           = {https://arxiv.org/abs/1412.4503v3},
  note          = {arXiv:1412.4503v3; first posted 2014}
}

@article{DeepLOB,
  author  = {Zhang, Zihao and Zohren, Stefan and Roberts, Stephen},
  title   = {{DeepLOB}: Deep Convolutional Neural Networks for Limit Order Books},
  journal = {IEEE Transactions on Signal Processing},
  volume  = {67},
  number  = {11},
  pages   = {3001--3012},
  year    = {2019},
  doi     = {10.1109/TSP.2019.2907260},
  url     = {https://arxiv.org/abs/1808.03668v6},
  note    = {Author version: arXiv:1808.03668v6}
}

@inproceedings{SenerKoltun2018,
  author    = {Sener, Ozan and Koltun, Vladlen},
  title     = {Multi-Task Learning as Multi-Objective Optimization},
  booktitle = {Advances in Neural Information Processing Systems},
  volume    = {31},
  year      = {2018},
  url       = {https://proceedings.neurips.cc/paper/2018/hash/432aca3a1e345e339f35a30c8f65edce-Abstract.html}
}

@inproceedings{Yu2020PCGrad,
  author    = {Yu, Tianhe and Kumar, Saurabh and Gupta, Abhishek and Levine, Sergey and Hausman, Karol and Finn, Chelsea},
  title     = {Gradient Surgery for Multi-Task Learning},
  booktitle = {Advances in Neural Information Processing Systems},
  volume    = {33},
  year      = {2020},
  url       = {https://proceedings.neurips.cc/paper/2020/hash/3fe78a8acf5fda99de95303940a2420c-Abstract.html}
}

@inproceedings{HRA2017,
  author    = {van Seijen, Harm and Fatemi, Mehdi and Romoff, Joshua and Laroche, R{\'e}mi and Barnes, Trevor and Tsang, Jonathan},
  title     = {Hybrid Reward Architecture for Reinforcement Learning},
  booktitle = {Advances in Neural Information Processing Systems},
  volume    = {30},
  year      = {2017},
  url       = {https://proceedings.neurips.cc/paper_files/paper/2017/hash/1264a061d82a2edae1574b07249800d6-Abstract.html}
}

@misc{Saidd2026,
  author       = {Saidd, N. A.},
  title        = {Decomposable Reward Modeling and Realistic Environment Design for Reinforcement Learning-Based {Forex} Trading},
  year         = {2026},
  howpublished = {\url{https://www.preprints.org/manuscript/202603.1701}},
  note         = {Preprints.org, version 1, posted 23 March 2026; not peer reviewed},
  url          = {https://www.preprints.org/manuscript/202603.1701}
}

@misc{FinRL,
  author        = {Liu, Xiao-Yang and Yang, Hongyang and Chen, Qian and Zhang, Ran and Yang, Lingfei and Xiao, Bin and Wang, Christina Dan},
  title         = {{FinRL}: A Deep Reinforcement Learning Library for Automated Stock Trading in Quantitative Finance},
  year          = {2020},
  eprint        = {2011.09607},
  archivePrefix = {arXiv},
  url           = {https://arxiv.org/abs/2011.09607v1},
  note          = {arXiv:2011.09607v1}
}

@inproceedings{SAC,
  author    = {Haarnoja, Tuomas and Zhou, Aurick and Abbeel, Pieter and Levine, Sergey},
  title     = {Soft Actor-Critic: Off-Policy Maximum Entropy Deep Reinforcement Learning with a Stochastic Actor},
  booktitle = {Proceedings of the 35th International Conference on Machine Learning},
  volume    = {80},
  series    = {Proceedings of Machine Learning Research},
  publisher = {PMLR},
  pages     = {1861--1870},
  year      = {2018},
  url       = {https://proceedings.mlr.press/v80/haarnoja18b.html}
}

@misc{FinGPT,
  author        = {Yang, Hongyang and Liu, Xiao-Yang and Wang, Christina Dan},
  title         = {{FinGPT}: Open-Source Financial Large Language Models},
  year          = {2023},
  eprint        = {2306.06031},
  archivePrefix = {arXiv},
  url           = {https://arxiv.org/abs/2306.06031v1},
  note          = {arXiv:2306.06031v1}
}

@misc{LoRA2021,
  author        = {Hu, Edward J. and Shen, Yelong and Wallis, Phillip and Allen-Zhu, Zeyuan and Li, Yuanzhi and Wang, Shean and Wang, Lu and Chen, Weizhu},
  title         = {{LoRA}: Low-Rank Adaptation of Large Language Models},
  year          = {2021},
  eprint        = {2106.09685},
  archivePrefix = {arXiv},
  url           = {https://arxiv.org/abs/2106.09685v2},
  note          = {arXiv:2106.09685v2}
}

@misc{FinMem,
  author        = {Yu, Yang and Li, Hui and Chen, Zihao and Jiang, Yuansheng and Li, Yuanyuan and Zhang, Daniel and Liu, Ruan and Suchow, Jason W. and Khashanah, Khaled},
  title         = {{FinMem}: A Performance-Enhanced {LLM} Trading Agent with Layered Memory and Character Design},
  year          = {2023},
  eprint        = {2311.13743},
  archivePrefix = {arXiv},
  url           = {https://arxiv.org/abs/2311.13743v2},
  note          = {arXiv:2311.13743v2}
}

@misc{TradingGPT,
  author        = {Li, Xiao and Yu, Yang and Li, Hui and Chen, Zihao and Khashanah, Khaled},
  title         = {{TradingGPT}: Multi-Agent System with Layered Memory and Distinct Characters for Enhanced Financial Trading Performance},
  year          = {2023},
  eprint        = {2309.03736},
  archivePrefix = {arXiv},
  url           = {https://arxiv.org/abs/2309.03736v1},
  note          = {arXiv:2309.03736v1}
}

@misc{TradingAgents,
  author        = {Xiao, Yijia and Sun, Eddie and Luo, Die and Wang, William Yang},
  title         = {{TradingAgents}: Multi-Agents {LLM} Financial Trading Framework},
  year          = {2024},
  eprint        = {2412.20138},
  archivePrefix = {arXiv},
  url           = {https://arxiv.org/abs/2412.20138v1},
  note          = {arXiv:2412.20138v1}
}

@article{BatesGranger1969,
  author  = {Bates, John M. and Granger, Clive W. J.},
  title   = {The Combination of Forecasts},
  journal = {Journal of the Operational Research Society},
  volume  = {20},
  number  = {4},
  pages   = {451--468},
  year    = {1969},
  doi     = {10.1057/jors.1969.103},
  url     = {https://doi.org/10.1057/jors.1969.103}
}

@article{Hamilton1989,
  author  = {Hamilton, James D.},
  title   = {A New Approach to the Economic Analysis of Nonstationary Time Series and the Business Cycle},
  journal = {Econometrica},
  volume  = {57},
  number  = {2},
  pages   = {357--384},
  year    = {1989},
  doi     = {10.2307/1912559},
  url     = {https://doi.org/10.2307/1912559}
}

@inproceedings{nhits,
  author    = {Challu, Christoph and Olivares, Enrique G. and Oreshkin, Boris N. and Garza Ramirez, Federico and Mergenthaler Canseco, Maximiliano and Dubrawski, Artur},
  title     = {{NHITS}: Neural Hierarchical Interpolation for Time Series Forecasting},
  booktitle = {Proceedings of the AAAI Conference on Artificial Intelligence},
  volume    = {37},
  number    = {6},
  pages     = {6989--6997},
  year      = {2023},
  doi       = {10.1609/aaai.v37i6.25854},
  url       = {https://doi.org/10.1609/aaai.v37i6.25854}
}

@inproceedings{patchtst,
  author    = {Nie, Yuqi and Nguyen, Nam H. and Sinthong, Phanwadee and Kalagnanam, Jayant},
  title     = {A Time Series Is Worth 64 Words: Long-Term Forecasting with Transformers},
  booktitle = {International Conference on Learning Representations},
  year      = {2023},
  url       = {https://arxiv.org/abs/2211.14730},
  note      = {ICLR 2023; arXiv:2211.14730}
}

@misc{etsformer,
  author        = {Woo, Gerald and Liu, Chenghao and Sahoo, Doyen and Kumar, Avinash and Hoi, Steven},
  title         = {{ETSformer}: Exponential Smoothing Transformers for Time-Series Forecasting},
  year          = {2022},
  eprint        = {2202.01381},
  archivePrefix = {arXiv},
  url           = {https://arxiv.org/abs/2202.01381v2},
  note          = {arXiv:2202.01381v2}
}

@inproceedings{Guo2017,
  author    = {Guo, Chuan and Pleiss, Geoff and Sun, Yu and Weinberger, Kilian Q.},
  title     = {On Calibration of Modern Neural Networks},
  booktitle = {Proceedings of the 34th International Conference on Machine Learning},
  volume    = {70},
  series    = {Proceedings of Machine Learning Research},
  publisher = {PMLR},
  pages     = {1321--1330},
  year      = {2017},
  url       = {https://proceedings.mlr.press/v70/guo17a.html}
}

@inproceedings{TimesFM,
  author    = {Das, Abhimanyu and Kong, Weihao and Sen, Rajat and Zhou, Yuchen},
  title     = {A Decoder-Only Foundation Model for Time-Series Forecasting},
  booktitle = {Proceedings of the 41st International Conference on Machine Learning},
  volume    = {235},
  series    = {Proceedings of Machine Learning Research},
  publisher = {PMLR},
  pages     = {10148--10167},
  year      = {2024},
  url       = {https://proceedings.mlr.press/v235/das24c.html}
}

@inproceedings{iTransformer,
  author    = {Liu, Yong and Hu, Tengge and Zhang, Haoran and Wu, Jianmin and Wang, Shiyu and Ma, Lintao and Long, Mingsheng},
  title     = {{iTransformer}: Inverted Transformers Are Effective for Time Series Forecasting},
  booktitle = {International Conference on Learning Representations},
  year      = {2024},
  url       = {https://arxiv.org/abs/2310.06625},
  note      = {ICLR 2024; arXiv:2310.06625}
}

@article{TFT,
  author  = {Lim, Bryan and Ar{\i}k, S{\"o}nmez {\"O}zcan and Loeff, Nicolas and Pfister, Tilman},
  title   = {Temporal Fusion Transformers for Interpretable Multi-Horizon Time Series Forecasting},
  journal = {International Journal of Forecasting},
  volume  = {37},
  number  = {4},
  pages   = {1748--1764},
  year    = {2021},
  doi     = {10.1016/j.ijforecast.2021.03.012},
  url     = {https://doi.org/10.1016/j.ijforecast.2021.03.012}
}

@inproceedings{Lee2018AMTFL,
  author    = {Lee, Heewon and Yang, Eunho and Hwang, Sung Ju},
  title     = {Deep Asymmetric Multi-Task Feature Learning},
  booktitle = {Proceedings of the 35th International Conference on Machine Learning},
  volume    = {80},
  series    = {Proceedings of Machine Learning Research},
  publisher = {PMLR},
  pages     = {2956--2964},
  year      = {2018},
  url       = {https://proceedings.mlr.press/v80/lee18d.html}
}

@misc{Qwen25Card,
  author       = {{Qwen Team}},
  title        = {{Qwen2.5-7B-Instruct}: Official Model Card},
  year         = {2026},
  howpublished = {\url{https://huggingface.co/Qwen/Qwen2.5-7B-Instruct}},
  note         = {Accessed: 12 September 2026},
  url          = {https://huggingface.co/Qwen/Qwen2.5-7B-Instruct}
}

@misc{SB3SAC,
  author       = {{Stable-Baselines3 Contributors}},
  title        = {{SAC}: Algorithm and Parameter Documentation},
  year         = {2026},
  howpublished = {\url{https://stable-baselines3.readthedocs.io/en/master/modules/sac.html}},
  note         = {Accessed: 12 September 2026},
  url          = {https://stable-baselines3.readthedocs.io/en/master/modules/sac.html}
}

@misc{Bai2018TCN,
  author        = {Bai, Shaojie and Kolter, J. Zico and Koltun, Vladlen},
  title         = {An Empirical Evaluation of Generic Convolutional and Recurrent Networks for Sequence Modeling},
  year          = {2018},
  eprint        = {1803.01271},
  archivePrefix = {arXiv},
  url           = {https://arxiv.org/abs/1803.01271v2},
  note          = {arXiv:1803.01271v2}
}

@misc{CFTC2017Risk,
  author       = {{U.S. Commodity Futures Trading Commission}},
  title        = {Customer Advisory: Understand the Risks of Virtual Currency Trading},
  year         = {2017},
  month        = dec,
  howpublished = {\url{https://www.cftc.gov/sites/default/files/idc/groups/public/@customerprotection/documents/file/customeradvisory_urvct121517.pdf}},
  note         = {Customer advisory},
  url          = {https://www.cftc.gov/sites/default/files/idc/groups/public/@customerprotection/documents/file/customeradvisory_urvct121517.pdf}
}

@article{Han2026ESWA,
  author  = {Han, Fei and Ling, Qinghua and Lu, Sheng and Han, Henry},
  title   = {Feature Enrichment Imitative Reinforcement Learning for High-Frequency Trading},
  journal = {Expert Systems with Applications},
  volume  = {296},
  pages   = {129043},
  year    = {2026},
  doi     = {10.1016/j.eswa.2025.129043},
  url     = {https://doi.org/10.1016/j.eswa.2025.129043}
}

@article{FernandezVicente2026ESWA,
  author  = {Fern{\'a}ndez Vicente, {\'O}scar and Garc{\'i}a, Javier and Fern{\'a}ndez, Fernando},
  title   = {Optimizing Market-Making Strategies: A Multi-Objective Reinforcement Learning Approach with {P}areto Fronts},
  journal = {Expert Systems with Applications},
  volume  = {295},
  pages   = {128867},
  year    = {2026},
  doi     = {10.1016/j.eswa.2025.128867},
  url     = {https://doi.org/10.1016/j.eswa.2025.128867}
}

@article{Guo2025ESWA,
  author  = {Guo, Zhaori and Yu, Haitao and Norman, Timothy J. and Gerding, Enrico H. and Auricchio, Gennaro and Cai, Zhongqi},
  title   = {A Cautious Multi-Advisor Sequential Decision-Making Strategy without Ground Truth for Maximizing the Profits},
  journal = {Expert Systems with Applications},
  volume  = {286},
  pages   = {128004},
  year    = {2025},
  doi     = {10.1016/j.eswa.2025.128004},
  url     = {https://doi.org/10.1016/j.eswa.2025.128004}
}

@article{Kucukoglu2026ESWA,
  author  = {K{\"u}{\c{c}}{\"u}ko{\u{g}}lu, Sami},
  title   = {Beyond Random Walks: Exploring the Learnability Threshold of {AI} Agents in Algorithmic Markets},
  journal = {Expert Systems with Applications},
  volume  = {315},
  pages   = {131776},
  year    = {2026},
  doi     = {10.1016/j.eswa.2026.131776},
  url     = {https://doi.org/10.1016/j.eswa.2026.131776}
}

@article{ANON2026134021,
  author  = {Anonymous},
  title   = {Self-citation masked for double-blind review},
  journal = {Expert Systems with Applications},
  year    = {2027}
}

@inproceedings{Henderson2018DeepRL,
  author    = {Henderson, Peter and Islam, Riashat and Bachman, Philip and Pineau, Joelle and Precup, Doina and Meger, David},
  title     = {Deep Reinforcement Learning that Matters},
  booktitle = {Proceedings of the Thirty-Second AAAI Conference on Artificial Intelligence},
  year      = {2018},
  url       = {https://arxiv.org/abs/1709.06560},
  note      = {arXiv:1709.06560}
}

@article{BaileyLopezdePrado2014,
  author  = {Bailey, David H. and L{\'o}pez de Prado, Marcos},
  title   = {The Deflated Sharpe Ratio: Correcting for Selection Bias, Backtest Overfitting, and Non-Normality},
  journal = {The Journal of Portfolio Management},
  volume  = {40},
  number  = {5},
  pages   = {94--107},
  year    = {2014},
  doi     = {10.3905/jpm.2014.40.5.094},
  url     = {https://doi.org/10.3905/jpm.2014.40.5.094}
}

@article{Gu2020EmpiricalAssetPricing,
  author  = {Gu, Shihao and Kelly, Bryan and Xiu, Dacheng},
  title   = {Empirical Asset Pricing via Machine Learning},
  journal = {The Review of Financial Studies},
  volume  = {33},
  number  = {5},
  pages   = {2223--2273},
  year    = {2020},
  doi     = {10.1093/rfs/hhaa009},
  url     = {https://doi.org/10.1093/rfs/hhaa009}
}

@article{Cont2014OrderBookImpact,
  author  = {Cont, Rama and Kukanov, Aleksandr and Stoikov, Sasha},
  title   = {The Price Impact of Order Book Events},
  journal = {Journal of Financial Econometrics},
  volume  = {12},
  number  = {1},
  pages   = {47--88},
  year    = {2014},
  doi     = {10.1093/jjfinec/nbt012},
  url     = {https://doi.org/10.1093/jjfinec/nbt012}
}

@inproceedings{Wu2021Autoformer,
  author    = {Wu, Haixu and Xu, Jiehui and Wang, Jianmin and Long, Mingsheng},
  title     = {Autoformer: Decomposition Transformers with Auto-Correlation for Long-Term Series Forecasting},
  booktitle = {Advances in Neural Information Processing Systems},
  volume    = {34},
  year      = {2021},
  url       = {https://arxiv.org/abs/2106.13008},
  note      = {arXiv:2106.13008}
}

@inproceedings{Yao2023ReAct,
  author    = {Yao, Shunyu and Zhao, Jeffrey and Yu, Dian and Du, Nan and Shafran, Izhak and Narasimhan, Karthik and Cao, Yuan},
  title     = {{ReAct}: Synergizing Reasoning and Acting in Language Models},
  booktitle = {International Conference on Learning Representations},
  year      = {2023},
  url       = {https://arxiv.org/abs/2210.03629},
  note      = {ICLR 2023; arXiv:2210.03629}
}
